\newcommand{\R}{\mathbb{R}}
\newcommand{\N}{\mathbb{N}}
\DeclareMathOperator{\dom}{dom}
\DeclareMathOperator{\interior}{int}
\DeclareMathOperator{\relativeinterior}{relint}
\DeclareMathOperator{\support}{supp}
\newtheorem{proposition}{Proposition}
\newtheorem{lemma}{Lemma}
\newtheorem{corollary}{Corollary}
\newtheorem*{corollary*}{Corollary}
\newtheorem*{theorem*}{Theorem}
\theoremstyle{definition}
\theoremstyle{remark}
\newtheorem{remark}{Remark}
\icmltitlerunning{Moreau-Yosida $f$-divergences}
\begin{document}

\twocolumn[
\icmltitle{Moreau-Yosida $f$-divergences}

% It is OKAY to include author information, even for blind
% submissions: the style file will automatically remove it for you
% unless you've provided the [accepted] option to the icml2021
% package.

% List of affiliations: The first argument should be a (short)
% identifier you will use later to specify author affiliations
% Academic affiliations should list Department, University, City, Region, Country
% Industry affiliations should list Company, City, Region, Country

% You can specify symbols, otherwise they are numbered in order.
% Ideally, you should not use this facility. Affiliations will be numbered
% in order of appearance and this is the preferred way.
%\icmlsetsymbol{equal}{*}

\begin{icmlauthorlist}
\icmlauthor{D\'avid Terj\'ek}{renyi}
\end{icmlauthorlist}

\icmlaffiliation{renyi}{Alfr\'ed R\'enyi Institute of Mathematics, Budapest, Hungary}
\icmlcorrespondingauthor{D\'avid Terj\'ek}{dterjek@renyi.hu}

% You may provide any keywords that you
% find helpful for describing your paper; these are used to populate
% the "keywords" metadata in the PDF but will not be shown in the document
\icmlkeywords{Machine Learning, ICML}

\vskip 0.3in
]

\printAffiliationsAndNotice{}  % leave blank if no need to mention equal contribution
%\printAffiliationsAndNotice{\icmlEqualContribution} % otherwise use the standard text.

\begin{abstract}
Variational representations of $f$-divergences are central to many machine learning algorithms, with Lipschitz constrained variants recently gaining attention. Inspired by this, we define the Moreau-Yosida approximation of $f$-divergences with respect to the Wasserstein-$1$ metric. The corresponding variational formulas provide a generalization of a number of recent results, novel special cases of interest and a relaxation of the hard Lipschitz constraint. Additionally, we prove that the so-called tight variational representation of $f$-divergences can be to be taken over the quotient space of Lipschitz functions, and give a characterization of functions achieving the supremum in the variational representation. On the practical side, we propose an algorithm to calculate the tight convex conjugate of $f$-divergences compatible with automatic differentiation frameworks. As an application of our results, we propose the Moreau-Yosida $f$-GAN, providing an implementation of the variational formulas for the Kullback-Leibler, reverse Kullback-Leibler, $\chi^2$, reverse $\chi^2$, squared Hellinger, Jensen-Shannon, Jeffreys, triangular discrimination and total variation divergences as GANs trained on CIFAR-10, leading to competitive results and a simple solution to the problem of uniqueness of the optimal critic.
\end{abstract}

\section{Introduction}
Variational representations of divergences between probability measures are central to many machine learning algorithms, such as generative adversarial networks \cite{Nowozinetal2016}, mutual information estimation \cite{Belghazietal2018} and maximization \cite{Hjelmetal2019}, and energy-based models \cite{Arbeletal2020}. One class of such measures is the family of $f$-divergences \cite{Csiszar1963, Alietal1966, Csiszar1967}, generalizing the well-known Kullback-Leibler divergence from information theory. Another is the family of optimal transport distances \cite{Villani2008}, including the Wasserstein-$1$ metric. In general, variational representations are supremums of integral formulas taken over sets of functions, such as the Donsker-Varadhan formula \cite{Donskeretal1976} for the Kullback-Leibler divergence or the Kantorovich-Rubinstein formula \cite{Villani2008} for the Wasserstein-$1$ metric. Informally speaking, one can implement \cite{Nowozinetal2016, Arjovskyetal2017} such a formula by constructing a real-valued neural network called the critic (or discriminator) taking samples from the two probability measures as inputs, which is then trained to maximize the integral formula in order to approximate the supremum, resulting in a learned proxy to the actual divergence of said probability measures. Implementing the Kantorovich-Rubinstein formula in such a way involves restricting the Lipschitz constant of the neural network \cite{Gulrajanietal2017, Petzkaetal2018, Miyatoetal2018, Adleretal2018, Terjek2020}, which effectively stabilizes the approximation procedure. Recently, Lipschitz regularization has been incorporated \cite{Farniaetal2018, Zhouetal2019, Ozairetal2019, Songetal2020, Arbeletal2020, Birrelletal2020} into learning algorithms based on variational formulas of divergences other than the Wasserstein-$1$ metric, leading to the same empirical effect and a number of theoretical benefits.

Inspired by this, we study Lipschitz-constrained variational representations of $f$-divergences. We show that existing instances of such variants are special cases of the Moreau-Yosida approximation of $f$-divergences with respect to the Wasserstein-$1$ metric. To any divergence and pair of probability measures corresponds a set of optimal critics, which are exactly those functions which achieve the supremum in the variational representation. An optimal critic corresponding to $f$-divergences is not Lipschitz in general (not even continuous). Since any function represented by a neural network is Lipschitz, when a neural network is trained to approximate such a divergence, its "target", an optimal critic, will never be reached. We show that when the divergence is replaced by its Moreau-Yosida approximation, the corresponding optimal critics are all Lipschitz continuous with uniformly bounded Lipschitz constants, leading to a divergence which is easier to approximate in practice via neural networks. The approximation is parametrized by a pair of real numbers, one of which controls the sharpness of the approximation and the Lipschitz constant of optimal critics. The other controls the behavior of the approximation such that a special case induces a hard Lipschitz constraint in the variational representation, and other values induce only a Lipschitz penalty term. While instances of the former already appeared in the literature, the latter is novel to our paper. A special case reduces to a novel, unconstrained variational representation of the Wasserstein-$1$ metric.

In order to prove these results, we first generalize the so-called tight variational representation of $f$-divergences to be taken over the space of Lipschitz functions or its quotient space, which is the subspace of functions vanishing at an arbitrary, fixed point. The latter leads to optimal critics being unique, having practical benefits. We additionally characterize the functions achieving the supremum in the variational representation. To apply the results, we propose an algorithm compatible with automatic differentiation frameworks to calculate the tight convex conjugate of $f$-divergences which in most cases does not admit a closed form, using Newton's method in the forward pass and implicit differentiation in the backward pass.

Finally, to demonstrate the usefulness of our results, we propose the Moreau-Yosida $f$-GAN, and implement it for the task of generative modeling on CIFAR-10. The experiments show that it is beneficial to use the Moreau-Yosida approximation as a proxy for $f$-divergences, the novel cases of which often outperform the ones with the hard Lipschitz constraint. On the other hand, the representation over the quotient space leads to a simple solution for the problem of uniqueness of the optimal critic.

To summarize, our contributions are
\begin{itemize}
\item a generalization of the tight variational representation of $f$-divergences between probability measures on compact metric spaces along with a characterization of functions achieving the supremum,
\item a practical algorithm to calculate the tight convex conjugate of $f$-divergences compatible with automatic differentiation frameworks,
\item variational formulas for the Moreau-Yosida approximation of $f$-divergences with respect to the Wasserstein-$1$ metric, including a relaxation of the hard Lipshcitz constraint and an unconstrained variational representation of the Wasserstein-$1$ metric, and
\item the Moreau-Yosida $f$-GAN implementing the variational formulas for the Kullback-Leibler, reverse Kullback-Leibler, $\chi^2$, reverse $\chi^2$, squared Hellinger, Jensen-Shannon, Jeffreys, triangular discrimination and total variation divergences as GANs trained on CIFAR-10, leading to competitive performance.
\end{itemize}

\section{Preliminaries}
\subsection{Notations}
Denote the extended reals $\overline{\R}=\R\cup\{\pm\infty\}$, the nonnegative reals $\R_+$, the extended nonnegative reals $\overline{\R}_+=\R_+ \cup {\infty}$. The indicator of a set $A$ is denoted $i_A$ with $i_A(x)=0$ if $x \in A$ and $i_A(x)=\infty$ otherwise. Absolute continuity and singularity of measures is denoted $\ll$ and $\perp$, the Radon-Nikodym derivative of a measure $\mu$ with respect to a nonnegative measure $\nu$ such that $\mu \ll \nu$ by $\frac{d\mu}{d\nu}$, the support of a measure $\mu$ by $\support(\mu)$, a property to hold almost everywhere with respect to a measure $\mu$ by $\mu$-a.e. The relative interior of a subset $A$ of a vector space is denoted $\relativeinterior A$, which for subsets of $\R$ only differs from the interior for singletons whose relative interior is the singleton itself.

\subsection{Convex analysis \cite{Zalinescu2002}}
Given a topological vector space $X$, denote its topological dual by $X^*$, i.e. the set of real-valued continuous linear maps on $X$, which is a topological vector space itself, and the canonical pairing by $\langle \cdot,\cdot \rangle: X \times X^* \to \R$, which is the continuous bilinear map $((x,x^*) \to \langle x, x^* \rangle = x^*(x))$. Given a function $f: X \to \overline{\R}$, the set $\dom f = \left\{ x \in X : f(x) < \infty \right\}$ is the effective domain of $f$. A function $f$ is proper if $\dom f \neq \emptyset$ and $f(x) > -\infty$ for all $x \in X$, otherwise it is improper. For a convex function $f : X \to \overline{\R}$, its convex conjugate is $f^* : X^* \to \overline{\R}$ defined by $f^*(x^*)=\sup_{x \in X}\{\langle x, x^* \rangle - f(x) \}$, and its subdifferential at $x \in X$ is the set $\partial f(x) = \{ x^* \in X^* \ \vert \ \forall \hat{x} \in X : \langle \hat{x} - x , x^* \rangle \leq f(\hat{x}) - f(x) \}$. The biconjugate $f^{**}$ of $f$ is the conjugate of its conjugate $f^*$, i.e. $f^{**}(x) = \sup_{x^* \in X^*}\{\langle x, x^* \rangle - f^*(x^*) \}$, which is equivalent to $f$ if $f$ is proper, convex and lower semicontinuous. In that case, the supremum of the biconjugate representation is achieved precisely at elements of $\partial f(x)$. Conversely, the supremum in the conjugate representation of $f^*(x^*)$ is achieved at elements of $\partial f^*(x^*) = \{ x \in X \ \vert \ \forall \hat{x}^* \in X^* : \langle x , \hat{x}^* - x^* \rangle \leq f^*(\hat{x}^*) - f^*(x^*) \}$.

\subsection{$f$-divergences}
Given a proper, convex and lower semicontinuous function\footnote{Originally, $f$ is used in place of $\phi$ (hence the name), but we reserve the symbol $f$ for other functions.} $\phi : \R \to \overline{\R}$, a measure $\mu$ and a nonnegative measure $\nu$ on a measurable space $X$, the $f$-divergence $D_\phi(\mu \Vert \nu)$ of $\mu$ from $\nu$ is defined \cite{Csiszar1963, Alietal1966, Csiszar1967, Borweinetal1993, Csiszaretal1999} as
\begin{equation}
\int \phi \circ \frac{d\mu_c}{d\nu} d\nu + \phi'(\infty)\mu_s^+(X) - \phi'(-\infty)\mu_s^-(X).
\end{equation}
Here, $\mu_c \ll \nu, \mu_s \perp \nu$ are the absolutely continuous and singular parts of the Lebesgue decomposition of $\mu$ with respect to $\nu$, $\mu_s^+, \mu_s^- \geq 0$ is the Jordan decomposition of the singular part, and $\phi'(\pm\infty) = \lim_{x \to \pm\infty}{\frac{\phi(x)}{x}} \in \overline{\R}$. The well-known variational representation
\begin{equation}
D_\phi(\mu \Vert \nu)=\sup_{f : X \to \R}\left\{\int f d\mu - \int \phi^* \circ f d\nu\right\}
\end{equation}
can be obtained as the biconjugate of the mapping $(\mu \to D_\phi(\mu \Vert \nu))$. The so-called tight variational representation
\begin{multline}
D_\phi(\mu \Vert \nu)=\sup_{f : X \to \R}\left\{ \int f d\mu 
\right.\\\left.
- \inf_{\sup f(X) - \phi'(\infty) \leq \gamma} \left\{\int \phi_+^* \circ (f - \gamma) d\nu + \gamma \right\} \right\}
\end{multline}
with $\phi_+ = \phi + i_{\R_+}$ was obtained in \citet{Agrawaletal2020} as the biconjugate of the mapping $(\mu \to D_\phi(\mu \Vert \nu) + i_{P(X)}(\mu))$ (already considered in \citet{Rudermanetal2012}), and is valid for pairs of probability measures $\mu,\nu$.

\subsection{Wasserstein-$1$ distance \cite{Villani2008}}
Given probability measures $\mu,\nu$ on a metric space $(X,d)$, the Wasserstein-$1$ distance of $\mu$ and $\nu$ is defined as
\begin{equation}
W_1(\mu,\nu)=\inf_{\pi \in \Pi(\mu,\nu)}{\int d(x_1,x_2) d\pi(x_1,x_2)},
\end{equation}
where $\Pi(\mu,\nu)$ is the set of probability measures supported on the product space $X \times X$ with marginals $\mu$ and $\nu$. It has a well-known variational representation called the Kantorovich-Rubinstein formula
\begin{equation}
W_1(\mu,\nu)=\sup_{\Vert f \Vert_L \leq 1}{\left\{ \int f d\mu - \int f d\nu \right\}},
\end{equation}
where
\begin{equation}
\Vert f \Vert_L = \sup_{x, y \in X, x \neq y}{\left\{\frac{\vert f(x) - f(y) \vert}{d(x, y)}\right\}}
\end{equation}
is the Lipschitz norm of $f$. The supremum is achieved by the so-called Kantorovich potentials $f : X \to \R$, unique $\mu,\nu$-a.e. up to an additive constant.

\subsection{Moreau-Yosida approximation}
Let $(X, d)$ be a metric space and $f: X \to \overline{\R}$ a proper function, and $0 < \lambda, \alpha \in \R$ constants. The Moreau-Yosida approximation of index $\lambda$ and order $\alpha$ of $f$ is defined \cite{Jostetal2008, Dalmaso1993} as 
\begin{equation}
f_{\lambda,\alpha}(x)=\inf_{y \in X}{\{ f(y) + \lambda d(x, y)^\alpha \}}.
\end{equation}
It holds that $\overline{f}(x) = \sup_{\lambda>0}{f_{\lambda,\alpha}(x)} = \lim_{\lambda \to \infty}{f_{\lambda,\alpha}(x)}$, where $\overline{f}$ is the greatest lower semicontinuous function with $\overline{f} \leq f$.

\section{Lipschitz representation of $f$-divergences}
In this work, we consider the set $P(X)$ of probability measures on a compact metric space $(X,d)$, which is itself a compact metric space with the metric $W_1$, metrizing the weak convergence of measures. We prove that the tight variational representation of $D_\phi$ from \citet{Agrawaletal2020} can be generalized in the sense that the supremum can be taken over the set $Lip(X,x_0)$ of Lipschitz continuous functions on $X$ that vanish at an arbitrary base point $x_0 \in X$. This is a strictly smaller set than the set of bounded and measurable functions over which the supremum was taken originally. To apply convex analytic techniques, we consider the duality between vector spaces of measures and Lipschitz functions. This aspect is detailed in Appendix~\ref{appendix_background}. An important property of the choice of vector spaces is that the topology on the space of measures generalizes the usual weak convergence of probability measures \citep{Hanin1999}. Proofs and more precise statements of our propositions can be found in Appendix~\ref{appendix_proofs}.
\begin{proposition}
Given probability measures $\mu, \nu \in P(X)$ and a proper, convex and lower semicontinuous function  $\phi : \R \to \overline{\R}$ strictly convex at $1$ with $\phi(1)=0$ and $1 \in \relativeinterior \dom \phi$, the $f$-divergence $D_\phi$ has the equivalent variational representation
\begin{multline} \label{d_phi_var_rep}
D_\phi(\mu\Vert\nu)=
\sup_{f \in Lip(X)}\left\{ \int f d\mu - D_\phi^*(f\Vert\nu) \right\}\\=
\sup_{f \in Lip(X,x_0)}\left\{ \int f d\mu - D_\phi^*(f\Vert\nu) \right\},
\end{multline}
with the tight convex conjugate $D_\phi^*(\cdot\Vert\nu): Lip(X) \to \R$ being
\begin{multline} \label{d_phi_conj}
D_\phi^*(f\Vert\nu)= \sup_{\mu \in P(X)}\left\{ \int f d\mu - D_\phi(\mu\Vert\nu) \right\}\\=
\min_{\sup f(X) - \phi'(\infty) \leq \gamma} \left\{\int \phi_+^* \circ (f - \gamma) d\nu + \gamma \right\}.
\end{multline}
\end{proposition}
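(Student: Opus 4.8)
The plan is to realize both identities in the proposition as a single instance of Fenchel--Moreau biconjugation, carried out in the duality between measures and Lipschitz functions described in Appendix~\ref{appendix_background}. Fixing $\nu$, I would study the functional $\Phi(\mu) = D_\phi(\mu\Vert\nu) + i_{P(X)}(\mu)$ together with the pairing $\langle \mu, f\rangle = \int f\,d\mu$. By construction its conjugate is $\Phi^*(f) = \sup_{\mu \in P(X)}\{\int f\,d\mu - D_\phi(\mu\Vert\nu)\}$, which is the first line of \eqref{d_phi_conj}. The second line is exactly the explicit conjugate computed by \citet{Agrawaletal2020}; since every $f \in Lip(X)$ is bounded and measurable on the compact space $X$, and the supremum defining $\Phi^*(f)$ ranges over the same set $P(X)$ regardless of which pairing we use, their formula gives the value $D_\phi^*(f\Vert\nu)$ verbatim, with no new computation. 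Finiteness of $D_\phi^*(\cdot\Vert\nu)$ as a real-valued map then follows from the explicit formula: the lower bound $\int f\,d\nu$ (attained at $\mu=\nu$) keeps it above $-\infty$, and the assumption $1 \in \relativeinterior \dom \phi$, which controls the growth of $\phi_+^*$, keeps it below $+\infty$.

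The core step is to verify that $\Phi$ is proper, convex, and lower semicontinuous for the weak topology that $Lip(X)$ induces on measures, so that Fenchel--Moreau yields $\Phi^{**} = \Phi$ and hence, reading off $\Phi^{**}(\mu) = \sup_{f \in Lip(X)}\{\int f\,d\mu - D_\phi^*(f\Vert\nu)\}$ for $\mu \in P(X)$, the first line of \eqref{d_phi_var_rep}. Properness is immediate from $\Phi(\nu) = D_\phi(\nu\Vert\nu) = \phi(1) = 0$ together with $D_\phi \geq 0$; convexity from joint convexity of $D_\phi$ in its first argument and convexity of $i_{P(X)}$. Lower semicontinuity is the main obstacle: I would invoke compactness of $X$, weak closedness of $P(X)$, and the fact \citep{Hanin1999} that the topology induced by the $Lip(X)$ pairing coincides with weak convergence of probability measures, under which $f$-divergences are lower semicontinuous. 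The delicate point is precisely that the Lipschitz pairing separates points of the measure space and metrizes this correct topology rather than something strictly coarser, which is exactly what the choice of dual pair in the appendix is designed to guarantee; establishing this self-contained duality is what licenses taking the supremum over $Lip(X)$ directly, without any density or approximation argument relating it to bounded measurable functions.

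It remains to pass to the quotient space and to upgrade the infimum to a minimum. For the quotient, I would use that both $\mu$ and $\nu$ are probability measures: every competitor $\mu'$ in the supremum defining $D_\phi^*$ has unit mass, so $D_\phi^*(f+c\Vert\nu) = D_\phi^*(f\Vert\nu) + c$, which cancels the extra $c$ coming from $\int(f+c)\,d\mu$. The objective in \eqref{d_phi_var_rep} is therefore invariant under adding constants to $f$, so the supremum is unchanged when restricted to representatives with $f(x_0)=0$, yielding the second equality in \eqref{d_phi_var_rep}. Finally, in \eqref{d_phi_conj} the feasible set is the closed half-line $\gamma \geq \sup f(X) - \phi'(\infty)$, the map $\gamma \mapsto \int \phi_+^* \circ (f-\gamma)\,d\nu + \gamma$ is convex and lower semicontinuous, and as $\gamma \to \infty$ the integral term stays bounded below while the $+\gamma$ term diverges; a standard coercivity argument then shows the infimum is attained, justifying the $\min$.
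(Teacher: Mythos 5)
Your proposal is correct in its overall architecture and takes a genuinely different route from the paper's on the two substantive difficulties. For lower semicontinuity, the paper does not invoke the classical weak lower semicontinuity of $f$-divergences: it proves $I_{\phi,\nu}^{**}\geq I_{\phi,\nu}$ by hand, constructing approximating sequences via \citet[Theorem~2.7]{Borweinetal1993} on $\support(\nu)$, the regularity of measures, the Tietze extension theorem, and density of $Lip(X)$ in $C(X)$. You instead restrict everything to the weakly compact set $P(X)$, identify the induced topology with weak convergence via \citet{Hanin1999}, and cite classical lower semicontinuity there; sublevel sets of $\Phi$ are then weakly compact, hence closed in $(\mathcal{M}(X),\sigma(\mathcal{M}(X),Lip(X)))$, which is all that Fenchel--Moreau needs. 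This is shorter and sound, with one caveat worth recording: the classical statements are usually for $\phi\geq 0$, so you should note that replacing $\phi$ by $\phi - c(\cdot-1)$ with $c\in\partial\phi(1)$ (nonempty by the relint hypothesis) leaves $D_\phi$ unchanged on pairs of probability measures. For the quotient space, the paper builds a second duality $(\mathcal{M}(X,0),Lip(X,x_0))$ by translating with a fixed unit-mass $\omega$ and redoes the biconjugation there; your constant-additivity argument makes that machinery unnecessary for this proposition and matches the invariance remark the paper itself makes later in the Moreau--Yosida section.

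The genuine gap is in your last step, the upgrade of the infimum over $\gamma$ to a minimum. Your premise that ``the integral term stays bounded below as $\gamma\to\infty$'' is false in general: since $\phi_+$ is proper, convex and lower semicontinuous, $\inf_s\phi_+^*(s)=-\phi_+(0)$, which equals $-\infty$ whenever $\lim_{t\downarrow 0}\phi(t)=\infty$. This occurs for the reverse Kullback--Leibler divergence ($\phi_+^*(s)=-\log(1-s)$) and the reverse $\chi^2$ divergence ($\phi_+^*(s)=2-2\sqrt{1-s}$), both covered by the proposition and implemented in the paper; for these, $\int\phi_+^*\circ(f-\gamma)\,d\nu\to-\infty$ as $\gamma\to\infty$. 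The conclusion survives because the decay is sublinear, but proving that requires $1\in\relativeinterior\dom\phi$, which your write-up credits only with finiteness of the conjugate. A clean repair: if $\dom\phi_+=\{1\}$ then $\phi_+^*(s)=s$ and the objective is constant in $\gamma$; otherwise the relint hypothesis yields some $t_1\in\dom\phi_+$ with $0\leq t_1<1$, and the Young--Fenchel inequality $\phi_+^*(s)\geq t_1 s-\phi_+(t_1)$ gives
\begin{equation*}
\int \phi_+^*\circ(f-\gamma)\,d\nu + \gamma \;\geq\; (1-t_1)\,\gamma + t_1\int f\,d\nu - \phi_+(t_1)\;\longrightarrow\;\infty ,
\end{equation*}
so the convex, lower semicontinuous objective is coercive on the closed feasible ray and the minimum is attained. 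Alternatively, attainment comes for free the way the paper gets it, from \citet[Theorem~2.8.3(viii)]{Zalinescu2002} applied to $I_{\phi_+,\nu}+i_{\{1\}}(\langle\cdot,1\rangle)$, whose qualification condition is precisely where $1\in\relativeinterior\dom\phi$ enters the paper's proof.
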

The conjugate $D_\phi^*(\cdot\Vert\nu)$ is a topical function \cite{Mohebi2005}, meaning that $D_\phi^*(f+C\Vert\nu)=D_\phi^*(f\Vert\nu)+C$ and $D_\phi^*(f_1\Vert\nu) \geq D_\phi^*(f_2\Vert\nu)$ both hold for $\forall C \in \R$ and $f_1 \geq f_2$. Based on the constant additivity property, the substitution $D_\phi^*(f\Vert\nu)=\int f d\nu + D_\phi^*\left(f- \int f d\nu\Vert\nu\right)$ leads to
\[
\sup_{f \in Lip(X)}\left\{ \int f d\mu - \int f d\nu - D_\phi^*\left(f- \int f d\nu\Vert\nu\right) \right\},
\]
reinterpreting the variational representation of $D_\phi(\mu\Vert\nu)$ as a penalized variant of maximum mean deviation. A closed form expression for $D_\phi^*(\cdot\Vert\nu)$ is available for the Kullback-Leibler divergence with $D_{KL}^*(f\Vert\nu)=\log \int e^f d\nu$.

We call functions $f_*$ for which $D_\phi(\mu\Vert\nu)=\int f_* d\mu - D_\phi^*(f_*\Vert\nu)$ holds, i.e. $f_* \in \partial D_\phi(\mu\Vert\nu)$, \emph{Csisz\'ar potentials} of $\mu,\nu$. This is in analogy with Kantorovich potentials, which are similarly unique $\mu,\nu$-a.e. up to an additive constant. In the second variational representation in \eqref{d_phi_var_rep}, the additive constant is unique since $f(x_0)=0$ must hold. The following result is built on \citet[Theorem~2.10]{Borweinetal1993}.

\begin{proposition} \label{prop_csiszar_potential}
Given probability measures $\mu, \nu \in P(X)$, a function $f_* \in Lip(X)$ is a Csisz\'ar potential of $\mu,\nu$, i.e. $D_\phi(\mu\Vert\nu)=\int f_* d\mu - D_\phi^*(f_*\Vert\nu)$, if and only if there exists $C \in \R$ such that the conditions
\begin{equation}
\sup f_*(X)+C \leq \phi'(\infty),
\end{equation}
\begin{equation}
\frac{d\mu_c}{d\nu}(x) \in \partial \phi_+^*(f_*(x)+C) \ \nu\text{-a.e.}
\end{equation}
and
\begin{equation}
\support(\mu_s) \subset \{ x \in X : f_*(x)+C = \phi'(\infty) \}
\end{equation}
hold. Such $f_*$ are unique $\mu,\nu$-a.e. up to an additive constant.
\end{proposition}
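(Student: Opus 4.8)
The plan is to read the Csisz\'ar-potential property through Fenchel--Young duality and then split the resulting scalar equality into pointwise conditions. By the biconjugacy recalled in the preliminaries, $f_*\in\partial D_\phi(\mu\Vert\nu)$ is equivalent to the Fenchel--Young equality $D_\phi(\mu\Vert\nu)+D_\phi^*(f_*\Vert\nu)=\int f_* d\mu$, so the whole statement amounts to converting this single identity into the three displayed conditions and back. First I would fix a minimizer $\gamma_*$ attaining the minimum in \eqref{d_phi_conj} (its existence and feasibility being part of the preceding proposition), set $C=-\gamma_*$ and $h=f_*+C$, and expand $D_\phi(\mu\Vert\nu)$ via its integral definition using the Lebesgue decomposition $\mu=\mu_c+\mu_s$ with $r=\frac{d\mu_c}{d\nu}$ and $\mu_s\geq0$ (so $\mu_s^-=0$ since $\mu\geq0$), and $D_\phi^*(f_*\Vert\nu)$ via $\int\phi_+^*\circ h\, d\nu+\gamma_*$. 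Because $\mu_c(X)+\mu_s(X)=1$, the $\gamma_*$ terms cancel and the equality collapses to
\[
\int\bigl(\phi_+(r)+\phi_+^*(h)-rh\bigr)d\nu+\int\bigl(\phi'(\infty)-h\bigr)d\mu_s=0 .
\]

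The key observation is that both integrands are pointwise nonnegative. The first is nonnegative by the Fenchel--Young inequality for the conjugate pair $(\phi_+,\phi_+^*)$ (using $r\geq0$, so $\phi(r)=\phi_+(r)$), and it vanishes exactly when $r\in\partial\phi_+^*(h)$; the second is nonnegative because feasibility of $\gamma_*$ forces $h=f_*+C\leq\phi'(\infty)$ while $\mu_s\geq0$. Hence the vanishing of their sum forces each integrand to vanish a.e., which is precisely the subdifferential condition $r\in\partial\phi_+^*(h)$ $\nu$-a.e.\ and the relation $h=\phi'(\infty)$ $\mu_s$-a.e.; the latter upgrades to the closed-support inclusion $\support(\mu_s)\subset\{h=\phi'(\infty)\}$ because $f_*$ is continuous, and the majorization condition $\sup f_*(X)+C\leq\phi'(\infty)$ is just feasibility of $\gamma_*$. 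For the converse I would run the same computation backwards: given the three conditions with $C=-\gamma_*$, feasibility makes $\gamma_*$ admissible in \eqref{d_phi_conj}, so $D_\phi^*(f_*\Vert\nu)\leq\int\phi_+^*\circ h\, d\nu+\gamma_*$, and the two pointwise equalities yield $\int f_* d\mu-\bigl(\int\phi_+^*\circ h\, d\nu+\gamma_*\bigr)=D_\phi(\mu\Vert\nu)$; combined with the generic inequality $\int f_* d\mu-D_\phi^*(f_*\Vert\nu)\leq D_\phi(\mu\Vert\nu)$ this gives the Csisz\'ar-potential equality.

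For uniqueness up to an additive constant, suppose $f_1,f_2$ are both Csisz\'ar potentials with optimal shifts $\gamma_1,\gamma_2$. Convexity of the subdifferential makes the midpoint $\tfrac12(f_1+f_2)$ a potential as well, and comparing $D_\phi^*$ of the midpoint with the average of $D_\phi^*(f_1\Vert\nu)$ and $D_\phi^*(f_2\Vert\nu)$ through convexity of $\phi_+^*$ forces $\phi_+^*$ to be affine on the segment joining $f_1(x)-\gamma_1$ and $f_2(x)-\gamma_2$ for $\nu$-a.e.\ $x$. Wherever this segment is degenerate one reads off $f_1(x)-f_2(x)=\gamma_1-\gamma_2$, a fixed constant, and on $\support(\mu_s)$ the support condition pins both shifted potentials to $\phi'(\infty)$, giving the same constant; strict convexity at $1$ together with the structure of $\partial\phi_+^*$ is what rules out a nondegenerate segment on a set of positive $\mu+\nu$ measure, yielding $f_1-f_2=\gamma_1-\gamma_2$ $\mu,\nu$-a.e.

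The hard part will be this last uniqueness step, specifically controlling the set where $\phi_+^*$ degenerates to an affine piece (equivalently where $\partial\phi_+^*$ is multivalued and $\phi$ has a kink), since there the pointwise conditions alone do not determine $h$; this is exactly the delicate point for which I would invoke \citet[Theorem~2.10]{Borweinetal1993} and where the strict-convexity-at-$1$ and relative-interior hypotheses enter. Secondary technical points, which I expect to dispatch using the preceding proposition and standard properties of $\phi_+^*$, are the boundary behavior when $\phi'(\infty)=\infty$ (which forces $\mu_s=0$ and makes the support condition vacuous) and the integrability of $\phi_+^*\circ h$ against $\nu$, the latter being guaranteed $\nu$-a.e.\ by the subdifferential condition itself.
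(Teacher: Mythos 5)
Your proof of the equivalence ("if and only if") is correct, and it takes a genuinely different route from the paper's. The paper reduces the statement to \citet[Theorem~2.10]{Borweinetal1993}: it first records the optimality conditions for the non-tight conjugate when $\support(\nu)=X$, extends them to general $\nu$ by splitting $\mu$ into the parts inside and outside $\support(\nu)$, and then produces the constant $C$ via the subdifferential sum rule \eqref{plusunivariate_subdifferential} applied to $D_{\phi_+}(\cdot\Vert\nu)+i_{\{1\}}\left(\int 1\,d\cdot\right)$, using $\partial i_{\{1\}}(1)=\R$. You instead verify the Fenchel--Young equality directly: expanding both sides, using $\mu(X)=1$ to cancel $\gamma_*$, and writing the result as a sum of two nonnegative integrals equal to zero, whose pointwise vanishing is exactly the subdifferential and support conditions. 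This is more self-contained: it needs no case distinction on $\support(\nu)$ (the pointwise argument is insensitive to supports), and your converse direction shows as a byproduct that $\gamma_*=-C$ is automatically the minimizer in \eqref{d_phi_conj}. The price is that you re-derive the content of the cited theorem, and you owe the integrability bookkeeping you gesture at; it does go through, since $\phi_+^*(h)\geq h$ bounds the integral below, while the subdifferential condition gives $\phi_+^*(h)=rh-\phi_+(r)$ $\nu$-a.e., which is integrable because $h$ is bounded and $\phi_+$ admits an affine minorant.

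The uniqueness sentence is where your proposal has a genuine gap --- and, notably, it is also the one part of the proposition that the paper's own appendix never proves (the appendix version of the statement omits it entirely). Your mechanism --- midpoint convexity forces $\phi_+^*$ to be affine on segments, and strict convexity at $1$ then "rules out a nondegenerate segment on a set of positive $\mu+\nu$ measure" --- does not work: strict convexity of $\phi$ at the single point $1$ is perfectly compatible with $\phi_+^*$ having affine pieces. Concretely, for total variation, $\phi(x)=\vert x-1\vert$ is strictly convex at $1$ in the paper's sense, yet $\phi_+^*=\max(\cdot,-1)$ on $(-\infty,1]$ is affine on $[-1,1]$; taking $\mu=\nu$, both $f_1=0$ and $f_2=\min\{d(\cdot,x_0),1\}$ satisfy the three conditions with $C=0$, and they do not differ by a constant $\nu$-a.e. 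So the conditions pin $f_*$ down only where $\partial\phi_+(r(x))$ is a singleton, and appealing to \citet[Theorem~2.10]{Borweinetal1993} cannot close this, since that theorem supplies optimality conditions, not uniqueness. Your instinct that this is the delicate point was right; but the repair is an added hypothesis (e.g.\ Legendre type, making $\phi_+'$ single-valued, which is how the paper's Legendre-type remark recovers $f_*+C=\phi_+'(d\mu_c/d\nu)$ $\mu_c$-a.e.), not a sharper convexity argument under the stated assumptions.
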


If $\phi$ is of Legendre type \citep{Borweinetal1993}, then $\phi_+$ and $\phi_+^*$ are both continuously differentiable on $\interior \dom \phi_+$ and $\interior \dom \phi_+^*$, respectively, while ${\phi_+^*}'$ is increasing, and invertible where its value is positive with its inverse given by the strictly increasing $\phi_+'$. With these, the second condition is equivalent to
\begin{equation}
f_*(x)+C=\phi_+'\left(\frac{d\mu_c}{d\nu}(x)\right) \ \mu_c\text{-a.e.}
\end{equation}
Informally, this means that $f_*$ is the strictly increasing image of the likelihood ratio. One can then deduce from the Neyman-Pearson lemma \citep{Reidetal2011} that for the binary experiment of discriminating samples from $\mu$ and $\nu$, the statistical test $(x \to \chi_{[\tau,\infty]}(f_*(x)))$ is a most powerful test for any threshold $\tau \in \R$.

Conversely to the above proposition, given $\nu \in P(X)$ and $f \in Lip(X)$, the same conditions characterize the set of $\mu_* \in P(X)$ for which the supremum is achieved in the conjugate representation of $D_\phi^*(\cdot\Vert\nu)$, i.e. $\mu_* \in \partial D_\phi^*(f\Vert\nu)$. Denoting the optimal $\gamma$ in \eqref{d_phi_conj} by $\gamma_{\phi,\nu}(f)$, for any $\mu_* \in P(X)$ satisfying the conditions in Proposition~\ref{prop_csiszar_potential} with $C = -\gamma_{\phi,\nu}(f)$ one has $\mu_* \in \partial D_\phi^*(f\Vert\nu)$. For the Kullback-Leibler divergence, this reduces to the softmax $\mu_*=\frac{1}{\int e^f d\nu} e^f \cdot \nu$. In case $X$ is a finite set, this leads to a family of prediction functions obtained as gradients of $D_\phi^*(f\Vert\nu)$ \cite{Blondeletal2020}. 

\begin{algorithm}[tb]
   \caption{Calculate $\gamma_{\phi,\nu}(f)$ and $\nabla_f \gamma_{\phi,\nu}(f)$}
   \label{gamma_algorithm}
\begin{algorithmic}
   \STATE {\bfseries Input:} $f, \nu \in \R^n$, $\phi : \R \to \overline{\R}$, $0 < \epsilon, \tau \in \R$
   \IF{$\phi'(\infty) < \infty$}
   \STATE $\gamma = \max(f) - \phi'(\infty) + \epsilon$. 
   \ELSE
   \STATE $\gamma = \langle \nu, f \rangle$
   \ENDIF
   \REPEAT
   \STATE $s = \frac{-\langle \nu, (\phi_+^*)'(f - \gamma) \rangle + 1}{\langle \nu, (\phi_+^*)''(f - \gamma) \rangle}$
   \STATE $\gamma = \gamma - s$
   \UNTIL{$\vert s \vert < \tau$}
   \STATE $\nabla_f \gamma=\frac{\nu \odot (\phi_+^*)''(f - \gamma)}{\langle \nu, (\phi_+^*)''(f - \gamma) \rangle}$
\end{algorithmic}
\end{algorithm}

We propose an algorithm for the practical evaluation of $D_\phi^*(\cdot\Vert\nu)$ when no closed form expression is available in the case when the support of $\nu$ is finite\footnote{Such measures are dense in $(P(X),W_1)$.} and $\phi$ is such that $\phi_+^*$ is twice differentiable on $\interior \dom \phi_+^*$ with non-vanishing second derivative. Assuming that $f$ achieves its maximum on the support of $\nu$ and that $\gamma$ achieving the minimum is unique, finding $\gamma$ reduces to a finite dimensional problem, i.e. $f, \nu$ can be considered as elements of $\R^n$ with $n$ being the number of elements of the support of $\nu$. Based on Newton's method and the implicit function theorem, we propose Algorithm~\ref{gamma_algorithm} to calculate $\gamma_{\phi,\nu}(f)$ and its gradient\footnote{$\langle \cdot,\cdot \rangle$ and $\odot$ denote the standard dot product and the elementwise product in $\R^n$.}. Then, the conjugate can be calculated as 
\begin{equation} \label{conjugate_calc}
D_\phi^*(f\Vert\nu)=\langle \nu, \phi_+^* (f - \gamma_{\phi,\nu}(f)) \rangle + \gamma_{\phi,\nu}(f).
\end{equation}
The derivation of the algorithm can be found in Appendix~\ref{appendix_conjugate}, along with the corresponding functions $\phi_+, \phi_+^*$ and their derivatives for the Kullback-Leibler, reverse Kullback-Leibler, $\chi^2$, reverse $\chi^2$, squared Hellinger, Jensen-Shannon, Jeffreys and triangular discrimination divergences. For the Kullback-Leibler divergence, one has the closed form $\gamma_{\phi,\nu}(f) = \log \int e^f d\nu$.

We found that exploiting the constant additivity property by calculating the conjugate as
\begin{equation} \label{conjugate_calc_stable}
D_\phi^*(f\Vert\nu)= D_\phi^*(f-\max(f)\Vert\nu)+\max(f)
\end{equation}
is beneficial to avoid numerical instabilities. This can be seen as a generalization of the log-sum-exp trick.

\section{Moreau-Yosida approximation of $f$-divergences}
Since the mapping $(\mu \to D_\phi(\mu\Vert\nu))$ from the metric space $(P(X),W_1)$ to $\overline{\R}$ is proper and lower semicontinuous, it is an ideal candidate for Moreau-Yosida approximation, for which the infimum is always achieved since $(P(X),W_1)$ is compact if $(X,d)$ is. Given $0 < \lambda, \alpha \in \R$, the Moreau-Yosida approximation of index $\lambda$ and order $\alpha$ of $D_\phi(\cdot\Vert\nu)$ with respect to $W_1$ is therefore defined as
\begin{equation} \label{myfd_primal}
D_{\phi,\lambda,\alpha}(\mu \Vert \nu) = \min_{\xi \in P(X)}{\left\{ D_\phi(\xi \Vert \nu) + \lambda W_1(\mu,\xi)^\alpha \right\}}.
\end{equation}
This is still a divergence in the sense that $D_{\phi,\lambda,\alpha}(\mu\Vert\nu) \geq 0$ with equality if and only if $\mu=\nu$. The original divergence can be recovered as $D_\phi(\mu \Vert \nu) = \sup_{\lambda>0}{D_{\phi,\lambda,\alpha}(\mu \Vert \nu)} = \lim_{\lambda \to \infty}{D_{\phi,\lambda,\alpha}(\mu \Vert \nu)}$ for any $\alpha > 0$. Moreover, for $\alpha \geq 1$, $D_{\phi,\lambda,\alpha}(\cdot \Vert \nu)$ is Lipschitz continuous with respect to $W_1$. If $\alpha=1$, the Lipschitz constant  is exactly $\lambda$. In some cases\footnote{Since the mapping $(\xi \to \lambda W_1(\mu,\xi)^\alpha)$ is neither convex nor concave if $0 < \alpha < 1$, we could not obtain a variational representation via Fenchel-Rockafellar duality in this case.}, variational representations are available.
\begin{proposition}
Given probability measures $\mu, \nu \in P(X)$, $\lambda > 0$, $\alpha \geq 1$ and a proper, convex and lower semicontinuous function  $\phi : \R \to \overline{\R}$ strictly convex at $1$ with $\phi(1)=0$ and $1 \in \relativeinterior \dom \phi$, the divergence $D_{\phi,\lambda,\alpha}(\mu\Vert\nu)$ has the equivalent variational representation
\begin{equation} \label{myfd_alpha1}
\max_{f \in Lip(X,x_0), \Vert f \Vert_L \leq \lambda}\left\{ \int f d\mu
- D_\phi^*(f\Vert\nu) \right\}
\end{equation}
if $\alpha = 1$, and
\begin{multline} \label{myfd_alphagreaterthan1}
\max_{f \in Lip(X,x_0)}\left\{ \int f d\mu
- D_\phi^*(f\Vert\nu)
\right.\\\left.\vphantom{\int f}
- (\alpha-1) \alpha^{\frac{\alpha}{1-\alpha}} \lambda^{\frac{1}{1-\alpha}} \Vert f \Vert_L^{\frac{\alpha}{\alpha-1}} \right\}
\end{multline}
if $\alpha > 1$.
\end{proposition}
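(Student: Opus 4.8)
The plan is to recognize $D_{\phi,\lambda,\alpha}(\cdot\Vert\nu)$ as an infimal convolution and to dualize it via Fenchel--Rockafellar. Working in the duality between measures and Lipschitz functions set up in Appendix~\ref{appendix_background}, the Kantorovich--Rubinstein formula identifies $W_1(\mu,\xi)$ with the dual (Kantorovich--Rubinstein) norm $\Vert \mu - \xi \Vert$ of the zero-mass measure $\mu - \xi$. Writing $g = D_\phi(\cdot\Vert\nu)$ and $h(\eta) = \lambda \Vert \eta \Vert^\alpha$, the definition \eqref{myfd_primal} becomes the infimal convolution $D_{\phi,\lambda,\alpha}(\mu\Vert\nu) = \inf_\xi \{ g(\xi) + h(\mu - \xi) \}$. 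Since $t \mapsto t^\alpha$ is convex and nondecreasing on $\R_+$ for $\alpha \geq 1$ and $\Vert\cdot\Vert$ is a norm, $h$ is proper, convex and, being finite-valued and continuous, lower semicontinuous; this is exactly where $\alpha \geq 1$ enters, matching the footnote that the construction loses convexity for $\alpha < 1$.

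First I would apply the conjugate-of-infimal-convolution rule, which gives unconditionally that the conjugate of the infimal convolution equals $g^* + h^*$, together with the fact that $g = D_\phi(\cdot\Vert\nu)$ is already proper, convex and lower semicontinuous so that $g^{**} = g$. Taking biconjugates and invoking Fenchel--Rockafellar duality then yields
\[
D_{\phi,\lambda,\alpha}(\mu\Vert\nu) = \max_{f \in Lip(X)} \left\{ \int f d\mu - D_\phi^*(f\Vert\nu) - h^*(f) \right\},
\]
where $g^* = D_\phi^*(\cdot\Vert\nu)$ is the tight convex conjugate of \eqref{d_phi_conj}. The qualification needed for equality with attainment of the maximum is supplied by $h$ being finite and continuous on the whole space of measures, so that $\dom h$ is everything and an Attouch--Br\'ezis type condition holds; verifying this in the chosen topology, together with properness of the conjugates, is the step I expect to require the most care.

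Next I would compute $h^*$ explicitly. Using $\langle \eta, f \rangle \leq \Vert \eta \Vert \, \Vert f \Vert_L$ with the bound attained in the limit along the defining supremum of the norm, the optimization collapses to the one-dimensional problem $h^*(f) = \sup_{t \geq 0} \{ t \Vert f \Vert_L - \lambda t^\alpha \}$ in $t = \Vert \eta \Vert$. For $\alpha = 1$ this supremum is $0$ when $\Vert f \Vert_L \leq \lambda$ and $+\infty$ otherwise, i.e. $h^* = i_{\{ \Vert \cdot \Vert_L \leq \lambda \}}$, turning the penalty into the hard Lipschitz constraint of \eqref{myfd_alpha1}. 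For $\alpha > 1$ the stationarity condition $\Vert f \Vert_L = \lambda \alpha t^{\alpha-1}$ gives the maximizer $t = (\Vert f \Vert_L / (\lambda\alpha))^{\frac{1}{\alpha-1}}$, and substituting back produces $h^*(f) = (\alpha-1)\alpha^{\frac{\alpha}{1-\alpha}} \lambda^{\frac{1}{1-\alpha}} \Vert f \Vert_L^{\frac{\alpha}{\alpha-1}}$, which is exactly the penalty term of \eqref{myfd_alphagreaterthan1}.

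Finally, to pass from $Lip(X)$ to $Lip(X,x_0)$ I would use that the objective is invariant under adding a constant $C$ to $f$: since $\mu$ is a probability measure, $\int (f+C) d\mu = \int f d\mu + C$, while $D_\phi^*(f+C\Vert\nu) = D_\phi^*(f\Vert\nu) + C$ by the constant additivity of the topical function $D_\phi^*(\cdot\Vert\nu)$, and $\Vert f + C \Vert_L = \Vert f \Vert_L$; the two constants cancel, so the supremum over $Lip(X)$ equals that over the representatives vanishing at $x_0$. The main obstacle is thus not the computation of $h^*$, which is routine one-dimensional calculus, but the rigorous justification of the Fenchel--Rockafellar step in the infinite-dimensional pairing of measures with Lipschitz functions.
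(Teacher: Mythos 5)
Your proposal is correct and follows essentially the same route as the paper's proof: both reduce \eqref{myfd_primal} to Fenchel--Rockafellar duality between the tight functional $D_\phi(\cdot\Vert\nu)+i_{P(X)}$ and the term $\lambda\Vert\cdot\Vert_{KR}^\alpha$, compute the conjugate of the latter via the one-dimensional problem $\sup_{t\geq 0}\{t\Vert f \Vert_L - \lambda t^\alpha\}$ (yielding the indicator of the $\lambda$-ball for $\alpha=1$ and the power penalty for $\alpha>1$), and use finiteness and continuity of the norm term as the constraint qualification guaranteeing attainment of the maximum. The only substantive bookkeeping difference is that the paper translates by a fixed $\omega \in P(X)$ so as to work on the space $\mathcal{M}(X,0)$ of zero-mass measures, whose dual is $(Lip(X,x_0),\Vert\cdot\Vert_L)$ --- this is exactly what justifies taking $\Vert f \Vert_L$ as the dual norm in your computation of $h^*$ (in the $(\mathcal{M}(X),\Vert\cdot\Vert_H)$ pairing the dual norm would instead be $\Vert f \Vert_{\max}$), a point your write-up glosses over by starting from a maximum over $Lip(X)$ and only afterwards passing to $Lip(X,x_0)$.
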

In the limit $\alpha \to 1$, \eqref{myfd_alphagreaterthan1} converges to \eqref{myfd_alpha1} in the sense that $\lim_{\alpha \to 1}{(\alpha-1) \alpha^{\frac{\alpha}{1-\alpha}} \lambda^{\frac{1}{1-\alpha}} \Vert f \Vert_L^{\frac{\alpha}{\alpha-1}}} = 0$ if $\Vert f \Vert_L \leq \lambda$ and $\infty$ otherwise, providing an unconstrained relaxation of the hard constraint $\Vert f \Vert_L \leq \lambda$.

Choosing $\phi = i_{\{1\}}$ (so that $D_\phi(\cdot\Vert\nu)=i_{\{\nu\}}$ and $D_\phi^*(f\Vert\nu)=\int f d\nu$), one has $D_{\phi,\lambda,\alpha}(\mu\Vert\nu)=\lambda W_1(\mu,\nu)^\alpha$, leading to the following unconstrained variational representation of $W_1$.
\begin{proposition}\label{unconstrained_wasserstein}
Given $\mu,\nu \in P(X)$, $\lambda > 0$ and $\alpha>1$, $W_1(\mu,\nu)$ has the equivalent unconstrained variational representation
\begin{multline}
\left(\frac{1}{\lambda}
\max_{f \in Lip(X,x_0)}\left\{ \int f d\mu
- \int f d\nu
\right.\right.\\\left.\left.\vphantom{\int f}
- (\alpha-1) \alpha^{\frac{\alpha}{1-\alpha}} \lambda^{\frac{1}{1-\alpha}} \Vert f \Vert_L^{\frac{\alpha}{\alpha-1}} \right\}\right)^{\frac{1}{\alpha}}.
\end{multline}
The maximum is achieved at $\alpha\lambda W_1(\mu,\nu)^{\alpha-1} f_*$, with $f_*$ being a Kantorovich potential of $\mu,\nu$.
\end{proposition}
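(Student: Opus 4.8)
The plan is to derive this proposition as a specialization of the preceding one (the variational representation \eqref{myfd_alphagreaterthan1} of $D_{\phi,\lambda,\alpha}$ for $\alpha>1$) to the degenerate choice $\phi=i_{\{1\}}$. First I would verify that $\phi=i_{\{1\}}$ meets the hypotheses: it is proper (with $\dom\phi=\{1\}$), convex and lower semicontinuous as the indicator of a closed convex set, satisfies $\phi(1)=0$, and has $1\in\relativeinterior\dom\phi=\{1\}$. Strict convexity at $1$ holds essentially vacuously: any proper convex combination $1=t x_0+(1-t)x_1$ with $t\in(0,1)$ and $x_0\neq x_1$ forces at least one of $x_0,x_1$ out of $\{1\}$, so the right-hand side equals $\infty>0=\phi(1)$. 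As recorded in the text just before the statement, this $\phi$ gives $D_\phi(\cdot\Vert\nu)=i_{\{\nu\}}$ and $D_\phi^*(f\Vert\nu)=\int f\,d\nu$; consequently the Moreau-Yosida primal \eqref{myfd_primal} collapses, with infimum attained at $\xi=\nu$, to $D_{\phi,\lambda,\alpha}(\mu\Vert\nu)=\lambda W_1(\mu,\nu)^\alpha$.

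Substituting $D_\phi^*(f\Vert\nu)=\int f\,d\nu$ into \eqref{myfd_alphagreaterthan1} and equating the result with $\lambda W_1(\mu,\nu)^\alpha$ expresses $\lambda W_1(\mu,\nu)^\alpha$ as the bracketed maximum. Dividing by $\lambda$ and taking the $\alpha$-th root, both monotone on $\R_+$, reproduces the claimed formula verbatim, so the variational identity itself is immediate once the previous proposition is available.

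For the identification of the maximizer I would exploit the positive homogeneity of both terms. The penalty depends on $f$ only through $L:=\Vert f\Vert_L$, and by the Kantorovich-Rubinstein formula $\sup_{\Vert f\Vert_L\le L}\{\int f\,d\mu-\int f\,d\nu\}=L\,W_1(\mu,\nu)$, attained at $L f_*$ for a Kantorovich potential $f_*$ of unit Lipschitz norm, which may be taken in $Lip(X,x_0)$ by the translation invariance of both the objective and $\Vert\cdot\Vert_L$. Hence the optimization separates into the scalar problem $\max_{L\ge 0}\{W_1(\mu,\nu)L-c L^{\alpha/(\alpha-1)}\}$ with $c=(\alpha-1)\alpha^{\alpha/(1-\alpha)}\lambda^{1/(1-\alpha)}$. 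This objective is strictly concave in $L$; setting its derivative to zero and simplifying the exponents (using $\tfrac{\alpha}{\alpha-1}-1=\tfrac{1}{\alpha-1}$ and $\alpha^{\alpha/(1-\alpha)}\alpha^{\alpha/(\alpha-1)}=1$) yields the optimal norm $L^*=\alpha\lambda W_1(\mu,\nu)^{\alpha-1}$. The maximizer is therefore $L^* f_*=\alpha\lambda W_1(\mu,\nu)^{\alpha-1}f_*$, and back-substituting $L^*$ confirms the optimal value is $\lambda W_1(\mu,\nu)^\alpha$, consistent with the first two paragraphs.

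All computations here are routine algebra. The only genuine point requiring care is confirming that the degenerate $\phi=i_{\{1\}}$ does satisfy the regularity hypotheses of the previous proposition and that the induced $D_\phi^*$ and $D_{\phi,\lambda,\alpha}$ take the asserted closed forms; after that the variational formula is pure bookkeeping. For the maximizer, the mild subtlety is the separation-of-variables step, which lets one optimize over the \emph{direction} of $f$ (via Kantorovich-Rubinstein, giving $f_*$) and over its \emph{magnitude} $L$ independently, justified by the joint homogeneity of the linear term and the Lipschitz penalty.
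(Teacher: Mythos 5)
Your proof is correct, but your route to the variational identity is genuinely different from the paper's. The paper proves this statement in the appendix as a corollary that is logically \emph{upstream} of the duality theorem you invoke: it first computes the convex conjugate of $W_{\mu,\omega,\lambda,\alpha}(\xi)=\lambda\Vert\xi+\omega-\mu\Vert_{KR}^{\alpha}$ on $(\mathcal{M}(X,0),\Vert\cdot\Vert_{KR})$ using the conjugation rules \eqref{psiofnormconjugate} and \eqref{translateconjugate}, and then obtains the representation directly from Fenchel--Moreau biconjugation \eqref{biconjugate} applied to this proper, convex, continuous function (together with Kantorovich--Rubinstein duality \eqref{kantrubduality}); no Fenchel--Rockafellar argument is needed. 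You instead specialize the $\alpha>1$ duality formula \eqref{myfd_alphagreaterthan1} to $\phi=i_{\{1\}}$ --- exactly the reading suggested by the main text's narrative --- which is legitimate and non-circular (the appendix proof of \eqref{myfd_alphagreaterthan1} does not use this corollary), but it rests on heavier machinery and requires your verification that the degenerate $\phi=i_{\{1\}}$ satisfies all hypotheses; you do this carefully (properness, lower semicontinuity, vacuous strict convexity at $1$, $\relativeinterior\{1\}=\{1\}$, and the closed forms $D_\phi^*(f\Vert\nu)=\int f\,d\nu$, $D_{\phi,\lambda,\alpha}(\mu\Vert\nu)=\lambda W_1(\mu,\nu)^{\alpha}$). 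The paper's argument buys economy and self-containedness; yours buys unification, exhibiting the Wasserstein formula as an honest special case of the $f$-divergence family rather than a parallel computation. For the maximizer, your argument coincides with the paper's: the direction/magnitude decomposition justified by homogeneity, Kantorovich--Rubinstein to reduce to the strictly concave scalar problem in the Lipschitz norm, and the first-order condition giving $\alpha\lambda W_1(\mu,\nu)^{\alpha-1}$, with back-substitution recovering $\lambda W_1(\mu,\nu)^{\alpha}$ (the paper parametrizes $f=\beta\hat f$ with $\Vert\hat f\Vert_L=1$ and optimizes over $\beta\in\R_+$, which is the same computation).
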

As stated, subgradients of the mapping $(\mu \to \lambda W_1(\mu,\nu)^\alpha)$ are nothing but the Kantorovich potentials $f_*$ achieving the supremum in the Kantorovich-Rubinstein formula, scaled by the coefficient $\alpha\lambda W_1(\mu,\nu)^{\alpha-1}$. This allows the characterization of subgradients of the mapping $(\mu \to D_{\phi,\lambda,\alpha}(\mu \Vert \nu))$.

\begin{proposition} \label{prop_csiszar_kantorovich_potential}
Given probability measures $\mu, \nu \in P(X)$, $\lambda > 0$, $\alpha \geq 1$ and a proper, convex and lower semicontinuous function  $\phi : \R \to \overline{\R}$ strictly convex at $1$ with $\phi(1)=0$ and $1 \in \relativeinterior \dom \phi$, let $\xi_* \in P(X)$ be a probability measure achieving the minimum in \eqref{myfd_primal}, i.e. for which $D_{\phi,\lambda,\alpha}(\mu \Vert \nu)=D_\phi(\xi_*\Vert\nu) + \lambda W_1(\mu,\xi_*)^\alpha$ holds. Then there exists an $f_* \in Lip(X)$ achieving the maximum in \eqref{myfd_alpha1} if $\alpha=1$ or \eqref{myfd_alphagreaterthan1} if $\alpha>1$, which is a Csisz\'ar potential of $\xi_*,\nu$ and $\alpha\lambda W_1(\mu,\xi_*)^{\alpha-1}$ times a Kantorovich potential of $\mu,\xi_*$ at the same time.
\end{proposition}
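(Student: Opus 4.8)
The plan is to recognize $D_{\phi,\lambda,\alpha}(\cdot\Vert\nu)$ as an infimal convolution and then split a single subgradient of it at the given optimal point $\xi_*$ into the two desired potentials. Working in the duality between the vector space of measures and $Lip(X)$ developed in the appendix, and writing $\langle\eta,f\rangle=\int f\,d\eta$ for the pairing, I identify $W_1(\mu,\xi)$ with the dual (Kantorovich--Rubinstein) norm $\Vert\mu-\xi\Vert$ of the zero-mass signed measure $\mu-\xi$, so that \eqref{myfd_primal} reads $G(\mu):=D_{\phi,\lambda,\alpha}(\mu\Vert\nu)=\inf_{\xi}\{g(\xi)+k(\mu-\xi)\}$ with $g=D_\phi(\cdot\Vert\nu)$ and $k(\eta)=\lambda\Vert\eta\Vert^\alpha$. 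Both $g$ and $k$ are proper, convex and lower semicontinuous, both are nonnegative, and the infimum is attained at the given $\xi_*$ by compactness of $(P(X),W_1)$.

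First I would record that the conjugate of this infimal convolution is $G^*=g^*+k^*$, where $g^*=D_\phi^*(\cdot\Vert\nu)$ and $k^*$ is the conjugate of a power of the dual norm: the indicator of the ball $\{\Vert f\Vert_L\leq\lambda\}$ when $\alpha=1$, and the penalty $(\alpha-1)\alpha^{\frac{\alpha}{1-\alpha}}\lambda^{\frac{1}{1-\alpha}}\Vert f\Vert_L^{\frac{\alpha}{\alpha-1}}$ when $\alpha>1$. This is exactly the content of the variational representations \eqref{myfd_alpha1} and \eqref{myfd_alphagreaterthan1}, which express $G(\mu)$ as the biconjugate of $G$ evaluated at $\mu$. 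Since those representations are attained (written as maxima), a maximizer $f_*$ exists, and, being a point where the biconjugate representation is achieved, it satisfies $f_*\in\partial G(\mu)$.

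The core step decomposes this subgradient. Setting $\eta_*=\mu-\xi_*$, the Fenchel--Young equality for $f_*\in\partial G(\mu)$, combined with $G^*=g^*+k^*$ and the optimality $G(\mu)=g(\xi_*)+k(\eta_*)$ together with $\langle\mu,f_*\rangle=\langle\xi_*,f_*\rangle+\langle\eta_*,f_*\rangle$, gives
\[
\left[g(\xi_*)+g^*(f_*)-\langle\xi_*,f_*\rangle\right]+\left[k(\eta_*)+k^*(f_*)-\langle\eta_*,f_*\rangle\right]=0.
\]
Each bracket is nonnegative by the Fenchel--Young inequality, so both vanish, yielding $f_*\in\partial g(\xi_*)$ and $f_*\in\partial k(\eta_*)$ at once. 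The first inclusion is by definition the statement that $f_*$ is a Csisz\'ar potential of $\xi_*,\nu$ (Proposition~\ref{prop_csiszar_potential}). For the second, the map $\mu'\mapsto k(\mu'-\xi_*)=\lambda W_1(\mu',\xi_*)^\alpha$ is a translate of $k$, so its subgradients at $\mu$ coincide with $\partial k(\eta_*)$; by the characterization recorded after Proposition~\ref{unconstrained_wasserstein}, these are precisely $\alpha\lambda W_1(\mu,\xi_*)^{\alpha-1}$ times a Kantorovich potential of $\mu,\xi_*$. This delivers the three claimed properties of the same $f_*$ simultaneously.

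The main obstacle is justifying the conjugate-of-infimal-convolution identity $G^*=g^*+k^*$ and the biconjugate (maximizer $=$ subgradient) representation in the chosen infinite-dimensional duality, i.e.\ verifying that the Fenchel--Rockafellar machinery applies to the pairing between measures and Lipschitz functions. However, this is exactly what is already established in proving \eqref{myfd_alpha1}--\eqref{myfd_alphagreaterthan1}, so here it can be invoked rather than redone; the only additional care needed is to treat $\eta_*=\mu-\xi_*$ as a zero-mass signed measure and to confirm that $W_1(\mu,\xi_*)$ equals its dual norm, so that the $W_1^\alpha$-subgradient characterization transfers verbatim.
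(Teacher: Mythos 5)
Your proof is correct, and it is best described as the paper's argument run in the opposite direction, with a different key lemma doing the work. The paper proves this proposition inside the same two appendix propositions that establish \eqref{myfd_alpha1} and \eqref{myfd_alphagreaterthan1}: after setting up Fenchel--Rockafellar duality for $D_{\phi,\nu,\omega}+W_{\mu,\omega,\lambda,\alpha}$ on $(\mathcal{M}(X,0),\Vert\cdot\Vert_{KR})$, it invokes the attainment clause \eqref{fenchelrockafellarduality_min}, which, given the primal minimizer $\xi_*$, directly supplies a dual element $f_*$ lying in both subdifferentials at once (hence a Csisz\'ar potential of $\xi_*,\nu$ and $\alpha\lambda W_1(\mu,\xi_*)^{\alpha-1}$ times a Kantorovich potential of $\mu,\xi_*$), and only afterwards checks that this $f_*$ attains the maximum by adding the two resulting Fenchel--Young equalities. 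You start at the other end: you take an arbitrary maximizer $f_*$ of \eqref{myfd_alpha1} or \eqref{myfd_alphagreaterthan1} --- legitimate, since those representations and their attainment are established independently of the present claim --- and split its Fenchel--Young equality into the two defects at $\xi_*$ and at $\eta_*=\mu-\xi_*$, which are nonnegative and sum to zero, hence both vanish. The ingredients (the conjugate formulas, the identification of $W_1$ with the $\Vert\cdot\Vert_{KR}$-distance via \eqref{kantrubduality}, the subgradient characterization following Proposition~\ref{unconstrained_wasserstein}) are shared, but the decomposition mechanism is genuinely different: elementary defect-splitting of an exact infimal convolution in place of the cited Fenchel--Rockafellar attainment criterion, which is essentially that criterion's proof specialized to this situation. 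Your version buys a slightly stronger statement --- \emph{every} maximizer, paired with \emph{every} minimizer $\xi_*$, has both potential properties, rather than bare existence --- and it avoids re-entering the $\omega$-translation bookkeeping; the paper's version gets existence of $f_*$ for free from a cited theorem and keeps the whole proposition inside a single application of duality. The details you defer (finiteness of all six terms so the rearrangement is licit, and the $\alpha=1$ case where $k^*$ is the indicator of $\{\Vert f\Vert_L\le\lambda\}$ rather than a power of the Lipschitz norm) are immediate from $0\le D_{\phi,\lambda,\alpha}(\mu\Vert\nu)\le\lambda W_1(\mu,\nu)^\alpha<\infty$ and from running the same splitting with the indicator, so they are not gaps.
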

These imply that for any $\tau \in \R$, the mapping $(x \to \chi_{[\tau,\infty]}(f_*(x)))$ is a most powerful test for discriminating samples from $\xi_*$ and $\nu$, and that $\Vert f_* \Vert_L = \alpha\lambda W_1(\mu,\xi_*)^{\alpha-1}$. Informally, since $\xi_*$ is close to $\mu$ in $W_1$, the above mapping can be seen as a Lipschitz regularized version of a most powerful test for discriminating $\mu$ and $\nu$.

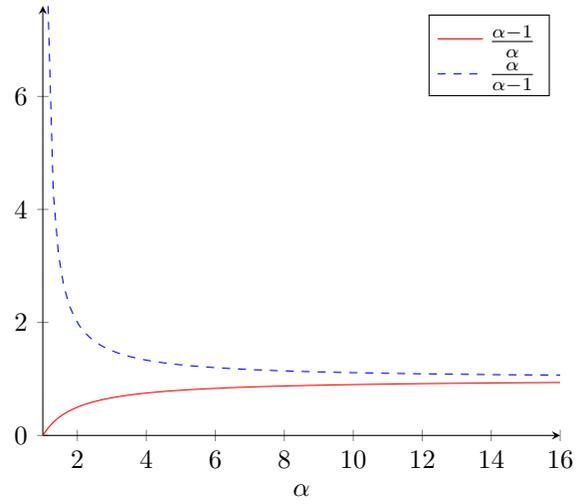
\begin{figure}[ht]
%\vskip 0.2in
\begin{center}
\resizebox{.95\linewidth}{!}{
\begin{tikzpicture}
\begin{axis}[
    axis lines = left,
    xlabel = $\alpha$,
]
\addplot [
    domain=1:16, 
    samples=100, 
    color=red,
]
{(x-1)/x};
\addlegendentry{$\frac{\alpha-1}{\alpha}$}
\addplot [
    domain=1:16, 
    samples=100, 
    color=blue,
    dashed
]
{x/(x-1)};
\addlegendentry{$\frac{\alpha}{\alpha-1}$}

\end{axis}
\end{tikzpicture}
}
\caption{Multiplier and exponent of $\Vert f \Vert_L$}
\label{alphaplot}
\end{center}
%\vskip -0.2in
\end{figure}

Consider the reparametrization $\lambda = \frac{1}{\alpha}\beta^{-\alpha}$, so that \eqref{myfd_alphagreaterthan1} reduces to
\begin{multline} \label{myf_repar}
D_{\phi,\frac{1}{\alpha}\beta^{-\alpha},\alpha}(\mu \Vert \nu) =
\max_{f \in Lip(X,x_0)}\left\{ \int f d\mu
\right.\\\left.\vphantom{\int f}
- D_\phi^*(f\Vert\nu)
- \frac{\alpha-1}{\alpha} \left(\beta \Vert f \Vert_L\right)^{\frac{\alpha}{\alpha-1}} \right\}.
\end{multline}
A plot of the respective values of the multiplier and the exponent for $\beta=1$ and $\alpha \in [1,16]$ are visualized in Figure~\ref{alphaplot}. In the limit $\alpha \to \infty$, the multiplier and exponent both converge to $1$. On the other hand, one has $\lim_{\alpha \to 1} \frac{\alpha-1}{\alpha} \Vert f \Vert_L^{\frac{\alpha}{\alpha-1}} =0$ if $\Vert f \Vert_L \leq 1$ and $\infty$ otherwise.

An interesting special case is the limit $\alpha \to \infty$, resulting in the minimum of $D_\phi(\xi\Vert\nu)$ with $\xi \in P(X)$ ranging over the Wasserstein-$1$ ball of radius $\beta$ centered at $\mu$ as
\begin{multline} \label{myf_repar_alphainf}
\lim_{\alpha \to \infty}{D_{\phi,\frac{1}{\alpha}\beta^{-\alpha},\alpha}(\mu \Vert \nu) } = \min_{\xi \in P(X), W_1(\xi,\mu) \leq \beta}\{ D_\phi(\xi\Vert\nu) \}\\
= \max_{f \in Lip(X,x_0)}\left\{ \int f d\mu
- D_\phi^*(f\Vert\nu)
- \beta \Vert f \Vert_L \right\}.
\end{multline}
This should be contrasted with \eqref{myfd_alpha1} corresponding to the $\alpha=1$ case, which also has a hard constraint, but in the dual formula.

Since the values of the above formulas for a given $f$ are invariant for constant translations $f+C$, the supremums can equivalently be taken over $Lip(X)$ instead of $Lip(X,x_0)$ in all cases.

\section{Moreau-Yosida $f$-GAN}
We propose the Moreau-Yosida $f$-GAN (MY$f$-GAN) as an implementation of the variational formula of the Moreau-Yosida regularization of $D_\phi$ with respect to $W_1$. The function $f$ in \eqref{myfd_alpha1} or \eqref{myfd_alphagreaterthan1} is parametrized by a neural network called the critic, which is trained to maximize the formula inside the maximum, providing an approximation of the exact value of the divergence. One of the measures $\mu,\nu$ is represented by the dataset, and the other by a neural network called the generator. The generator transforms samples from a fixed noise distribution into ones resembling the data distribution, and is trained to minimize the divergence approximated by the critic.

Based on the reparametrized formula \eqref{myf_repar} with the substitution $D_\phi^*(f\Vert\nu)=\int f d\nu + D_\phi^*\left(f- \int f d\nu\Vert\nu\right)$, the two minimax games are the following. First let $\mu$ be the generated distribution and $\nu$ be the data, resulting in the forward ($\rightarrow$) formulation
\begin{multline}
\min_{\theta_g \in \R^l} \max_{\theta_f \in \R^k}
\mathbb{E}_{(\zeta_n, \nu_n) \sim (P_z,P_d)}
\langle g_{\theta_g \#}\zeta_n, f_{\theta_f} \rangle
-\langle \nu_n, f_{\theta_f} \rangle
\\-D_\phi^*(f_{\theta_f}-\langle \nu_n, f_{\theta_f} \rangle \Vert \nu_n)
\\-\frac{\alpha-1}{\alpha}(\beta \Vert f_{\theta_f} \Vert_{L,g_{\hat{\theta}_g \#}\zeta_n,\nu_n})^{\frac{\alpha}{\alpha-1}}.
\end{multline}
Now let $\mu$ be the data and $\nu$ the generated distribution, leading to the reverse ($\leftarrow$) formulation
\begin{multline}
\min_{\theta_g \in \R^l} \max_{\theta_f \in \R^k}
\mathbb{E}_{(\mu_n, \zeta_n) \sim (P_d,P_z)}
\langle \mu_n, f_{\theta_f} \rangle
-\langle g_{\theta_g \#}\zeta_n, f_{\theta_f} \rangle
\\-D_\phi^*(f_{\theta_f}-\langle g_{\hat{\theta}_g \#}\zeta_n, f_{\theta_f} \rangle \Vert g_{\hat{\theta}_g \#}\zeta_n)
\\-\frac{\alpha-1}{\alpha}(\beta \Vert f_{\theta_f} \Vert_{L,\mu_n, g_{\hat{\theta}_g \#}\zeta_n})^{\frac{\alpha}{\alpha-1}}.
\end{multline}

The notation of the minimax games is the following. The functions $f: X \times \R^k \to \R$ and $g: Z \times \R^l \to X$ are the critic and generator neural networks parametrized by weight vectors $\theta_f \in \R^k$ and $\theta_g \in \R^l$, and $f_{\theta_f}, g_{\theta_g}$ are shorthands for $f(\cdot,\theta_f), g(\cdot,\theta_g)$. The latent space is $Z=\R^m$. The sample space $X \subset \R^n$ is a compact subset of Euclidean space equipped with the restriction of the metric induced by the Euclidean norm, e.g. $X=[-1,1]^{3*32*32}$ for CIFAR-10. $P_d \in P(X)$ denotes the data distribution and $P_z \in P(Z)$ the noise distribution, e.g. a standard normal. Empirical measures (corresponding to minibatches) are denoted $\mu_n \sim P$, meaning that $\mu_n = \frac{1}{n} \sum_{i=1}^n \delta_{x_{\mu,i}}$ with $(x_{\mu,i}) \subset X$ being a realization of a sequence of $n$ independent and identical copies of the random variable corresponding to $P$. The empirical measure corresponding to the generated distribution is obtained as the pushforward $g_{\theta_g \#}\zeta_n$ of the latent empirical measure $\zeta_n$ (a minibatch of noise samples) through the generator $g_{\theta_g}$. Empirical means are denoted $\langle \mu_n, f \rangle = \frac{1}{n} \sum_{i=1}^n f(x_{\mu,i})$. The conjugate $D_\phi^*$ is calculated according to \eqref{conjugate_calc} using the stabilization trick \eqref{conjugate_calc_stable}. By $\hat{\theta}_g$ we denote a copy of $\theta_g$, meaning that $\theta_g$ is not optimized to minimize terms containing the copy, i.e. the loss function of the generator is $\pm\langle f_{\theta_f}, g_{\theta_g \#}\zeta_n \rangle$. The term $\Vert f_{\theta_f} \Vert_{L,\mu_n, \nu_n}$ denotes a possibly data-dependent estimate of $\Vert f_{\theta_f} \Vert_L$. The minimax games include the case $\lim_{\alpha \to \infty} \frac{\alpha-1}{\alpha} = \lim_{\alpha \to \infty} \frac{\alpha}{\alpha-1} = 1$.

\begin{table*}[t]
\caption{MY$f$-GAN performance on CIFAR-10}
\label{results_table}
\vskip 0.15in
\begin{center}
\begin{small}
\begin{sc}
\begin{tabular}{llcccccccc}
\toprule
 & &
\multicolumn{2}{c}{$\beta=0$} & 
\multicolumn{2}{c}{$\alpha=1.05,\beta=1$} & \multicolumn{2}{c}{$\alpha=2,\beta=1$} & 
\multicolumn{2}{c}{$\alpha=\infty,\beta=0.5\to0.2$} \\
\cmidrule(r){3-10}
$D_\phi$ & & IS & FID & IS & FID & IS & FID & IS & FID \\
\midrule
\multirow{2}{*}{Kullback-Leibler} & $\rightarrow$ & 
$7.16$ & 
$34.12$ & 
$8.26$ &
$13.22$ & 
$8.33$ & 
$14.83$ &
 &
 \\ & $\leftarrow$ & 
 & 
 & 
$8.20$ &
$13.85$ & 
$8.09$ & 
$13.42$ &
$8.20$ &
$12.51$ \\
\multirow{2}{*}{reverse Kullback-Leibler} & $\rightarrow$ & 
 & 
 & 
$8.33$ &
$\mathbf{12.97}$ & 
$8.30$ & 
$13.27$ &
 &
 \\ & $\leftarrow$ & 
 & 
 & 
$8.34$ &
$13.24$ & 
$8.17$ & 
$13.13$ &
$8.09$ &
$15.26$ \\
\multirow{2}{*}{$\chi^2$} & $\rightarrow$ & 
 & 
 & 
$8.18$ &
$14.17$ & 
$8.26$ & 
$13.36$ &
 &
 \\ & $\leftarrow$ & 
 & 
 & 
$8.37$ &
$13.36$ & 
$8.23$ & 
$12.95$ &
$8.27$ &
$13.46$ \\
\multirow{2}{*}{reverse $\chi^2$} & $\rightarrow$ & 
 & 
 & 
$\mathbf{8.47}$ &
$13.89$ & 
$8.26$ & 
$14.59$ &
 &
 \\ & $\leftarrow$ & 
 & 
 & 
$8.24$ &
$14.04$ & 
$\mathbf{8.45}$ & 
$12.28$ &
$8.11$ &
$14.17$ \\
\multirow{2}{*}{squared Hellinger} & $\rightarrow$ & 
 & 
 & 
$8.03$ &
$16.41$ & 
$8.07$ & 
$16.06$ &
 &
 \\ & $\leftarrow$ & 
 & 
 & 
$8.25$ &
$15.89$ & 
$8.25$ & 
$13.93$ &
$\mathbf{8.52}$ &
$\mathbf{12.18}$ \\
\multirow{2}{*}{Jensen-Shannon} & $\rightarrow$ & 
$\mathbf{7.51}$ & 
$\mathbf{30.17}$ & 
$8.30$ &
$14.49$ & 
$8.34$ & 
$12.71$ &
 &
 \\ & $\leftarrow$ & 
 & 
 & 
$8.04$ &
$16.04$ & 
$8.37$ & 
$\mathbf{11.57}$ &
$8.27$ &
$12.58$ \\
\multirow{2}{*}{Jeffreys} & $\rightarrow$ & 
 & 
 & 
$8.09$ &
$13.99$ & 
$8.21$ & 
$14.46$ &
 &
 \\ & $\leftarrow$ & 
 & 
 & 
$8.25$ &
$13.32$ & 
$8.34$ & 
$13.04$ &
 &
 \\
\multirow{2}{*}{triangular discrimination} & $\rightarrow$ & 
$6.45$ & 
$43.14$ & 
$8.42$ &
$13.54$ & 
$8.08$ & 
$14.68$ &
 &
 \\ & $\leftarrow$ & 
 & 
 & 
$8.15$ &
$14.28$ & 
$8.35$ & 
$12.21$ &
$8.09$ &
$15.13$ \\
\multirow{2}{*}{total variation} & $\rightarrow$ & 
$7.41$ & 
$31.09$ & 
$8.12$ &
$15.44$ & 
$8.28$ & 
$14.61$ &
 &
 \\ & $\leftarrow$ & 
 & 
 & 
$8.08$ &
$13.53$ & 
$8.12$ & 
$13.77$ &
$8.05$ &
$14.60$ \\
trivial & & 
 & 
 &
$8.07$ & 
$15.97$ & 
$8.04$ &
$14.75$ &
$6.67$ &
$36.48$ \\
\bottomrule
\end{tabular}
\end{sc}
\end{small}
\end{center}
\vskip -0.1in
\end{table*}

\emph{Lipschitz norm estimation.} Rademacher's theorem \cite{Weaver2018} states that if $\Vert f \Vert_L < \infty$ for $f : \R^n \to \R$, then $\Vert (x \to \Vert \nabla f(x) \Vert_2) \Vert_\infty = \Vert f \Vert_L$ holds, i.e. that the supremum of the function mapping $x \in \R^n$ to the Euclidean norm of the gradient of $f$ at $x$ is equal to the Lipschitz norm of $f$. Based on this and the gradient penalty of \citet{Gulrajanietal2017}, we propose for $\Vert f_{\theta_f} \Vert_{L,\mu_n, \nu_n}$ the estimator
\begin{equation}
\mathbb{E}_{\upsilon_n \sim \mathcal{U}[0,1)} \max_{x \in \support(\upsilon_n \mu_n + (1-\upsilon_n) \nu_n)} \Vert \nabla f_{\theta_f}(x) \Vert_2
\end{equation}
giving a lower bound to $\Vert f_{\theta_f} \Vert_L$. Here, $\mathcal{U}[0,1)$ is the uniform distribution on $[0,1)$ from which an empirical measure $\upsilon_n = \frac{1}{n} \sum_{i=1}^n \delta_{u_i}$ is drawn, and $u_n \mu_n + (1-u) \nu_n = \frac{1}{n} \sum_{i=1}^n \delta_{u_i x_{\mu,i} + (1-u_i) x_{\nu,i}}$ denotes the corresponding interpolation of $\mu_n$ and $\nu_n$. This clearly biased estimator leaves room for improvement. Constructing an unbiased estimator would require assuming a distribution for the random variable representing the value of the gradient norm of the critic, which we leave for future work.

\begin{figure}[ht]
%\vskip 0.2in
\begin{center}
\resizebox{.95\linewidth}{!}{
\begin{tikzpicture}
\begin{axis}[
    axis lines = left,
    xlabel = iteration,
    ymin = 0.5,
    ymax = 2.0,
]
\addplot[smooth, green, thick] table [x=Step, y=Value, col sep=comma] {csv/run-kl_reverse_alpha1_ema-tag-summaries_gradient_norms_mean.csv};
%\addlegendentry{$\alpha=1$ mean}
\addlegendentry{$\alpha=1$ (GP)}
\addplot[smooth, blue, thick] table [x=Step, y=Value, col sep=comma] {csv/run-kl_reverse_alpha1p05_ema0p9999-tag-summaries_gradient_norms_mean.csv};
%\addlegendentry{$\alpha=1.05$ mean}
\addlegendentry{$\alpha=1.05$}
\addplot[dotted, green, thick] table [x=Step, y=Value, col sep=comma] {csv/run-kl_reverse_alpha1_ema-tag-summaries_gradient_norms_max_1.csv};
%\addlegendentry{$\alpha=1$ max}
\addplot[dotted, green, thick] table [x=Step, y=Value, col sep=comma] {csv/run-kl_reverse_alpha1_ema-tag-summaries_gradient_norms_min_1.csv};
%\addlegendentry{$\alpha=1$ min}
\addplot[dotted, blue, thick] table [x=Step, y=Value, col sep=comma] {csv/run-kl_reverse_alpha1p05_ema0p9999-tag-summaries_gradient_norms_max_1.csv};
%\addlegendentry{$\alpha=1.05$ max}
\addplot[dotted, blue, thick] table [x=Step, y=Value, col sep=comma] {csv/run-kl_reverse_alpha1p05_ema0p9999-tag-summaries_gradient_norms_min_1.csv};
%\addlegendentry{$\alpha=1.05$ min}
\end{axis}
\end{tikzpicture}
}
\caption{$\Vert \nabla f(X) \Vert_2$ with relaxed Lipschitz constraint}
\label{alpha1vs1p05}
\end{center}
%\vskip -0.2in
\end{figure}
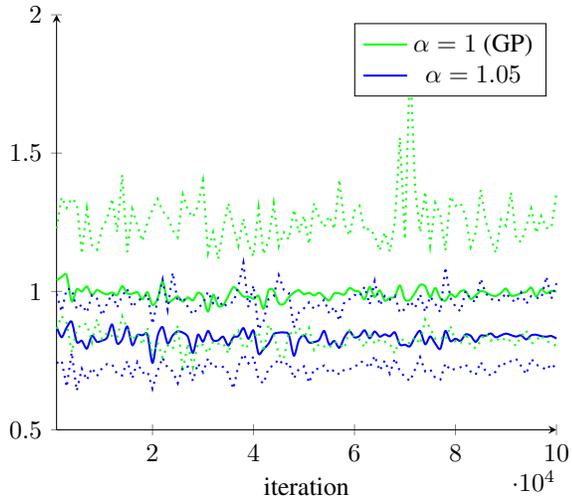

\emph{Relaxation of hard Lipschitz constraint.} We implement the hard constraint case $\alpha=1$ by replacing the last term in the minimax games with the one-sided gradient penalty \cite{Gulrajanietal2017, Petzkaetal2018} $\ell \mathbb{E}_{\upsilon_n \sim \mathcal{U}[0,1)} 
\langle \upsilon_n \mu_n + (1-\upsilon_n) \nu_n, (\max\{0, \Vert \nabla f_{\theta_f}(\cdot) \Vert_2 - \beta^{-1} \})^2 \rangle$ with the coefficient $0 < \ell \in \R$ controlling the strength of the penalty. This is a widely used method to enforce the hard constraint $\Vert f_{\theta_f} \Vert_L \leq \beta^{-1}$. We visualize the maximum, mean and minimum of minibatches of gradient norms of the critic during training in Figure~\ref{alpha1vs1p05} for $\alpha=1$ with the gradient penalty and $\alpha=1.05$ with the estimator detailed above. The $\alpha=1$ case does not enforce the hard constraint, since only the mean of the gradient norms is concentrated around $\beta^{-1}=1$, and not their maximum. The $\alpha=1.05$ case, being a relaxation of the hard constraint, empirically behaves very similarly to an ideal hard constraint implementation, in the sense that the maximum of the gradient norms is concentrated around $\beta^{-1}=1$. This is no surprise in light of Proposition~\ref{prop_csiszar_kantorovich_potential}, since $\Vert f_* \Vert_L = \alpha\lambda W_1(\mu,\xi_*)^{\alpha-1} = \beta^{-\alpha} W_1(\mu,\xi_*)^{\alpha-1} = \beta^{-(1+\epsilon)} W_1(\mu,\xi_*)^{\epsilon}$ is very close to $\beta^{-1}$ in practice for small $\epsilon$, such as $\epsilon=0.05$. We did not observe significant performance differences. This particular experiment used $\ell=10$ and $\phi$ corresponding to the Kullback-Leibler divergence, but we observed identical behavior in other hyperparameter settings as well with a range of $\alpha$ close to $1$. We argue that using the relaxation with some $\alpha=1+\epsilon$ is potentially beneficial for other applications requiring the satisfaction of a hard Lipschitz constraint.

\emph{Choice of $f$-divergence.} Quantitative results in terms of Inception Score (IS) and Fr\'echet Inception Distance (FID) can be seen in Table~\ref{results_table}. Missing values in the unregularized case ($\beta=0$) indicate divergent training, showing that regularization ($\beta>0$) not only improves performance, but leads to convergent training even in cases when it does not seem possible without regularization. The \textsc{trivial} case indicates $D_\phi(\cdot\Vert\nu)=i_{\{\nu\}}$, so that the forward and reverse formulations are identical. In this case, $D_{\phi,\frac{1}{\alpha}\beta^{-\alpha},\alpha}(\mu\Vert\nu)$ reduces to $\frac{1}{\alpha}\beta^{-\alpha}W_1(\mu\Vert\nu)^\alpha$. If $\alpha>1$, this leads to an unconstrained formulation of the Wasserstein GAN corresponding to Proposition~\ref{unconstrained_wasserstein}. The original, constrained Wasserstein GAN with gradient penalty led to an IS of $8.09$ and an FID of $13.40$ in our implementation. This is marginally better than the performance of the unconstrained variant as reported in Table~\ref{results_table}. As shown in Figure~\ref{alpha1vs1p05}, gradient penalty leads to a higher gradient norm than required by the hard constraint, which might lead to the observed marginal performance improvement. Indeed, increasing $\beta$ leads to better performance for the unconstrained variant, e.g. $\beta=0.5$ with $\alpha=2$ led to and IS of $8.14$ and an FID of $13.33$, which is in turn marginally better than the original, constrained variant. While it is hard to tell from these results which $f$-divergence is the best, it is definitely not the \textsc{trivial} one.

\begin{figure}[ht]
%\vskip 0.2in
\begin{center}
\resizebox{.95\linewidth}{!}{
\begin{tikzpicture}
\begin{axis}[
    axis lines = left,
    xlabel = iteration,
    ymin = -10.0,
    ymax = 10.0,
]
\addplot[smooth, green, thick] table [x=Step, y=Value, col sep=comma] {csv/run-kl_reverse_alpha1p05_ema_noquotient-tag-summaries_d_generated_mean.csv};
%\addlegendentry{default mean}
\addlegendentry{default}
\addplot[smooth, blue, thick] table [x=Step, y=Value, col sep=comma] {csv/run-kl_reverse_alpha1p05_ema0p9999-tag-summaries_d_generated_mean.csv};
%\addlegendentry{quotient mean}
\addlegendentry{quotient}
\addplot[dotted, green, thick] table [x=Step, y=Value, col sep=comma] {csv/run-kl_reverse_alpha1p05_ema_noquotient-tag-summaries_d_generated_max_1.csv};
%\addlegendentry{default max}
\addplot[dotted, green, thick] table [x=Step, y=Value, col sep=comma] {csv/run-kl_reverse_alpha1p05_ema_noquotient-tag-summaries_d_generated_min_1.csv};
%\addlegendentry{default min}
\addplot[dotted, blue, thick] table [x=Step, y=Value, col sep=comma] {csv/run-kl_reverse_alpha1p05_ema0p9999-tag-summaries_d_generated_max_1.csv};
%\addlegendentry{quotient max}
\addplot[dotted, blue, thick] table [x=Step, y=Value, col sep=comma] {csv/run-kl_reverse_alpha1p05_ema0p9999-tag-summaries_d_generated_min_1.csv};
%\addlegendentry{quotient min}
\end{axis}
\end{tikzpicture}
}
\caption{$f(X)$ for default and quotient critic}
\label{quotientbias}
\end{center}
%\vskip -0.2in
\end{figure}
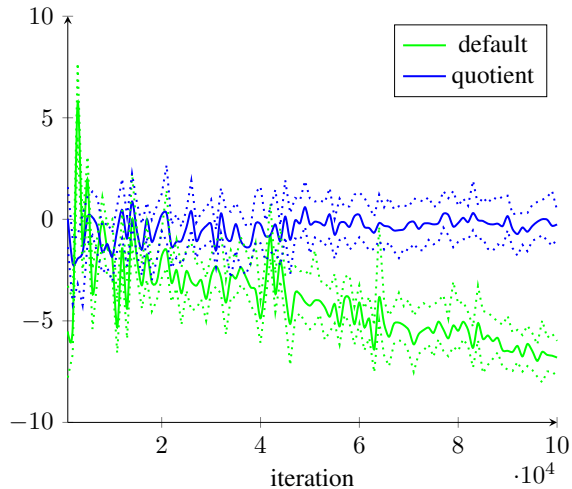

\emph{Quotient critic.} To ensure that $f_{\theta_f} \in Lip(X,x_0)$, we simply modify the forward pass of the critic to return $f_{\theta_f}(x)-f_{\theta_f}(x_0)$ instead of $f_{\theta_f}(x)$. This induces negligible computational overhead since $f_{\theta_f}(x_0)$ can be calculated with a minibatch of size $1$, with the choice of $x_0$ being arbitrary, e.g. the zero vector in our implementation. We call this the quotient critic since $Lip(X,x_0)$ is isomorphic to the quotient space $\frac{Lip(X)}{\R}$. In Figure~\ref{quotientbias} we visualize the maximum, mean and minimum of the critic output over minibatches of generated samples during training. It is clear that the quotient critic solves the drifting of the output of the critic, which was found to hurt performance in some cases \cite{Karrasetal2018, Adleretal2018}. We observed only marginal performance improvement.

\emph{Loss function of the generator.} The reason for picking the penalized mean deviation form of the variational formulas for this application is that in the reverse case, we found that using $-\langle g_{\theta_g \#}\zeta_n, f_{\theta_f} \rangle$ as the loss function of the generator leads to superior performance than using $-D_\phi^*(f_{\theta_f} \Vert g_{\theta_g \#}\zeta_n)$, which cripples performance in most cases. This suggests that gradients of the Csisz\'ar potential $f_*$ might be of greater interest than the gradient of the conjugate $D_\phi^*\left(f_*\Vert\nu\right)$. The latter is a reweighting of the former, since the gradient of the conjugate is a probability distribution, such as the softmax for the Kullback-Leibler divergence.

\emph{Optimal critic has bounded Lipschitz constant.} Notice that while the variational formula of $f$-divergences contains a supremum, the formula of their Moreau-Yosida approximations contains a maximum. This means that in the former case, even if the divergence is finite, the supremum might not be achieved by a Lipschitz function. The variational representation \eqref{d_phi_var_rep} only implies that a sequence of Lipschitz functions converges to a function achieving the supremum, but the limit is not necessarily Lipschitz continuous, in fact it might not even be continuous. On the other hand, for the Moreau-Yosida approximation, the maximum in \eqref{myfd_alpha1} or \eqref{myfd_alphagreaterthan1} is always achieved by a Lipschitz function. Since any neural network is Lipschitz continuous, we argue that a trained critic can provide a better estimate of the Moreau-Yosida approximation, since its target $f_*$ is not only a Csisz\'ar potential of $\xi_*,\nu$ but a scaled Kantorovich potential of $\mu,\xi_*$ as well, implying that it has a bounded Lipschitz constant.

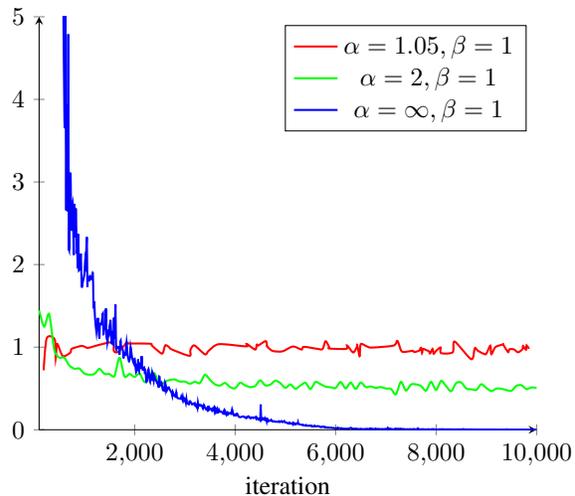
\begin{figure}[ht]
%\vskip 0.2in
\begin{center}
\resizebox{.95\linewidth}{!}{
\begin{tikzpicture}
\begin{axis}[
    axis lines = left,
    xlabel = iteration,
    ymin = 0.0,
    ymax = 5.0,
    restrict y to domain = -10:10
]
\addplot[smooth, red, thick] table [x=Step, y=Value, col sep=comma] {csv/alpha1p05_max_grad_norms.csv};
\addlegendentry{$\alpha=1.05,\beta=1$}
\addplot[smooth, green, thick] table [x=Step, y=Value, col sep=comma] {csv/alpha2_max_grad_norms.csv};
\addlegendentry{$\alpha=2,\beta=1$}
\addplot[smooth, blue, thick] table [x=Step, y=Value, col sep=comma] {csv/alphainf_max_grad_norms.csv};
\addlegendentry{$\alpha=\infty,\beta=1$}
\end{axis}
\end{tikzpicture}
}
\caption{$\Vert f_{\theta_f} \Vert_{L,\mu_n, \nu_n}$ during training}
\label{max_grad_norm}
\end{center}
%\vskip -0.2in
\end{figure}

\emph{The $\alpha=2$ and $\alpha=\infty$ cases.} Since $f_*$ is a Kantorovich potential scaled by the coefficient $\beta^{-\alpha} W_1(\mu,\xi_*)^{\alpha-1}$ and the Lipschitz norm of a Kantorovich potential is $1$, the case $\alpha>1$ can be seen as adaptive Lipschitz regularization, with $\Vert f_* \Vert_L$ decaying during training as $\mu$ and $\nu$ drift closer and $W_1(\mu,\xi_*)$ becomes smaller. We visualized $\Vert f_{\theta_f} \Vert_{L,\mu_n, \nu_n}$ in Figure~\ref{max_grad_norm} during training with $\alpha=1.05, 2, \infty$ and $\beta=1$. Ideally, the Lipschitz norm of the critic would vanish. This can be observed in the $\alpha=\infty$ case, which leads to finding a generated distribution with Wasserstein-$1$ distance of $\beta=1$ from the data distribution, accordingly to \eqref{myf_repar_alphainf}. The best FID in Table~\ref{results_table} indicates that it can be beneficial to choose $\alpha=2$ even though the Lipschitz norm does not vanish. While the case $\alpha=\infty$ (where we only consider the case $\leftarrow$) leads to low performance with high values of $\beta$ and unstable training with low values of $\beta$, we found that decaying $\beta$ e.g. from $0.5$ to $0.2$ led to the best IS as can be seen in Table~\ref{results_table}\footnote{Numerical instabilities prevented us from evaluating the Jeffreys divergence in this setting.}. The \textsc{trivial} case does not perform well in this setting, which is not surprising since the exact value of $D_{\phi,\frac{1}{\alpha}\beta^{-\alpha},\alpha}(\mu \Vert \nu)$ is $\infty$ if $\nu$ is not contained in the $W_1$ ball of radius $\beta$ centered at $\mu$, and $0$ otherwise.

Preliminary experiments showed that other values of $\alpha$ behave similarly to the ones we considered, which is why we restricted our attention to the representative values $1.05, 2$ and $\infty$. The implementation was done in TensorFlow, using the residual critic and generator architectures from \citet{Gulrajanietal2017}. Training was done for $100000$ iterations, with $5$ gradient descent step per iteration for the critic, and $1$ for the generator. Additional results, details of the experimental setup and generated images can be found in Appendix~\ref{appendix_experiments}, along with toy examples validating our approach for approximating $f$-divergences through the tight variational representations on categorical and Gaussian distributions. The original $f$-GAN losses \citep{Nowozinetal2016} were particularly unstable in our implementation. Training the critic for $1$ instead of $5$ steps per iteration led to more stability, but even in this case only the $\chi^2$ divergence made it to $100000$ iterations without numerical errors, leading to an IS of $6.49$ and an FID of $40.64$. Source code to reproduce the experiments is available at \url{https://github.com/renyi-ai/moreau-yosida-f-divergences}.

\section{Related work}

In \citet{Farniaetal2018}, $D_{\phi,1,1}$ is defined, and a non-tight variational representation is given for symmetric choices of $D_\phi$. They also prove that $D_{\phi,1,1}$ between the data and generated distributions is a continuous function of the generator parameters, and provide a dual formula for the case $\alpha = 2$ using $W_2$ instead of $W_1$. A future direction is to prove analogous results for general $\alpha,\lambda$ and $W_p$. In \citet{Birrelletal2020}, a generalization of $D_{\phi,1,1}$ is defined with arbitrary IPMs instead of $W_1$, but their assumptions on $\phi$ are more restrictive, and they explicitly define $D_\phi(\mu\Vert\nu)$ to be $\infty$ if $\mu \ll \nu$ does not hold. In \citet{Husainetal2019}, the Lipschitz constrained version of the non-tight variational representation of $D_\phi$ is shown to be a lower bound to the Wasserstein autoencoder objective. In \citet{Laschosetal2019}, it is proved that the supremum in the Donsker-Varadhan formula can equivalently be taken over Lipschitz continuous functions. In \citet{Songetal2020}, based on the non-tight representation, another generalization of $f$-GANs and WGAN is proposed, with the importance weights $r$ analogous to the gradient of $D_\phi^*(f\Vert\nu)$ in our case. Connections to density ratio estimation and sample reweighting are discussed, which apply to our case as well. In \citet{Arbeletal2020}, the Lipschitz constrained version of the Donsker-Varadhan formula is proposed as an objective function for energy-based models. For representation learning by mutual information maximization, \citet{Ozairetal2019} proposes the Lipschitz constrained version of the Donsker-Varadhan formula as a proxy for mutual information, which is shown to be empirically superior to the unconstrained formulation. In \citet{Zhouetal2019}, it is shown that Lipschitz regularization improves the performance of GANs in general other than the Wasserstein GAN. The uniqueness of the optimal critic is investigated, and formulas are proposed for which uniqueness holds. We solve the uniqueness problem in another way, by implementing the quotient critic.

To summarize, the recognition of the primal formula being the Moreau-Yosida regularization of $D_\phi$ with respect to $W_1$ and the case $\alpha \neq 1$ are novel to our paper. This includes the unconstrained variational formula for $W_1$. Regarding $f$-divergences, the tight variational representation over the quotient space $Lip(X,x_0)$ and the characterization of Csisz\'ar potentials are new as well. Additionally, we allow the same generality in terms of the choice of $\phi$ as \citet{Agrawaletal2020}. On the practical side, we proposed an algorithm to calculate the tight conjugate $D_\phi^*\left(f\Vert\nu\right)$ and its gradient. Experimentally, implementations are provided for GANs based on the tight variational representation not only of the Kullback-Leibler divergence, but the reverse Kullback-Leibler, $\chi^2$, reverse $\chi^2$, squared Hellinger, Jensen-Shannon, Jeffreys, triangular discrimination and total variation divergences as well.

\section{Conclusions}
In this paper, we studied the Moreau-Yosida regularization of $f$-divergences with respect to the Wasserstein-$1$ metric in a convex duality framework. We presented variational formulas and characterizations of optimal variables, generalizing a number of existing results and leading to novel special cases of interest, and proposed the MY$f$-GAN as an implementation of the formulas. Future directions include finding the variational formulas for Moreau-Yosida approximation with respect to all Wasserstein-$p$ metrics including the case $0 < \alpha < 1$, improving the estimation of the Lipschitz norm of the critic, making use of the fact that Csisz\'ar-Kantorovich potentials can be seen as Lipschitz-regularized statistical tests, e.g. for sample reweighting, and scaling up to higher-dimensional datasets. Additionally, the results can potentially be applied to learning algorithms other than GANs, such as representation learning by mutual information maximization, energy-based models, generalized prediction functions and density ratio estimation.

\section*{Acknowledgements}

The author was supported by the Hungarian National Excellence Grant 2018-1.2.1-NKP-00008 and by the Hungarian Ministry of Innovation and Technology NRDI Office within the framework of the Artificial Intelligence National Laboratory Program, and would like to thank the Artificial Intelligence Research Group at the Alfr\'ed R\'enyi Institute of Mathematics, especially D\'aniel Varga and Diego Gonz\'alez-S\'anchez, for their generous help.

\bibliography{myfw_icml2021}
\bibliographystyle{icml2021}

\onecolumn

\section{Appendix}

\subsection{Background} \label{appendix_background}
In order to establish the dual formulation of the Moreau-Yosida approximation of $f$-divergences, we will apply techniques from convex analysis, for which we need an appropriate pair of vector spaces that are in duality.

\subsubsection{Functional Analysis}

We recite a number of definitions and results (without proofs) from functional analysis concerning vector spaces of Lipschitz functions, taken from \citet{Cobzasetal2019}.

Let $(X, d)$ be a compact metric space, and denote the $\sigma$-algebra of its Borel subsets by $\mathcal{B}(X)$. A function $\mu : \mathcal{B}(X) \to \R$ is called a $\sigma$-additive measure if $\mu(\bigcup_{i=0}^\infty{A_i}) = \sum_{i=0}^\infty{\mu(A_i)}$ holds for every family $\{A_i : i \in \N\} \subset \mathcal{B}(X)$ of pairwise disjoint elements of $\mathcal{B}(X)$. Any such measure is of bounded variation, i.e. $\vert \mu \vert (X) < \infty$ where 
\begin{equation}
\vert \mu \vert (X) = \sup_{(A_i)_{i \in \{1, \dots, m\}} \text{ is a partition of }X, m \in \N}{\left\{\sum_{i=1}^m{\vert \mu(A_i) \vert }\right\}} 
\end{equation}
is the total variation of $\mu$. Denote by $\mathcal{M}(X)$ the set of $\sigma$-additive measures on $\mathcal{B}(X)$.

A function $f : X \to \R$ is Lipschitz continuous if there exists a number $M \in [0, \infty)$ such that $\vert f(x) - f(y) \vert \leq M d(x, y)$ for all $x, y \in X$. The Lipschitz norm of such an $f$ is defined as
\begin{equation} \label{lip_norm}
\Vert f \Vert_L = \sup_{x, y \in X, x \neq y}{\left\{\frac{\vert f(x) - f(y) \vert}{d(x, y)}\right\}}.
\end{equation}
Denote by $Lip(X)$ the set of Lipschitz continuous functions $f : X \to \R$. Fixing an arbitrary element $x_0 \in X$, the set $Lip(X, x_0) = \{f \in Lip(X) : f(x_0) = 0\}$ is a Banach space with the norm $\Vert . \Vert_L$.
For any $\xi \in \R$, $\mathcal{M}(X, \xi) = \{ \mu \in \mathcal{M}(X) : \mu(X) = \xi\}$ is a vector subspace of $\mathcal{M}(X)$. With the Kantorovich-Rubinstein norm
\begin{equation}
\Vert \mu \Vert_{KR} = \sup_{f \in Lip(X, x_0), \Vert f \Vert_L \leq 1}{\left\{ \int f d\mu \right\}},
\end{equation}
the pair $(\mathcal{M}(X, 0), \Vert . \Vert_{KR})$ is a normed vector space.

\begin{theorem*}
For any $f \in Lip(X, x_0)$ the functional $u_f : \mathcal{M}(X, 0) \to \R$ defined by $u_f(\mu) = \int f d\mu$ is linear and continuous with $\Vert u_f \Vert = \Vert f \Vert_L$. Moreover, every continuous linear functional $v$ on $(\mathcal{M}(X, 0), \Vert . \Vert_{KR})$ is of the form  $v(\mu)=u_f(\mu)$ for a uniquely determined function $f \in Lip(X, x_0)$ with $\Vert v \Vert = \Vert f \Vert_L$. Consequently, the mapping $f \to u_f$ is an isometric isomorphism of $(Lip(X, x_0), \Vert . \Vert_L)$ onto the topological dual $(\mathcal{M}(X, 0), \Vert . \Vert_{KR})^*$, i.e.
\begin{equation}
(Lip(X, x_0), \Vert . \Vert_L) \cong (\mathcal{M}(X, 0), \Vert . \Vert_{KR})^*.
\end{equation}
\end{theorem*}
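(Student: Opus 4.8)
The plan is to decompose the statement into three parts: the embedding $f \mapsto u_f$ is an isometry, it is injective, and it is surjective onto $(\mathcal{M}(X,0),\Vert\cdot\Vert_{KR})^*$, treating surjectivity as the substantive part. For the isometry, linearity of $u_f$ is immediate from linearity of the integral, and $u_f(\mu)$ is finite because every $f \in Lip(X,x_0)$ is continuous hence bounded on the compact $X$, while every $\mu \in \mathcal{M}(X,0)$ has finite total variation. The bound $\Vert u_f \Vert \leq \Vert f \Vert_L$ is read off directly from the definition of $\Vert\cdot\Vert_{KR}$: applying the defining supremum to the competitor $f/\Vert f \Vert_L$ gives $\int f\,d\mu \leq \Vert f \Vert_L \Vert \mu \Vert_{KR}$, and replacing $f$ by $-f$ supplies the absolute value, so $\Vert u_f \Vert \leq \Vert f \Vert_L$. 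For the reverse inequality I would test against $\mu_{x,y} = (\delta_x - \delta_y)/d(x,y)$ for $x \neq y$; each lies in $\mathcal{M}(X,0)$ and satisfies $\Vert \mu_{x,y} \Vert_{KR} \leq 1$ because $\vert g(x)-g(y) \vert \leq \Vert g \Vert_L\,d(x,y)$ for every competitor $g$, while $u_f(\mu_{x,y}) = (f(x)-f(y))/d(x,y)$, so taking the supremum over $x \neq y$ recovers $\Vert u_f \Vert \geq \Vert f \Vert_L$.

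For injectivity and the construction in the surjective direction, given a continuous linear $v$ I would define $f(x) = v(\delta_x - \delta_{x_0})$. Then $f(x_0) = v(0) = 0$, and using $\Vert \delta_x - \delta_y \Vert_{KR} = d(x,y)$ one gets $\vert f(x)-f(y) \vert = \vert v(\delta_x - \delta_y) \vert \leq \Vert v \Vert\, d(x,y)$, so $f \in Lip(X,x_0)$ with $\Vert f \Vert_L \leq \Vert v \Vert$. Injectivity is then immediate: if $u_f = u_{f'}$ then evaluating on $\delta_x - \delta_{x_0} \in \mathcal{M}(X,0)$ forces $f(x)-f'(x) = f(x_0)-f'(x_0) = 0$ for every $x$.

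The remaining and central task is to show $v = u_f$ on all of $\mathcal{M}(X,0)$. Both are continuous, and they agree on every finitely supported mass-zero measure: writing $\sum_i c_i \delta_{x_i}$ with $\sum_i c_i = 0$ as $\sum_i c_i(\delta_{x_i}-\delta_{x_0})$, the functional $v$ returns $\sum_i c_i f(x_i)$, which equals $\int f\, d(\sum_i c_i \delta_{x_i}) = u_f(\sum_i c_i \delta_{x_i})$. It therefore suffices to prove that such measures are dense in $(\mathcal{M}(X,0),\Vert\cdot\Vert_{KR})$, which I expect to be the main obstacle. I would establish this via the transport interpretation of the KR norm: writing $\mu = \mu^+ - \mu^-$ with $\mu^+(X) = \mu^-(X) = m$, partition the compact $X$ into finitely many Borel pieces of diameter below $\delta$, collapse the $\mu^\pm$-mass in each piece onto a chosen representative point to obtain finitely supported $\tilde\mu^\pm$ of the same total masses, and set $\tilde\mu = \tilde\mu^+ - \tilde\mu^- \in \mathcal{M}(X,0)$. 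For any competitor $g$ with $\Vert g \Vert_L \leq 1$, the identity $\int g\, d(\mu^+ - \tilde\mu^+) = \sum_j \int_{P_j} (g(x)-g(z_j))\,d\mu^+(x)$ is bounded by $\delta\, m$, and similarly for the negative part, yielding $\Vert \mu - \tilde\mu \Vert_{KR} \leq 2m\delta \to 0$.

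Combining density with continuity of $v$ and $u_f$ gives $v = u_f$, so the embedding is surjective; and since $v = u_f$, the first part forces $\Vert v \Vert = \Vert u_f \Vert = \Vert f \Vert_L$, upgrading $\Vert f \Vert_L \leq \Vert v \Vert$ to equality. Hence $f \mapsto u_f$ is a surjective linear isometry, which is exactly the claimed isometric isomorphism $(Lip(X,x_0),\Vert\cdot\Vert_L) \cong (\mathcal{M}(X,0),\Vert\cdot\Vert_{KR})^*$. The only genuinely delicate point is making the transport bound on $\Vert \mu - \tilde\mu \Vert_{KR}$ rigorous, where compactness of $X$ is precisely what permits the uniform-diameter finite partition and the resulting Wasserstein-type estimate.
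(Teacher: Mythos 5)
Your proposal is correct, but there is nothing in the paper to compare it against: this theorem is recited \emph{without proof} in Appendix~A, cited from Cobza\c{s} et al.\ (2019), so the paper's ``own proof'' is simply a pointer to the literature. Your argument is essentially the classical proof of this Kantorovich--Rubinstein-type duality for compact $X$, and every step checks out. The isometry is handled correctly: $f/\Vert f \Vert_L$ is a legitimate competitor in the defining supremum of $\Vert \cdot \Vert_{KR}$ (it lies in $Lip(X,x_0)$ and the competitor set is symmetric under $f \mapsto -f$), and the test measures $\mu_{x,y}=(\delta_x-\delta_y)/d(x,y)$ give the reverse inequality. In the surjectivity direction, defining $f(x)=v(\delta_x-\delta_{x_0})$ only requires the inequality $\Vert \delta_x - \delta_y \Vert_{KR} \leq d(x,y)$, which is immediate from the definition; you state the equality, which also holds (test $g(z)=d(z,y)-d(x_0,y)$) but is not needed, so this is a harmless overstatement rather than a gap. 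The genuinely load-bearing step, density of finitely supported mass-zero measures in $(\mathcal{M}(X,0),\Vert\cdot\Vert_{KR})$, is established rigorously by your partition-and-collapse estimate: compactness yields a finite Borel partition of mesh $\delta$ (disjointify a finite subcover by balls of radius $\delta/2$), the Jordan decomposition gives $\mu^+(X)=\mu^-(X)=m<\infty$, and the bound $\left\vert \int_{P_j}(g(x)-g(z_j))\,d\mu^{\pm}(x)\right\vert \leq \delta\,\mu^{\pm}(P_j)$ for $1$-Lipschitz competitors sums to $\Vert \mu - \tilde\mu \Vert_{KR} \leq 2m\delta$. Agreement of the two continuous functionals $v$ and $u_f$ on this dense linear subspace then forces $v=u_f$, and the norm identity $\Vert v \Vert = \Vert f \Vert_L$ follows from the isometry already proved, closing the argument.
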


With the norm
\begin{equation}
\Vert f \Vert_{\max} = \max{\{ \Vert f \Vert_L, \Vert f \Vert_\infty\}},
\end{equation}
the pair $(Lip(X),\Vert . \Vert_{\max})$ is a Banach space. With the Hanin norm
\begin{equation}
\Vert \mu \Vert_H = \inf_{\nu \in \mathcal{M}(X, 0)}{\{ \Vert \nu \Vert_{KR} + \Vert \mu - \nu \Vert_{TV} \}},
\end{equation}
the pair $(\mathcal{M}(X), \Vert . \Vert_H)$ is a normed vector space. The subspace $\mathcal{M}(X,0)$ is closed with respect to the topology generated by $\Vert . \Vert_H$, and the corresponding subspace topology is equivalent to the topology generated by $\Vert . \Vert_{KR}$.
\begin{theorem*}
For any $f \in Lip(X)$ the functional $u_f : \mathcal{M}(X) \to \R$ defined by $u_f(\mu) = \int f d\mu$ is linear and continuous with $\Vert u_f \Vert = \Vert f \Vert_{\max}$. Moreover, every continuous linear functional $v$ on $(\mathcal{M}(X), \Vert . \Vert_H)$ is of the form  $v(\mu)=u_f(\mu)$ for a uniquely determined function $f \in Lip(X)$ with $\Vert v \Vert = \Vert f \Vert_{\max}$. Consequently, the mapping $f \to u_f$ is an isometric isomorphism of $(Lip(X), \Vert . \Vert_{\max})$ onto the topological dual $(\mathcal{M}(X), \Vert . \Vert_H)^*$, i.e.
\begin{equation}
(Lip(X), \Vert . \Vert_{\max}) \cong (\mathcal{M}(X), \Vert . \Vert_H)^*.
\end{equation}
\end{theorem*}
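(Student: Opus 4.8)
The plan is to mirror the structure of the preceding theorem for $(Lip(X,x_0),\Vert\cdot\Vert_L)$ and to reduce the representation claim to it by exploiting the decomposition of $\mathcal{M}(X)$ relative to the subspace $\mathcal{M}(X,0)$. Fix the base point $x_0$ and observe that every $\mu \in \mathcal{M}(X)$ splits uniquely as $\mu = (\mu - \mu(X)\delta_{x_0}) + \mu(X)\delta_{x_0}$ with $\mu - \mu(X)\delta_{x_0} \in \mathcal{M}(X,0)$. Since adding a constant changes neither the Lipschitz constant of a function nor the integral against a measure of total mass zero, this decomposition is exactly what lets me pass between $Lip(X)$ and $Lip(X,x_0)$.

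First I would verify that $u_f$ is well defined, linear and bounded, together with the upper estimate $\Vert u_f \Vert \le \Vert f \Vert_{\max}$. For any $\nu \in \mathcal{M}(X,0)$ I write $\int f \, d\mu = \int f \, d\nu + \int f \, d(\mu-\nu)$. Because $\nu$ has zero total mass, $\int f \, d\nu = \int (f - f(x_0)) \, d\nu$ and $f - f(x_0) \in Lip(X,x_0)$ has the same Lipschitz norm as $f$, so the first theorem bounds this term by $\Vert f \Vert_L \Vert \nu \Vert_{KR}$; the second term is bounded by $\Vert f \Vert_\infty \Vert \mu - \nu \Vert_{TV}$ since total variation dominates integration of bounded functions. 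Summing and taking the infimum over $\nu$ in the definition of $\Vert \mu \Vert_H$ yields $\vert u_f(\mu) \vert \le \Vert f \Vert_{\max} \Vert \mu \Vert_H$.

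For the reverse inequality $\Vert u_f \Vert \ge \Vert f \Vert_{\max}$ I would handle the two constituents of $\Vert \cdot \Vert_{\max}$ separately. Testing $u_f$ against measures $\nu \in \mathcal{M}(X,0)$, using $\Vert \nu \Vert_H \le \Vert \nu \Vert_{KR}$ together with the first theorem, gives $\Vert u_f \Vert \ge \Vert f - f(x_0) \Vert_L = \Vert f \Vert_L$; testing against Dirac measures $\delta_x$, for which $\Vert \delta_x \Vert_H \le \Vert \delta_x \Vert_{TV} = 1$ and $u_f(\delta_x) = f(x)$, gives $\Vert u_f \Vert \ge \vert f(x) \vert$ for every $x$, hence $\Vert u_f \Vert \ge \Vert f \Vert_\infty$ by compactness of $X$ and continuity of $f$. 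This establishes $\Vert u_f \Vert = \Vert f \Vert_{\max}$.

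It remains to represent an arbitrary continuous $v$ and to prove uniqueness. The restriction of $v$ to $\mathcal{M}(X,0)$ is continuous for the subspace topology, which by the stated equivalence coincides with the $\Vert \cdot \Vert_{KR}$ topology, so the first theorem furnishes a unique $f \in Lip(X,x_0)$ with $v(\nu) = \int f \, d\nu$ for all $\nu \in \mathcal{M}(X,0)$. Setting $c = v(\delta_{x_0})$ and $\tilde f = f + c \in Lip(X)$, the decomposition gives $v(\mu) = \int f \, d(\mu - \mu(X)\delta_{x_0}) + \mu(X) c = \int \tilde f \, d\mu = u_{\tilde f}(\mu)$ (using $f(x_0)=0$), so $v = u_{\tilde f}$, and the norm identity just proved yields $\Vert v \Vert = \Vert \tilde f \Vert_{\max}$. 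Uniqueness is immediate: if $u_f = u_g$, integrating against $\delta_x$ forces $f(x) = g(x)$ for all $x$. Together these show $f \mapsto u_f$ is the claimed isometric isomorphism. I expect the main obstacle to be the norm equality, specifically coordinating the two lower bounds so that the supremum over $\Vert \cdot \Vert_H$ genuinely recovers $\max\{\Vert f \Vert_L, \Vert f \Vert_\infty\}$, and ensuring the passage to $\mathcal{M}(X,0)$ is legitimate, which rests on the equivalence of the subspace and Kantorovich-Rubinstein topologies.
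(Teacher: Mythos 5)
Your proposal is correct, but note that the paper itself offers no proof to compare against: this theorem appears in Appendix~A as background ``recited \ldots without proofs'' from the cited monograph of Cobza\c{s} et al.\ (it is Hanin's duality theorem). Judged on its own merits, your argument is a sound, self-contained derivation from the other background facts the paper does state. The upper bound $\vert u_f(\mu)\vert \le \Vert f \Vert_{\max}\Vert\mu\Vert_H$ via the splitting $\int f\,d\mu = \int (f-f(x_0))\,d\nu + \int f\,d(\mu-\nu)$ and the infimum over $\nu \in \mathcal{M}(X,0)$ is exactly right; the two lower bounds (KR-unit-ball measures of mass zero for $\Vert f\Vert_L$, Diracs with $\Vert\delta_x\Vert_H \le \Vert\delta_x\Vert_{TV} = 1$ for $\Vert f\Vert_\infty$) combine correctly to give $\Vert u_f\Vert \ge \Vert f\Vert_{\max}$; and the representation step, restricting $v$ to $\mathcal{M}(X,0)$, invoking the first duality theorem, and reconstituting $\tilde f = f + v(\delta_{x_0})$ via the decomposition $\mu = (\mu - \mu(X)\delta_{x_0}) + \mu(X)\delta_{x_0}$, is clean and complete, as is uniqueness via Diracs. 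One small simplification: you do not need the full stated equivalence of the subspace topology with the $\Vert\cdot\Vert_{KR}$ topology, only the one-line inequality $\Vert\nu\Vert_H \le \Vert\nu\Vert_{KR}$ for $\nu \in \mathcal{M}(X,0)$ (take $\nu$ itself in the Hanin infimum), which already shows the restriction of $v$ is KR-bounded; the converse direction of the equivalence, which is the genuinely nontrivial part of Hanin's theory, never enters your argument.
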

Integration is bilinear, i.e. $\int (\alpha f + \beta g) d\mu = \alpha \int f d\mu + \beta \int g d\mu$ and $\int f d(\alpha \mu + \beta \nu) = \alpha \int f d\mu + \beta \int f d\nu$ for any $\alpha, \beta \in \R$, $f, g \in Lip(X)$ and $\mu, \nu \in \mathcal{M}(X)$.

The set of nonnegative measures is $\mathcal{M}^+(X) = \{ \mu \in \mathcal{M}(X) \ \vert\  \forall A \in \mathcal{B}(X) : \mu(A) \geq 0\}$. The convex set $P(X) = \mathcal{M}(X, 1) \cap \mathcal{M}^+(X)$ is exactly the set of all Borel probability measures on $X$. It is a compact and complete metric space with respect to the Wasserstein-$1$ metric
\begin{equation} \label{wasserstein}
W_1(\mu,\nu)
=
\inf_{\pi \in \Pi(\mu,\nu)}{\int d(x_1,x_2) d\pi(x_1,x_2)},
\end{equation}
where $\Pi(\mu,\nu)$ is the set of probability measures on $X \times X$ with marginals $\mu$ and $\nu$, i.e. given any $A \in \mathcal{B}(X)$, the relations $\pi(A \times X)=\mu(A)$ and $\pi(X \times A)=\nu(A)$ hold.

\begin{theorem*}[Kantorovich-Rubinstein duality]
The metric induced by the norm $\Vert . \Vert_{KR}$ on $P(X)$ is equivalent to the Wasserstein-$1$ metric as
\begin{equation} \label{kantrubduality}
W_1(\mu,\nu) = 
\sup_{f \in Lip(X, x_0), \Vert f \Vert_L \leq 1}{\left\{ \int f d\mu - \int f d\nu \right\}}
= \Vert \mu - \nu \Vert_{KR}
\end{equation}
for any $\mu,\nu \in P(X)$.
\end{theorem*}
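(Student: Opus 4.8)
The plan is to read the two equalities in \eqref{kantrubduality} separately, treating the right-hand equality as essentially definitional and the left-hand one as the classical Kantorovich-Rubinstein duality, with the $Lip(X,x_0)$ normalization providing the only bridge that needs a word.

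For the equality $\sup_{f \in Lip(X,x_0), \Vert f \Vert_L \leq 1}\{\int f d\mu - \int f d\nu\} = \Vert \mu - \nu \Vert_{KR}$, I would first observe that since $\mu,\nu \in P(X) = \mathcal{M}(X,1) \cap \mathcal{M}^+(X)$, the difference $\mu - \nu$ has total mass $0$ and hence lies in $\mathcal{M}(X,0)$, so that $\Vert \mu - \nu \Vert_{KR}$ is well-defined. Bilinearity of integration then gives $\int f d\mu - \int f d\nu = \int f d(\mu - \nu)$ for every $f \in Lip(X,x_0)$, and substituting this into the definition of $\Vert \cdot \Vert_{KR}$ yields the claim immediately.

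For the equality $W_1(\mu,\nu) = \sup_{f \in Lip(X,x_0), \Vert f \Vert_L \leq 1}\{\int f d\mu - \int f d\nu\}$, I would start from the classical Kantorovich-Rubinstein formula recalled in the preliminaries (cf. \citet{Villani2008}), in which the supremum ranges over all $f$ with $\Vert f \Vert_L \leq 1$ without the normalization $f(x_0) = 0$. The bridge to $Lip(X,x_0)$ is the invariance of the objective under adding a constant: since $\mu(X) = \nu(X) = 1$, replacing $f$ by $f + C$ leaves $\int f d\mu - \int f d\nu$ unchanged. Consequently, for any feasible $f$ the translate $f - f(x_0)$ lies in $Lip(X,x_0)$, is still $1$-Lipschitz and attains the same value, while conversely $Lip(X,x_0) \subset Lip(X)$; hence the two supremums coincide.

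If instead one wishes to prove the Kantorovich-Rubinstein duality from scratch rather than cite it, I would proceed in two steps. The easy inequality $W_1(\mu,\nu) \geq \sup_f\{\cdots\}$ follows from the coupling estimate $\int f d\mu - \int f d\nu = \int (f(x_1) - f(x_2)) d\pi(x_1,x_2) \leq \int d(x_1,x_2) d\pi$, valid for every $\pi \in \Pi(\mu,\nu)$ and every $1$-Lipschitz $f$, upon taking the supremum over $f$ and the infimum over $\pi$. The reverse inequality is the substantive part: I would invoke Kantorovich duality for the lower semicontinuous cost $c = d$, express $W_1$ as a supremum over potential pairs $(\varphi,\psi)$ with $\varphi(x) + \psi(y) \leq d(x,y)$, and then collapse the pair to $(\varphi,-\varphi)$ with $\varphi$ being $1$-Lipschitz via the $c$-transform argument, using that on a metric space the $d$-transform of any function is automatically $1$-Lipschitz. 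The main obstacle here is precisely the absence of a duality gap in Kantorovich duality, which rests on a minimax or Hahn-Banach separation argument; the remaining points, including attainment of the supremum by a Kantorovich potential (guaranteed by compactness of $X$ together with Arzel\`a-Ascoli applied to the $1$-Lipschitz functions vanishing at $x_0$), are then routine.
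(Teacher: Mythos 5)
Your proposal is correct, but the comparison here is one-sided: the paper never proves this statement at all. The theorem sits in the background appendix among results explicitly ``recited (without proofs)'' from Cobza\c{s} et al.\ (2019), so the paper's route is pure citation of the Lipschitz--measure duality framework $(Lip(X,x_0),\Vert\cdot\Vert_L) \cong (\mathcal{M}(X,0),\Vert\cdot\Vert_{KR})^*$, within which the identity $W_1(\mu,\nu)=\Vert\mu-\nu\Vert_{KR}$ on $P(X)$ comes packaged. Your reconstruction is a valid replacement. The decomposition is sound: (i) the right equality is definitional once you note $\mu-\nu\in\mathcal{M}(X,0)$ (equal total masses) and invoke bilinearity of integration, and (ii) the left equality reduces to the classical Kantorovich--Rubinstein formula quoted in the paper's preliminaries via the observation that the objective is invariant under $f \mapsto f + C$ because $\mu(X)=\nu(X)=1$, so that replacing $f$ by $f - f(x_0)$ costs nothing and restricting the supremum to $Lip(X,x_0)$ is harmless; that normalization bridge is exactly the point that needs saying, and you say it correctly. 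Your optional from-scratch sketch --- weak duality via couplings, strong duality via Kantorovich duality and the $d$-transform (which is automatically $1$-Lipschitz), attainment via Arzel\`a--Ascoli applied to the $1$-Lipschitz functions vanishing at $x_0$ on the compact $X$ --- is the standard and correct route, and correctly locates the substantive content in the no-duality-gap step. What your approach buys is self-containedness and a clear display that the statement needs only the classical duality plus bookkeeping; what the paper's citation buys is the stronger surrounding theory (the isometric isomorphism with the dual Banach space, used elsewhere in the appendix for the conjugacy computations) without lengthening the background section.
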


\subsubsection{Convex Analysis}
We recite a number of definitions and results (without proofs) from convex analysis on general vector spaces, taken from \citet{Zalinescu2002}.

Let $X,Y$ be separated locally convex topological vector spaces with topological duals $X^*,Y^*$. For an extended real-valued function $f: X \to \overline{\R}$, the function $f^*: X^* \to \overline{\R}$ defined by
\begin{equation} \label{conjugate}
f^*(x^*)=\sup_{x \in X}{\{ \langle x, x^* \rangle -f(x) \}}
\end{equation}
is the (convex) conjugate of $f$, where $\langle \cdot , \cdot \rangle$ is the dual pairing. The conjugate $g^*: X \to \overline{\R}$ of a function $g: X^* \to \overline{\R}$ is defined analogously as
\begin{equation} \label{conjugate_dual}
g^*(x)=\sup_{x^* \in X^*}{\{ \langle x, x^* \rangle -g(x^*) \}},
\end{equation}
leading to the notion of the biconjugate $(f^*)^*=f^{**}$ of $f$, which is the greatest lower semicontinuous convex function with $f^{**} \leq f$.

If $0 < \alpha \in \R$, then
\begin{equation} \label{alphaconjugate}
(\alpha f(\cdot))^*(x^*)=\alpha f^*(\alpha^{-1} x^*),
\end{equation}
if $0 \neq \beta \in \R$, then
\begin{equation} \label{betaconjugate}
(f(\beta \cdot))^*(x^*) = f^*(\beta^{-1} x^*),
\end{equation}
if $x_0 \in X$, then
\begin{equation} \label{translateconjugate}
(f(x_0 + \cdot))^*(x^*) = f^*(x^*) - \langle x_0, x^* \rangle,
\end{equation}
and the Young-Fenchel inequality states that
\begin{equation} \label{youngfenchel}
\forall x \in X, \forall x^* \in X^* : f(x) + f^*(x^*) \geq \langle x, x^* \rangle.
\end{equation}
If $(X, \Vert \cdot \Vert)$ and $(X^*, \Vert \cdot \Vert_*)$ are normed spaces and $f: X \to \overline{\R}$ is defined by $f(x)=\Vert x \Vert$, then
\begin{equation} \label{normconjugate}
f^*(x^*)=
\begin{cases}
0 \text{ if $\Vert x^* \Vert_* \leq 1$,} \\
\infty \text{ otherwise}
\end{cases}
\end{equation}
is the indicator function of the unit ball of the dual space, and if $\psi : \R_+ \to \overline{\R}_+$ is such that $\psi(0)=0$ and $f: X \to \overline{\R}$ is defined by $f(x)=\psi(\Vert x \Vert)$, then
\begin{equation} \label{psiofnormconjugate}
f^*(x^*)=\psi^\#(\Vert x^* \Vert_*),
\end{equation}
where $\psi^\# : \R_+ \to \overline{\R}_+$ is defined by
\begin{equation} \label{psiconj}
\psi^\#(s) = \sup_{0 \leq t \in \R}{\{ st-\psi(t) \}}.
\end{equation}

Given a function $f: X \to \overline{\R}$, the set
\begin{equation}
\dom f = \left\{ x \in X : f(x) < \infty \right\}
\end{equation}
is the effective domain of $f$. A function $f$ is proper if $\dom f \neq \emptyset$ and $f(x) > -\infty$ for all $x \in X$, otherwise it is improper. A function $f$ is convex if
\begin{equation} \label{convex}
\forall x, y \in X, \forall \lambda \in [0,1]: f(\lambda x + (1-\lambda) y) \leq \lambda f(x) + (1-\lambda)f(y)
\end{equation}
holds, and strictly convex if \eqref{convex} holds with $\leq$ replaced by $<$. An $f :\R \to \overline{\R}$ is strictly convex at $x_0 \in \dom f$ if $\lambda f(x_1) + (1 - \lambda) f(x_2) > f(x_0)$ holds for $\forall \lambda \in (0,1)$ and $\forall x_1, x_2 \in \dom f$ such that $\lambda x_1 + (1-\lambda) x_2 = x_0$, unless $x_1 = x_2 = x_0$.

A function $f$ is lower semicontinuous at $x_0 \in X$ if $f(x_0) \leq \liminf_{x \to x_0}{f(x)}$, and $f$ is lower semicontinuous if it is lower semicontinuous at $\forall x_0 \in X$.

\begin{theorem*}[Fenchel-Moreau biconjugation]
Let $X$ be a separated locally convex topological vector space with topological dual $X^*$, and $f : X \to \overline{\R}$ a function. Then $f^{**} \leq f$, and the relation
\begin{equation} \label{biconjugate}
f = f^{**}.
\end{equation}
i.e. $f$ is equivalent to its biconjugate (the conjugate of its conjugate) holds if and only if either $f$ is proper, lower semicontinuous and convex, or $f$ is constant $\pm\infty$.
\end{theorem*}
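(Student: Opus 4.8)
The plan is to treat the universal inequality and the two directions of the equivalence in turn, with the Young--Fenchel inequality \eqref{youngfenchel} handling the easy part and the Hahn--Banach separation theorem carrying the substantive one. First I would dispatch $f^{**} \leq f$: for every $x \in X$ and $x^* \in X^*$ the definition \eqref{conjugate} gives $f^*(x^*) \geq \langle x, x^* \rangle - f(x)$, so $f(x) \geq \langle x, x^* \rangle - f^*(x^*)$, and taking the supremum over $x^*$ in \eqref{conjugate_dual} yields $f(x) \geq f^{**}(x)$ for arbitrary $f$. For the \emph{only if} direction I would note that $f^{**}$ is a pointwise supremum of the continuous affine maps $x \mapsto \langle x, x^* \rangle - f^*(x^*)$, hence convex and lower semicontinuous, so $f = f^{**}$ already forces these two properties on $f$. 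A short inspection of values then produces the stated trichotomy: if $f(x_1) = -\infty$ for some $x_1$ then $f^* \equiv +\infty$ and hence $f = f^{**} \equiv -\infty$; otherwise $f$ is either identically $+\infty$ or proper.

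For the \emph{if} direction, the two improper cases are immediate by direct computation of conjugates, so the heart of the matter is a proper, convex and lower semicontinuous $f$, where it remains to prove $f^{**}(x_0) \geq f(x_0)$ for every $x_0$. Here I would pass to $X \times \R$, whose dual is $X^* \times \R$, and work with $\epi f = \{(x,t) : f(x) \leq t\}$, which is nonempty, convex by convexity of $f$, and closed by lower semicontinuity. Assuming for contradiction that $f^{**}(x_0) < f(x_0)$, I would choose a real $r$ with $f^{**}(x_0) < r < f(x_0)$ (any real $r$ if $f(x_0) = +\infty$), so $(x_0, r) \notin \epi f$, and apply strict separation to obtain $x^* \in X^*$ and $s, c \in \R$ with $\langle x_0, x^* \rangle + s r < c \leq \langle x, x^* \rangle + s t$ for all $(x,t) \in \epi f$. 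Feeding in $(x, t)$ with $x \in \dom f$ and $t \to +\infty$ forces $s \geq 0$.

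The main obstacle is controlling the vertical coefficient $s$. When $s > 0$ I would set $y^* = -x^*/s$, read the separation inequality as the affine minorant $f(x) \geq \langle x, y^* \rangle + c/s$, deduce $f^*(y^*) \leq -c/s$, and obtain $f^{**}(x_0) \geq \langle x_0, y^* \rangle + c/s > r$, contradicting $r > f^{**}(x_0)$. Testing the inequality at $(x_0, f(x_0)) \in \epi f$ shows that $s > 0$ is automatic whenever $x_0 \in \dom f$, so the only delicate situation is a vertical separating hyperplane $s = 0$, which can occur only for $x_0 \notin \dom f$. In that situation the separation degenerates to $\langle x, x^* \rangle \geq c > \langle x_0, x^* \rangle$ on $\dom f$, carrying no vertical information.

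To close the $s = 0$ case I would first secure one genuine affine minorant $a_0 \leq f$ by separating a point strictly below the graph over the domain, where the previous paragraph guarantees a positive vertical coefficient; this also shows $f^*$ is proper. Then, using $\langle x, x^* \rangle - c \geq 0$ on $\dom f$, I would form the tilted minorants $a_\rho(x) = a_0(x) - \rho(\langle x, x^* \rangle - c)$, which stay below $f$ for every $\rho \geq 0$ yet satisfy $a_\rho(x_0) = a_0(x_0) + \rho(c - \langle x_0, x^* \rangle) \to +\infty$ since $c - \langle x_0, x^* \rangle > 0$. As each $a_\rho$ is a continuous affine minorant, it yields $f^{**}(x_0) \geq a_\rho(x_0)$, whence $f^{**}(x_0) = +\infty = f(x_0)$. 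This contradiction closes the proper case and, together with the improper computations, establishes $f = f^{**}$.
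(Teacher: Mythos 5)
The paper offers no proof of this theorem --- it is one of the results recited ``without proofs'' from \citet{Zalinescu2002} in Appendix A.1.2, so there is no internal argument to compare against. Your proof is correct and is the classical proof of the Fenchel--Moreau theorem found in such references: the Young--Fenchel inequality gives $f^{**}\le f$; necessity follows because $f^{**}$ is a supremum of continuous affine functions (plus the inspection of the improper values); and sufficiency is obtained by Hahn--Banach strict separation of $(x_0,r)$ from the nonempty closed convex set $\epi f$ in $X\times\R$. In particular, you correctly identify and close the one genuinely delicate point, the vertical separating hyperplane $s=0$ (possible only when $x_0\notin\dom f$): you first secure a single continuous affine minorant of $f$ by separating below the graph at a point of $\dom f$, where the vertical coefficient is forced positive, and then tilt it by arbitrarily large nonnegative multiples of the degenerate separating functional, driving $f^{**}(x_0)$ to $+\infty$ and completing the contradiction.
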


Given a function $f: X \to \overline{\R}$ and $\hat{x} \in X$, the subdifferential of $f$ at $\hat{x}$ is defined as the set
\begin{equation}
\partial f(\hat{x}) = \{ x^* \in X^* : \forall x \in X : \langle x - \hat{x} , x^* \rangle \leq f(x) - f(\hat{x}) \}.
\end{equation}
Any $x \in \partial f(\hat{x})$ is called a subgradient of $f$ at $\hat{x}$. It is possible that $\partial f(\hat{x}) = \emptyset$, which is always the case if $f(\hat{x}) = \pm \infty$. It holds that if $f$ is proper and $f(x) \in \R$, \eqref{youngfenchel} becomes an equality if and only if $x^* \in \partial f(x)$, or equivalently $x \in \partial f^*(x^*)$. It follows that if $f$ is proper, convex and lower semicontinuous, then
\begin{equation} \label{subgradient}
f(x)=f^{**}(x)=\sup_{x^* \in X^*}{\{ \langle x, x^* \rangle -f^*(x^*) \}}=\langle x, \hat{x}^* \rangle -f^*(\hat{x}^*)
\iff \hat{x}^* \in \partial f(x)
\end{equation}
and
\begin{equation} \label{subgradient_conj}
f^*(x^*)=\sup_{x \in X}{\{ \langle x, x^* \rangle -f(x) \}}=\langle \hat{x}, x^* \rangle -f(\hat{x}) \iff \hat{x} \in \partial f^*(x^*)
\end{equation}
both hold.

The adjoint of a linear map $A : X \to Y$ is the linear map $A^* : Y^* \to X^*$ such that $\langle Ax, y^* \rangle = \langle x, A^* y^* \rangle$ holds for $\forall x \in X, y^* \in Y^*$.

\begin{theorem*}\footnote{This theorem is \citet[Theorem~2.8.3(viii)]{Zalinescu2002} with $Y=\R$.}
Let $A : X \to \R$ be a continuous linear map (so that $A \in X^*$) with adjoint $A^* : \R \to X^*$, and $f : X \to \overline{\R}, g : \R \to \overline{\R}$ be proper convex functions, and consider the proper convex function $h : X \to \overline{\R}$ defined as $h(x)=f(x)+g(Ax)$. If $\dom f \cap A^{-1}(\dom g) \neq \emptyset$ and $0 \in \relativeinterior(A(\dom f)-\dom g)$, then it holds that
\begin{equation} \label{plusunivariate_conjugate}
h^*(x^*) = \min_{\gamma \in \R}{\{ f^*(x^*-A^*\gamma)+g^*(\gamma) \}}
\end{equation}
and
\begin{equation} \label{plusunivariate_subdifferential}
\partial h(x) = \partial f(x) + A^*(\partial g(Ax)).
\end{equation}
\end{theorem*}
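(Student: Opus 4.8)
The plan is to recognize the statement as the $Y=\R$ instance of the Fenchel--Rockafellar theorem on the conjugate of a sum, and to organize the argument so that the subdifferential formula \eqref{plusunivariate_subdifferential} is deduced from the conjugate formula \eqref{plusunivariate_conjugate}. For the conjugate formula I would separate the easy inequality $h^*(x^*) \leq \inf_\gamma \{ f^*(x^*-A^*\gamma)+g^*(\gamma) \}$ from the reverse inequality together with attainment of the minimum; the constraint qualification is needed only for the latter.

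The easy direction is a one-line application of the Young--Fenchel inequality \eqref{youngfenchel}. Fixing $\gamma \in \R$ and inserting $g(Ax) \geq \langle Ax, \gamma \rangle - g^*(\gamma) = \langle x, A^*\gamma \rangle - g^*(\gamma)$ into $h^*(x^*) = \sup_x \{ \langle x, x^* \rangle - f(x) - g(Ax) \}$ bounds the supremum by $f^*(x^*-A^*\gamma)+g^*(\gamma)$, and taking the infimum over $\gamma$ yields $\leq$. All the content is in the reverse inequality.

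For the reverse inequality I would introduce the perturbation (value) function $\Phi : \R \to \overline{\R}$,
\[
\Phi(\eta) = \inf_{x \in X}\{ f(x) - \langle x, x^* \rangle + g(Ax + \eta) \},
\]
so that $\Phi(0) = -h^*(x^*)$ and a short computation gives $\Phi^*(\gamma) = f^*(x^*-A^*\gamma) + g^*(\gamma)$. Since $(x,\eta) \mapsto f(x) - \langle x, x^* \rangle + g(Ax+\eta)$ is jointly convex, its infimal projection $\Phi$ is convex, with $\dom \Phi = \dom g - A(\dom f)$; hence the hypothesis $0 \in \relativeinterior(A(\dom f) - \dom g)$ says exactly that $0$ lies in the relative interior of $-\dom \Phi$, equivalently of $\dom \Phi$. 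This is the main obstacle: one must show $\partial \Phi(0) \neq \emptyset$, which requires a relative-interior separation (Hahn--Banach) argument, and one must first verify that $\Phi$ is proper, i.e. $\Phi(0) > -\infty$, so that the value function is not identically $-\infty$ on its domain. Once a subgradient $\gamma \in \partial \Phi(0)$ is produced, the Fenchel--Young equality characterization \eqref{subgradient}--\eqref{subgradient_conj} gives $\Phi^*(\gamma) = -\Phi(0) = h^*(x^*)$, i.e. $h^*(x^*) = f^*(x^*-A^*\gamma)+g^*(\gamma)$, simultaneously proving the reverse inequality and upgrading the infimum to a minimum attained at $\gamma$.

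Finally, the subdifferential formula follows cleanly from the conjugate formula. The inclusion $\supseteq$ is the elementary sum-plus-chain rule: for $\gamma \in \partial g(Ax)$ one has, for every $\hat{x} \in X$, $\langle \hat{x}-x, A^*\gamma \rangle = \langle A\hat{x}-Ax, \gamma \rangle \leq g(A\hat{x})-g(Ax)$, so $A^*\gamma \in \partial(g \circ A)(x)$, and adding any $x_1^* \in \partial f(x)$ gives $x_1^* + A^*\gamma \in \partial h(x)$. For $\subseteq$, take $x^* \in \partial h(x)$, so the Fenchel--Young inequality for $h$ is an equality by \eqref{subgradient}; substituting the now-attained decomposition $h^*(x^*) = f^*(x^*-A^*\gamma)+g^*(\gamma)$ and expanding $\langle x, x^* \rangle = \langle x, x^*-A^*\gamma \rangle + \langle Ax, \gamma \rangle$ forces the separate Young--Fenchel inequalities for $f$ at $x^*-A^*\gamma$ and for $g$ at $\gamma$ both to be equalities, whence $x^*-A^*\gamma \in \partial f(x)$ and $\gamma \in \partial g(Ax)$, giving $x^* \in \partial f(x) + A^*(\partial g(Ax))$.
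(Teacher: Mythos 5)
First, a point of order: the paper never proves this statement. It sits in the background appendix, which explicitly recites results ``without proofs'' from \citet{Zalinescu2002}, the footnote identifying it as Theorem~2.8.3(viii) there with $Y=\R$. So there is no in-paper argument to compare against; the relevant comparison is with the cited source, and your proof is essentially that textbook route: the perturbation function $\Phi(\eta)=\inf_{x}\{f(x)-\langle x,x^*\rangle+g(Ax+\eta)\}$, convexity of the infimal projection, the computation $\Phi^*(\gamma)=f^*(x^*-A^*\gamma)+g^*(\gamma)$, subdifferentiability of $\Phi$ at $0$ under the relative-interior qualification (note $A(\dom f)-\dom g=-\dom\Phi$ and $\relativeinterior(-S)=-\relativeinterior S$), and then both directions of the subdifferential identity, with the qualification used only for $\subseteq$. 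All of this is sound, and the derivation of \eqref{plusunivariate_subdifferential} from the attained form of \eqref{plusunivariate_conjugate} by splitting the Fenchel--Young equality into two nonnegative summands is exactly right.

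Two refinements are needed to close the argument. First, $\Phi(0)>-\infty$ is not something one can ``verify'': it genuinely fails whenever $h^*(x^*)=+\infty$ (take $f=g=0$, $A=\mathrm{id}$, $x^*\neq 0$). The fix is a trivial case split: if $h^*(x^*)=+\infty$, your easy Young--Fenchel inequality already forces $f^*(x^*-A^*\gamma)+g^*(\gamma)=+\infty$ for every $\gamma$, so \eqref{plusunivariate_conjugate} holds degenerately; and in the $\subseteq$ direction of \eqref{plusunivariate_subdifferential} one only ever invokes attainment at $x^*\in\partial h(x)$, where $h^*(x^*)=\langle x,x^*\rangle-h(x)$ is automatically finite. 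In the remaining case $\Phi(0)\in\R$, properness of $\Phi$ does follow, because a convex function taking the value $-\infty$ anywhere equals $-\infty$ on the relative interior of its domain, contradicting finiteness at the relative-interior point $0$. Second, since $Y=\R$, the existence of a subgradient of the proper convex $\Phi:\R\to\overline{\R}$ at a point of $\relativeinterior\dom\Phi$ needs no Hahn--Banach separation: if $\dom\Phi=\{0\}$ then $\partial\Phi(0)=\R$, and otherwise $0$ is an interior point of an interval on which $\Phi$ is finite, the difference quotients are monotone, and any $\gamma\in[\Phi'_-(0),\Phi'_+(0)]$ is a subgradient. This one-dimensionality is precisely what the specialization $Y=\R$ buys, and is also why the relative-interior constraint qualification (a finite-dimensional notion) is the natural hypothesis here.
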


\begin{theorem*}[Fenchel-Rockafellar duality]\footnote{This theorem is \citet[Corollary~2.8.5]{Zalinescu2002} and condition \citet[Theorem~2.8.3(iii)]{Zalinescu2002} with $Y=X$, $A:X \to X$ the identity and replacing the dual variable $x^*$ with $-x^*$.}
Let $f,g : X \to \overline{\R}$ be proper convex functions for which $\exists x_0 \in \dom f \cap \dom g$ such that $g$ is continuous at $x_0$. Then it holds that
\begin{equation} \label{fenchelrockafellarduality}
\inf_{x \in X}{\{ f(x) + g(x) \}} = \max_{x^* \in X^*}{\{ -f^*(x^*)-g^*(-x^*)\}}
\end{equation}
and
\begin{equation} \label{fenchelrockafellarduality_min}
\exists \hat{x} \in X : \inf_{x \in X}{\{ f(x) + g(x) \}} = f(\hat{x}) + g(\hat{x}) 
\iff 
\exists \hat{x}^* \in X^* : -\hat{x}^* \in \partial f(\hat{x}) \land \hat{x}^* \in \partial g(\hat{x}).
\end{equation}
\end{theorem*}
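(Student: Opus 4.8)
The plan is to prove the identity \eqref{fenchelrockafellarduality} in two halves --- easy weak duality and the substantive strong duality --- and then to extract the attainment condition \eqref{fenchelrockafellarduality_min} from the Young--Fenchel equality. For \textbf{weak duality}, fix any $x \in X$ and $x^* \in X^*$ and apply \eqref{youngfenchel} to $f$ at $(x,x^*)$ and to $g$ at $(x,-x^*)$, giving $-f^*(x^*) \le f(x) - \langle x, x^* \rangle$ and $-g^*(-x^*) \le g(x) + \langle x, x^* \rangle$. Adding these cancels the pairing and yields $-f^*(x^*) - g^*(-x^*) \le f(x) + g(x)$; taking the supremum over $x^*$ and the infimum over $x$ gives $\sup_{x^*}\{-f^*(x^*) - g^*(-x^*)\} \le \inf_x\{f(x) + g(x)\}$.

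The hard part is \textbf{strong duality}: the reverse inequality together with attainment of the dual supremum, so that it becomes a maximum. Writing $\alpha = \inf_x\{f(x) + g(x)\}$ (the case $\alpha = +\infty$ being trivial, and the qualification precluding a gap when $\alpha$ is finite), I would separate the two convex subsets $\epi f$ and $\{(x,t) \in X \times \R : t \le \alpha - g(x)\}$ of $X \times \R$: any common point $(x,t)$ forces $f(x) + g(x) = \alpha$ and $t = f(x)$, so the sets meet only on this boundary and have disjoint interiors. Continuity of $g$ at the common domain point $x_0$ guarantees the second set has nonempty interior, so a Hahn--Banach separation produces a closed, non-vertical separating hyperplane whose normal, after normalization, is a functional $\hat{x}^* \in X^*$ achieving $-f^*(\hat{x}^*) - g^*(-\hat{x}^*) = \alpha$. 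Equivalently, this is precisely the conjugate-of-sum formula of the preceding theorem specialized to $Y = X$ and $A$ the identity (so $A^* = \mathrm{id}$), evaluated at $0 \in X^*$, since $h^*(0) = -\inf_x\{f(x) + g(x)\}$ for $h = f + g$ and the $\min$ appearing there is exactly the attained dual.

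Finally I would derive \eqref{fenchelrockafellarduality_min}. If the primal infimum is attained at $\hat{x}$, then by strong duality there is a dual optimizer, and after relabeling its sign call it $\hat{x}^*$, so that $f(\hat{x}) + g(\hat{x}) = -f^*(-\hat{x}^*) - g^*(\hat{x}^*)$; rearranging gives $[f(\hat{x}) + f^*(-\hat{x}^*) - \langle \hat{x}, -\hat{x}^* \rangle] + [g(\hat{x}) + g^*(\hat{x}^*) - \langle \hat{x}, \hat{x}^* \rangle] = 0$, where both brackets are nonnegative by \eqref{youngfenchel} and hence vanish. By the subgradient characterizations \eqref{subgradient} and \eqref{subgradient_conj}, the two equalities read $-\hat{x}^* \in \partial f(\hat{x})$ and $\hat{x}^* \in \partial g(\hat{x})$. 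Conversely, given such $\hat{x}, \hat{x}^*$, the same two Young--Fenchel equalities sum to $f(\hat{x}) + g(\hat{x}) = -f^*(-\hat{x}^*) - g^*(\hat{x}^*)$, a value already shown by weak duality to be $\le \alpha \le f(\hat{x}) + g(\hat{x})$, forcing equality and making $\hat{x}$ a minimizer. The \emph{main obstacle} is the strong-duality step: ruling out a duality gap and securing attainment of the dual maximum, which is exactly where the continuity qualification and the separation theorem in the locally convex setting are indispensable; the remaining manipulations are routine conjugate and subgradient bookkeeping.
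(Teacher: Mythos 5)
Your proposal is correct, but there is no line-by-line comparison to make: this theorem is one of the results the paper's appendix explicitly recites \emph{without proof}, citing \citet{Zalinescu2002} (Corollary~2.8.5 for the duality formula, Theorem~2.8.3(iii) for the qualification). Your argument is the classical self-contained proof: weak duality by adding two instances of \eqref{youngfenchel}; strong duality with dual attainment by separating $\epi f$ from the strict hypograph $\{(x,t) : t < \alpha - g(x)\}$, whose interior is nonempty precisely because $g$ is continuous at the common domain point $x_0$, the same hypothesis also forcing the separating hyperplane to be non-vertical; and the equivalence \eqref{fenchelrockafellarduality_min} by splitting the optimality identity into two nonnegative Young--Fenchel gaps and invoking the equality cases \eqref{subgradient} and \eqref{subgradient_conj}. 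This is essentially the machinery Zalinescu runs one level down, so the two routes differ only in packaging: the paper outsources the separation argument to the book, while you expose it. Two small corrections are worth recording. First, your parenthetical case analysis is slightly off: under the stated qualification the case $\alpha = +\infty$ is not ``trivial'' but impossible, since $\alpha \leq f(x_0) + g(x_0) < \infty$; it is the case $\alpha = -\infty$ --- which your separation construction cannot treat, the hypograph set being empty --- that genuinely occurs and needs the one-line disposal that weak duality then forces every dual value to equal $-\infty$, so \eqref{fenchelrockafellarduality} holds degenerately. Second, your ``equivalently'' aside cannot literally invoke the paper's preceding recited theorem with $Y = X$ and $A = \mathrm{id}$: as stated in the paper that theorem is already specialized to $Y = \R$ (so $A : X \to \R$), and the reduction of \eqref{fenchelrockafellarduality} to the conjugate-of-sum formula evaluated at $0 \in X^*$ requires the general form, \citet[Theorem~2.8.3(viii)]{Zalinescu2002} --- which is exactly the specialization described in the paper's own footnote, so your observation recovers the book's derivation while your separation argument is what proves that general theorem underneath.
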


\subsubsection{Moreau-Yosida approximation}
Let $(X, d)$ be a metric space and $f: X \to \overline{\R}$ a proper function. Given $0 < \lambda, \alpha \in \R$, the Moreau-Yosida approximation \cite{Dalmaso1993, Jostetal2008} of index $\lambda$ and order $\alpha$ of $f$ is defined as 
\begin{equation}
f_{\lambda,\alpha}(x)=\inf_{y \in X}{\{ f(y) + \lambda d(x, y)^\alpha \}}.
\end{equation}
It holds that
\begin{equation}
\overline{f}(x) = \sup_{\lambda>0}{f_{\lambda,\alpha}(x)} = \lim_{\lambda \to \infty}{f_{\lambda,\alpha}(x)}.
\end{equation}
where $\overline{f}$ is the greatest lower semicontinuous function with $\overline{f} \leq f$.

\begin{theorem*}\label{my_approx_thm}
If $0 < \alpha \leq 1$, then $(f_{\lambda_1,\alpha})_{\lambda_2,\alpha} = f_{\min(\lambda_1,\lambda_2),\alpha}$, and $f_{\lambda,\alpha}$ is the greatest function among those $g \leq f$ for which
\begin{equation}
\forall x, y \in X : \vert g(x) - g(y) \vert \leq \lambda d(x,y)^\alpha
\end{equation}
(i.e. $g$ is Hölder continuous with exponent $\alpha$ and Hölder constant $\lambda$) holds.

The functions $f_{\lambda,1}$ satisfy
\begin{equation}
\forall x, y \in X : \vert f_{\lambda,1}(x)-f_{\lambda,1}(y) \vert \leq \lambda d(x, y)
\end{equation}
(i.e. they are Lipschitz continuous with Lipschitz constant $\lambda$).

If $\alpha \geq 1$, $f$ is non-negative and $0 < r \in \R$, $0 \leq M \in \R$ are constants, then there exists a constant $0 < c(\alpha, \lambda, M, r) \in R$ such that for $\forall z \in X$ it holds that if $f_{\lambda,\alpha}(z) \leq M$, then
\begin{equation}
\forall x, y \in X, d(x,z) \leq r, d(y,z) \leq r :
\vert f_{\lambda,\alpha}(x) - f_{\lambda,\alpha}(y) \vert \leq c d(x,y)
\end{equation}
(i.e. $f_{\lambda,\alpha}$ is locally Lipschitz continuous).
\end{theorem*}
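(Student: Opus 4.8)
The plan is to prove the three claims separately, each reducing to an elementary property of $t \mapsto t^\alpha$. For $0 < \alpha \leq 1$ the governing fact is subadditivity, $(s+t)^\alpha \leq s^\alpha + t^\alpha$ for $s,t \geq 0$, which together with the triangle inequality for $d$ makes $d(\cdot,\cdot)^\alpha$ itself a metric. I would first derive the Hölder estimate: for any $x,x'$ and any $y$, the bound $d(x,y)^\alpha \leq d(x,x')^\alpha + d(x',y)^\alpha$ gives $f(y)+\lambda d(x,y)^\alpha \leq \lambda d(x,x')^\alpha + \left(f(y)+\lambda d(x',y)^\alpha\right)$, and taking the infimum over $y$ yields $f_{\lambda,\alpha}(x) \leq \lambda d(x,x')^\alpha + f_{\lambda,\alpha}(x')$; symmetrizing gives $|f_{\lambda,\alpha}(x)-f_{\lambda,\alpha}(x')| \leq \lambda d(x,x')^\alpha$. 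The inequality $f_{\lambda,\alpha} \leq f$ follows by testing $y=x$, and maximality follows since any $g \leq f$ with $|g(x)-g(y)| \leq \lambda d(x,y)^\alpha$ satisfies $g(x) \leq f(y)+\lambda d(x,y)^\alpha$ for every $y$, hence $g \leq f_{\lambda,\alpha}$ after infimizing. The Lipschitz statement for $f_{\lambda,1}$ is exactly this estimate with exponent one.

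For the semigroup identity I would expand $(f_{\lambda_1,\alpha})_{\lambda_2,\alpha}(x)$ into the joint infimum $\inf_{y,z}\{f(z)+\lambda_1 d(y,z)^\alpha + \lambda_2 d(x,y)^\alpha\}$. Writing $\lambda = \min(\lambda_1,\lambda_2)$, the lower bound comes from $\lambda_1 d(y,z)^\alpha + \lambda_2 d(x,y)^\alpha \geq \lambda\left(d(y,z)^\alpha + d(x,y)^\alpha\right) \geq \lambda d(x,z)^\alpha$, giving $(f_{\lambda_1,\alpha})_{\lambda_2,\alpha} \geq f_{\lambda,\alpha}$; the reverse follows by testing $y=x$ and $y=z$ separately, yielding $\min(f_{\lambda_1,\alpha}(x),f_{\lambda_2,\alpha}(x))$, which equals $f_{\lambda,\alpha}(x)$ since $\lambda \mapsto f_{\lambda,\alpha}$ is nondecreasing.

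The substantive part is the local Lipschitz estimate for $\alpha \geq 1$, where subadditivity of $t^\alpha$ fails and is replaced by convexity, and where non-negativity of $f$ is used to localize the infimum uniformly in $z$. First, from $f_{\lambda,\alpha}(z) \leq M$ pick a near-minimizer $\tilde y$; since $f \geq 0$ one has $\lambda d(z,\tilde y)^\alpha \leq M+1$, so $d(z,\tilde y) \leq ((M+1)/\lambda)^{1/\alpha}$, and testing this $\tilde y$ at any $x$ with $d(x,z) \leq r$ produces a uniform bound $f_{\lambda,\alpha}(x) \leq M'$ with $M'$ depending only on $\alpha,\lambda,M,r$. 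Since $f \geq 0$ gives $f(y)+\lambda d(x,y)^\alpha \geq \lambda d(x,y)^\alpha$, any $y$ with $d(x,y) > R := (M'/\lambda)^{1/\alpha}$ exceeds $M'$ and cannot beat $f_{\lambda,\alpha}(x)$, so the defining infimum may be restricted to the ball $d(x,y) \leq R$.

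With this localization, for $x,y$ with $d(x,z),d(y,z) \leq r$ I would choose a near-minimizer $w$ for $x$ with $d(x,w) \leq R$, compare $f_{\lambda,\alpha}(y) \leq f(w)+\lambda d(y,w)^\alpha$ against $f(w)+\lambda d(x,w)^\alpha \approx f_{\lambda,\alpha}(x)$, and reduce to estimating $d(y,w)^\alpha - d(x,w)^\alpha$. Using $d(y,w) \leq R + d(x,y) \leq R+2r$, $d(y,w)-d(x,w) \leq d(x,y)$, and the tangent-line inequality $b^\alpha - a^\alpha \leq \alpha b^{\alpha-1}(b-a)$ valid for $0 \leq a \leq b$ when $\alpha \geq 1$, this difference is at most $\alpha(R+2r)^{\alpha-1} d(x,y)$; symmetrizing and letting the near-minimizer tolerance tend to zero gives $|f_{\lambda,\alpha}(x)-f_{\lambda,\alpha}(y)| \leq c\,d(x,y)$ with $c = \lambda\alpha(R+2r)^{\alpha-1}$ depending only on $\alpha,\lambda,M,r$. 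I expect the main obstacle to be precisely this uniformity in $z$: the estimate must hold with a single $c$ for every center $z$ at level $M$, and it is the hypothesis $f \geq 0$ that forces the localization radius $R$ to depend on $M$ alone rather than on $z$.
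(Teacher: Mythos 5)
Your proof is correct, but there is nothing in the paper to compare it against: this theorem belongs to the recited background material of the appendix, stated without proof and attributed to \citet{Dalmaso1993} and \citet{Jostetal2008}, so your argument supplies a self-contained proof where the paper deliberately gives none. Your route is the standard one and each reduction is sound: subadditivity $(s+t)^\alpha \leq s^\alpha + t^\alpha$ for $0<\alpha\leq 1$ gives both the H\"older estimate (which, combined with $f_{\lambda,\alpha}\leq f$ from testing $y=x$ and your maximality argument, settles the first claim) and the lower bound $(f_{\lambda_1,\alpha})_{\lambda_2,\alpha}\geq f_{\min(\lambda_1,\lambda_2),\alpha}$ in the semigroup identity, whose converse follows from the substitutions $y=x$ and $y=z$ together with monotonicity of $\lambda\mapsto f_{\lambda,\alpha}$; and for $\alpha\geq 1$ you correctly identify that the entire role of $f\geq 0$ is to localize near-minimizers uniformly in $z$, after which the convexity bound $b^\alpha-a^\alpha\leq \alpha b^{\alpha-1}(b-a)$ turns the comparison into a Lipschitz estimate with $c=\lambda\alpha(R+2r)^{\alpha-1}$ depending only on $\alpha,\lambda,M,r$, which is exactly the uniformity the statement demands.

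One detail needs a cushion. Your localization claim --- that the infimum defining $f_{\lambda,\alpha}(x)$ may be restricted to the closed ball of radius $R=(M'/\lambda)^{1/\alpha}$ around $x$ --- can fail in the borderline case $f_{\lambda,\alpha}(x)=M'$: every point outside that ball gives value strictly above $M'$, but nothing forces points inside it to attain values near $M'$, so an $\epsilon$-near-minimizer lying in that particular ball need not exist. The repair is immediate and costs nothing: for $\epsilon\leq 1$, any $\epsilon$-near-minimizer $w$ for $x$ satisfies $\lambda d(x,w)^\alpha\leq f(w)+\lambda d(x,w)^\alpha\leq f_{\lambda,\alpha}(x)+\epsilon\leq M'+1$, so it automatically lies in the ball of radius $R=((M'+1)/\lambda)^{1/\alpha}$, and the rest of your argument (the bound $d(y,w)\leq R+2r$, the tangent-line estimate, the symmetrization, and letting $\epsilon\to 0$) runs verbatim with this slightly larger $R$, which still depends only on $\alpha,\lambda,M,r$.
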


\subsection{Proofs} \label{appendix_proofs}

\begin{proposition}
Given $\nu \in \mathcal{M}^+(X)$ and a proper, convex and lower semicontinuous function  $\phi : \R \to \overline{\R}$ strictly convex at $1$ with $\phi(1)=0$, the function $I_{\phi,\nu} : (\mathcal{M}(X),\Vert.\Vert_H) \to \overline{\R}$ defined by
\begin{equation}
I_{\phi,\nu}(\mu) = 
D_\phi(\mu\Vert\nu)
\end{equation}
is proper, convex and lower semicontinuous, and its conjugate $I_{\phi,\nu}^* : (Lip(X),\Vert.\Vert_{\max}) \to \overline{\R}$ is
\begin{equation}
I_{\phi,\nu}^*(f) = \begin{cases}
\int \phi^* \circ f d\nu \text{ if } f(X) \subseteq [\phi'(-\infty),\phi'(\infty)],\\
\infty \text{ otherwise.}
\end{cases}
\end{equation}
\end{proposition}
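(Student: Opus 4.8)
The plan is to establish the three structural claims and the conjugate formula separately, since the conjugate can be read off directly from $I_{\phi,\nu}^*(f)=\sup_{\mu\in\mathcal{M}(X)}\{\int f\,d\mu-D_\phi(\mu\Vert\nu)\}$ without first using that $I_{\phi,\nu}$ is convex or lower semicontinuous. Properness is quick: $\phi(1)=0$ gives $D_\phi(\nu\Vert\nu)=\int\phi(1)\,d\nu=0$, so $\nu\in\dom I_{\phi,\nu}$; and choosing any $s_0\in\dom\phi^*$ (nonempty since $\phi$ is proper), the Young--Fenchel bound $\phi(t)\geq s_0 t-\phi^*(s_0)$ combined with $\phi'(-\infty)\leq s_0\leq\phi'(\infty)$ yields, after summing over the absolutely continuous and the two singular parts, $D_\phi(\mu\Vert\nu)\geq s_0\mu(X)-\phi^*(s_0)\nu(X)>-\infty$, using that every $\mu\in\mathcal{M}(X)$ and $\nu$ have finite total variation. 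Convexity in the first argument is the classical convexity of $f$-divergences (convexity of the perspective of $\phi$), which I would cite, and lower semicontinuity with respect to $\Vert\cdot\Vert_H$ I would obtain from the classical weak lower semicontinuity of $f$-divergences transported through the duality $Lip(X)=(\mathcal{M}(X),\Vert\cdot\Vert_H)^*$ of the Background appendix.

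The substance is the conjugate. Writing $g=d\mu_c/d\nu\in L^1(\nu)$ and using the Lebesgue--Jordan decomposition, the objective separates as
\[
\int(fg-\phi(g))\,d\nu+\int(f-\phi'(\infty))\,d\mu_s^+-\int(f-\phi'(-\infty))\,d\mu_s^-,
\]
and since the three groups of variables are carried by mutually singular sets they may be maximized independently. For the absolutely continuous part I would apply the Rockafellar interchange theorem for convex normal integrands on the decomposable space $L^1(\nu)$, giving $\sup_{g\in L^1(\nu)}\int(fg-\phi(g))\,d\nu=\int\phi^*\circ f\,d\nu$, with the maximizing density obtained by a measurable selection $g(x)\in\partial\phi^*(f(x))$.

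For the singular parts I would use that $f$ is continuous: if $f\leq\phi'(\infty)$ everywhere then $\int(f-\phi'(\infty))\,d\mu_s^+\leq0$ with supremum $0$ attained at $\mu_s^+=0$, whereas if $f(x_0)>\phi'(\infty)$ for some $x_0$ then the open set $\{f>\phi'(\infty)\}$ is nonempty and either $\nu$ charges it, so that the absolutely continuous term $\int\phi^*\circ f\,d\nu$ is already $+\infty$ (as $\phi^*\equiv+\infty$ beyond $\phi'(\infty)$), or it is $\nu$-null, in which case each of its points is a non-atom of $\nu$ and placing arbitrarily large point mass there drives the singular term to $+\infty$. The symmetric dichotomy governs $\mu_s^-$ and $\phi'(-\infty)$. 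Collecting cases gives $I_{\phi,\nu}^*(f)=\int\phi^*\circ f\,d\nu$ exactly when $f(X)\subseteq[\phi'(-\infty),\phi'(\infty)]$ and $+\infty$ otherwise; the elementary bound $\phi^*(s)\geq s$ (from $\phi(1)=0$) keeps the absolutely continuous term above $\int f\,d\nu>-\infty$, so no indeterminate $\infty-\infty$ arises.

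The main obstacle I anticipate is the rigorous interchange of supremum and integration for the absolutely continuous part --- the measurable selection of the optimal density and its $L^1(\nu)$-integrability --- which is precisely what forces the appeal to normal-integrand theory rather than a naive pointwise argument. A secondary subtlety is the boundary bookkeeping when $\phi^*$ equals $+\infty$ at an endpoint $\phi'(\pm\infty)$ or when $\phi'(\pm\infty)$ are themselves infinite, cases that must be checked so that the constraint-satisfied branch of the formula still returns the correct (possibly infinite) value.
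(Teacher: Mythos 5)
Your properness and convexity arguments are fine, and your interchange-based derivation of the conjugate (splitting off the singular parts and applying normal-integrand theory on $L^1(\nu)$) is a sound, self-contained substitute for the step the paper disposes of by citing \citet[Proposition~4.2.6]{Agrawaletal2020}. But this also means that the obstacle you single out as the main one --- measurable selection for the sup--integral interchange --- is exactly the part that can be handled by citation; the real difficulty of the proposition lies in the step you dispatch in a single clause.

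That step, lower semicontinuity, is a genuine gap as proposed. What must be proved is lsc for the $\Vert\cdot\Vert_H$-topology, which for a proper convex function is equivalent (Mazur applied to sublevel sets, or Fenchel--Moreau in the pairing of $\mathcal{M}(X)$ with $Lip(X)$) to $I_{\phi,\nu}^{**}=I_{\phi,\nu}$ with the biconjugate computed through \emph{Lipschitz} test functions. The classical weak lsc of $f$-divergences is lsc with respect to testing against $C(X)$ (and is typically stated for nonnegative measures, not all of $\mathcal{M}(X)$); it does not ``transport through the duality,'' because $\sigma(\mathcal{M}(X),C(X))$ is not coarser than the $\Vert\cdot\Vert_H$-topology. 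Integration against a continuous non-Lipschitz $g$ is not $\Vert\cdot\Vert_H$-continuous, and $\Vert\cdot\Vert_H$-convergent sequences need not be TV-bounded, hence need not converge against such $g$: with $d(x_n,y_n)=n^{-2}$, the measures $\mu_n=n(\delta_{x_n}-\delta_{y_n})$ satisfy $\Vert\mu_n\Vert_H\le\Vert\mu_n\Vert_{KR}\le n^{-1}\to 0$, yet $\int g\,d\mu_n\not\to 0$ for suitable H\"older $g$. Nor can you retreat to sublevel sets and use TV-boundedness there, since these need not be TV-bounded ($\phi=-\log$ is admissible under the hypotheses and has $\phi'(\infty)=0$, so positive singular mass costs nothing). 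Equivalently: the representation of $D_\phi(\cdot\Vert\nu)$ as a supremum over $f\in C(X)$ (Borwein--Lewis) exhibits it as a supremum of $\sigma(\mathcal{M}(X),C(X))$-continuous affine maps, which gives weak* lsc but not $\Vert\cdot\Vert_H$-lsc; upgrading it requires showing the supremum is unchanged when restricted to $Lip(X)$, which is circular with what is being proved. This is where the paper's proof spends nearly all of its effort: it decomposes $\mu$ into its parts on and off $\support(\nu)$, applies the $C(\support(\nu))$ representation there, uses inner regularity of $\mu_2^{\pm}$ and the Tietze extension theorem to build continuous functions with values $\beta_n^{\pm}\to\phi'(\pm\infty)$ on compact carriers of the singular mass, and only then passes from $C(X)$ to $Lip(X)$ by uniform density. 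Your proposal needs an argument of this kind; as written, the lsc claim does not follow from the results you invoke.
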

\begin{proof}
By \citet[Proposition~4.2.6]{Agrawaletal2020}, one has
\begin{equation}
\sup_{\mu \in \mathcal{M}(X)}\left\{ \int f d\mu - I_{\phi,\nu}(\mu) \right\} = \begin{cases}
\int \phi^* \circ f d\nu \text{ if } f(X) \subseteq [\phi'(-\infty),\phi'(\infty)],\\
\infty \text{ otherwise.}
\end{cases}
\end{equation}
for any bounded and measurable $f : X \to \R$. Any $f \in Lip(X)$ is bounded and measurable, hence the claimed conjugate relation. Clearly $I_{\phi,\nu}$ is convex and proper. For it to be lower semicontinuous, by \eqref{biconjugate} we only need to show that $I_{\phi,\nu}^{**} \geq I_{\phi,\nu}$, i.e. that there exists a sequence $(f_n)$ in $Lip(X)$ such that $\lim_{n \to \infty}{\int f_n d\mu - I_{\phi,\nu}^*(f_n)} \geq I_{\phi,\nu}(\mu)$.

By \citet[Theorem~2.7]{Borweinetal1993} and \eqref{biconjugate}, if $\support(\nu)=X$, then
\begin{equation}
I_{\phi,\nu}(\mu) = \sup_{f \in C(X)}\left\{ \int f d\mu - \int \phi^* \circ f d\nu \right\}
\end{equation}
holds with $C(X)$ being the space of continuous functions on $X$. By \citet[Lemma~3.2.11]{Agrawaletal2020}, the closure of $\dom \phi^*$ is $[\phi'(-\infty),\phi'(\infty)]$, so that for $\int \phi^* \circ f d\nu < \infty$ to hold, $f(X) \subseteq [\phi'(-\infty),\phi'(\infty)]$ is necessary, meaning that the above supremum is equivalent to
\begin{equation}
\sup_{f \in C(X), f(X) \subseteq [\phi'(-\infty),\phi'(\infty)]}\left\{ \int f d\mu - \int \phi^* \circ f d\nu \right\} =
\sup_{f \in C(X)}\left\{ \int f d\mu - I_{\phi,\nu}^*(f) \right\},
\end{equation}
i.e. there exists a sequence $(f_n)$ in $C(X)$ such that $\lim_{n \to \infty}{\int f_n d\mu - I_{\phi,\nu}^*(f_n)} = I_{\phi,\nu}(\mu)$.

In the general case with $\support(\nu) \subset X$, since the support of $\nu$ is closed by definition, being a closed subset of a compact metric space, it is a compact metric space itself with the restriction of the metric $d$. Consider the decomposition $\mu=\mu_1+\mu_2$ defined as $\mu_1(A)=\mu(A \cap \support(\nu))$ and $\mu_2(A)=\mu(A \setminus \support(\nu))$. Then one has
\begin{equation}
D_\phi(\mu\Vert\nu)=D_\phi(\mu_1\Vert\nu)+\phi'(\infty)\mu_2^+(X)-\phi'(-\infty)\mu_2^-(X),
\end{equation}
and by the above considerations there exists a sequence $(f_n)$ in $C(\support(\nu))$ such that $\lim_{n \to \infty}{\int f_n d\mu_1 - I_{\phi,\nu}^*(f_n)} = I_{\phi,\nu}(\mu_1)=D_\phi(\mu_1\Vert\nu)$ and $f_n(X) \subseteq [\phi'(-\infty),\phi'(\infty)]$ for $\forall n$. By the regularity of the measures $\mu_2^+, \mu_2^-$, one has
\begin{equation}
\mu_2^\pm(A) = \sup_{B \in \mathcal{B}(X), B \subseteq A, B \text{ compact}}\left\{\mu_2^\pm(B)\right\}
\end{equation}
for $\forall A \in \mathcal{B}(X)$, i.e. there exist sequences $(B_n^\pm)$ in $\mathcal{B}(X)$ such that $\lim_{n \to \infty} \mu_2^\pm(B_n) = \mu_2^\pm(X)$ with $(B_n^\pm)$ compact, and therefore closed. By the definition of $\mu_2$, we can assume without loss of generality that $B_n^\pm \cap \support(\nu)=\emptyset$ for $\forall n$, and by the definition of the Jordan decomposition, $B_n^+ \cap B_n^- = \emptyset$ can be assumed as well. Define $\tilde{f}_n : \support(\nu) \cup B_n^+ \cup B_n^- \to \R$ as
\begin{equation}
\tilde{f}_n = \begin{cases}
f_n(x) \text{ if } x \in \support(\nu),\\
\beta_n^+ \text{ if } x \in B_n^+,\\
\beta_n^- \text{ if } x \in B_n^-
\end{cases}
\end{equation}
with sequences $(\beta_n^\pm) \subset \R \cap [\phi'(-\infty),\phi'(\infty)]$ such that $\lim_{n\to\infty}\beta_n^\pm = \phi'(\pm\infty)$ and $\forall n : \phi'(-\infty) < \beta_n^\pm < \phi'(\infty)$. Since $\tilde{f}_n$ is clearly continuous, by the Tietze extension theorem, there exists a continuous extension $\hat{f}_n \in C(X)$ agreeing with $\tilde{f}_n$ on $\support(\nu) \cup B_n^+ \cup B_n^-$ with $\hat{f}_n(X) \subseteq [\phi'(-\infty),\phi'(\infty)]$, for which one has
\begin{multline}
\lim_{n \to \infty} \int \hat{f}_n d\mu - I_{\phi,\nu}^*(\hat{f}_n)= 
\lim_{n \to \infty} \int \hat{f}_n d\mu - \int \phi^* \circ \hat{f}_n d\nu \\=
\lim_{n \to \infty} \int f_n d\mu_1 + \int \hat{f}_n d\mu_2 - \int \phi^* \circ f_n d\nu =
D_\phi(\mu_1\Vert\nu) + \lim_{n \to \infty} \int \hat{f}_n d\mu_2 \\=
D_\phi(\mu_1\Vert\nu) + \lim_{n \to \infty} \beta_n^+\mu_2^+(B_n^+)-\beta_n^-\mu_2^-(B_n^-)+\int_{X \setminus B_n^+} \hat{f}_n d\mu_2^+-\int_{X \setminus B_n^-} \hat{f}_n d\mu_2^-.
\end{multline}

If $\phi'(\pm\infty)$ are finite, then $\hat{f}_n$ is bounded uniformly independent of $n$, implying that
\begin{equation}
\lim_{n \to \infty} \int_{X \setminus B_n^+} \hat{f}_n d\mu_2^+-\int_{X \setminus B_n^-} \hat{f}_n d\mu_2^- = 0.
\end{equation}
If one of $\phi'(\pm\infty)$ is infinite, say $\phi'(\infty)$, then
\begin{equation}
\lim_{n \to \infty} \beta_n^+\mu_2^+(B_n^+)-\beta_n^-\mu_2^-(B_n^-)+\int_{X \setminus B_n^+} \hat{f}_n d\mu_2^+-\int_{X \setminus B_n^-} \hat{f}_n d\mu_2^- =  \begin{cases}
\infty \text{ if } \mu_2^+ \neq 0, \\
0 \text{ otherwise},
\end{cases}
\end{equation}
with the case $\phi'(-\infty)=-\infty$ following similarly. If $\phi'(\pm\infty)$ are both infinite, then the above limit is $\infty$ if $\mu_2 \neq 0$ and $0$ otherwise, meaning that in any case
\begin{equation} 
\lim_{n \to \infty} \beta_n^+\mu_2^+(B_n^+)-\beta_n^-\mu_2^-(B_n^-)+\int_{X \setminus B_n^+} \hat{f}_n d\mu_2^+-\int_{X \setminus B_n^-} \hat{f}_n d\mu_2^- = \phi'(\infty)\mu_2^+(X)-\phi'(-\infty)\mu_2^-(X),
\end{equation}
so that one has
\begin{equation}
\lim_{n \to \infty} \int \hat{f}_n d\mu - I_{\phi,\nu}^*(\hat{f}_n)=D_\phi(\mu_1\Vert\nu)+\phi'(\infty)\mu_2^+(X)-\phi'(-\infty)\mu_2^-(X) = D_\phi(\mu\Vert\nu)=I_{\phi,\nu}(\mu).
\end{equation}
This proves that
\begin{equation}
I_{\phi,\nu}(\mu) \leq \sup_{f \in C(X)}\left\{ \int f d\mu - I_{\phi,\nu}^*(f) \right\}
\end{equation}
holds for $\nu$ with $\support(\nu) \subset X$. Since $X$ is compact, by the Stone-Weierstrass theorem $Lip(X)$ is dense in $C(X)$, hence
\begin{equation}
\sup_{f \in C(X)}\left\{ \int f d\mu - I_{\phi,\nu}^*(f) \right\} = \sup_{f \in Lip(X)}\left\{ \int f d\mu - I_{\phi,\nu}^*(f) \right\} = I_{\phi,\nu}^{**}(\mu),
\end{equation}
giving the claim $I_{\phi,\nu}(\mu) \leq I_{\phi,\nu}^{**}(\mu)$.
\end{proof}

We get the non-tight variational representation over $Lip(X)$ as a corollary by \eqref{biconjugate}.
\begin{corollary}
Given $\nu \in \mathcal{M}^+(X)$, $\mu \in \mathcal{M}(X)$ and a proper, convex and lower semicontinuous function  $\phi : \R \to \overline{\R}$ strictly convex at $1$ with $\phi(1)=0$, one has
\begin{equation}
D_\phi(\mu\Vert\nu)=\sup_{f \in Lip(X), f(X) \subseteq [\phi'(-\infty),\phi'(\infty)]}\left\{ \int f d\mu - \int \phi^* \circ f d\nu \right\}.
\end{equation}
\end{corollary}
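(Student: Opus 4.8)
The plan is to read off the claim as a direct consequence of the Fenchel--Moreau biconjugation theorem applied to the functional $I_{\phi,\nu}$ studied in the preceding proposition. That proposition already carries out all the substantive work: it establishes that $I_{\phi,\nu}(\mu)=D_\phi(\mu\Vert\nu)$, viewed on $(\mathcal{M}(X),\Vert.\Vert_H)$, is proper, convex and lower semicontinuous, and it computes the conjugate $I_{\phi,\nu}^*$ on the dual space $(Lip(X),\Vert.\Vert_{\max})$ explicitly as $\int \phi^* \circ f\,d\nu$ when $f(X)\subseteq[\phi'(-\infty),\phi'(\infty)]$ and $+\infty$ otherwise. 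Consequently nothing beyond bookkeeping remains.

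First I would invoke the duality $(Lip(X),\Vert.\Vert_{\max}) \cong (\mathcal{M}(X),\Vert.\Vert_H)^*$ supplied by the isometric isomorphism theorem, so that the dual pairing is exactly $\langle \mu, f\rangle = \int f\,d\mu$ and the biconjugate $I_{\phi,\nu}^{**}$ is the function on $\mathcal{M}(X)$ given, per \eqref{conjugate_dual}, by $I_{\phi,\nu}^{**}(\mu)=\sup_{f\in Lip(X)}\{\int f\,d\mu - I_{\phi,\nu}^*(f)\}$. Since $I_{\phi,\nu}$ is proper, convex and lower semicontinuous, the Fenchel--Moreau theorem \eqref{biconjugate} then yields $D_\phi(\mu\Vert\nu)=I_{\phi,\nu}(\mu)=I_{\phi,\nu}^{**}(\mu)$.

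Finally I would substitute the explicit form of $I_{\phi,\nu}^*$ into this supremum. For any $f\in Lip(X)$ with $f(X)\not\subseteq[\phi'(-\infty),\phi'(\infty)]$ one has $I_{\phi,\nu}^*(f)=\infty$, so the corresponding term equals $-\infty$ and may be discarded without affecting the supremum (the restricted index set is nonempty, as it contains the constant function $1$ since $1\in\dom\phi$). Restricting to the remaining $f$, where $I_{\phi,\nu}^*(f)=\int \phi^*\circ f\,d\nu$, produces precisely the claimed formula.

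There is essentially no obstacle here, as the lower semicontinuity of $I_{\phi,\nu}$---the only delicate ingredient, which required the Tietze extension and Stone--Weierstrass arguments---was already dispatched in the proposition. The single point warranting a moment's care is the identification of the dual pairing through the isometric isomorphism, which is what guarantees that the biconjugate is genuinely a supremum over $Lip(X)$ rather than over some larger space of functionals, so that the index set $f(X)\subseteq[\phi'(-\infty),\phi'(\infty)]$ appearing in the statement is indeed the correct one.
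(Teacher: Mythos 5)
Your proof is correct and is essentially the paper's own argument: the paper disposes of this corollary in a single line, invoking Fenchel--Moreau biconjugation \eqref{biconjugate} for $I_{\phi,\nu}$ together with the identification $(Lip(X),\Vert.\Vert_{\max}) \cong (\mathcal{M}(X),\Vert.\Vert_H)^*$, exactly the route you take, with the substitution of the explicit conjugate being pure bookkeeping. One inessential slip: your parenthetical justification of nonemptiness is wrong, since $1 \in \dom\phi$ does not place the constant function $1$ in the constraint set --- the relevant interval $[\phi'(-\infty),\phi'(\infty)]$ is the closure of $\dom\phi^*$, not of $\dom\phi$ (e.g.\ for $\phi(x)=\vert x-1\vert/2$, which satisfies all the hypotheses, that interval is $[-1/2,1/2]$) --- but this remark is never needed, because discarding terms equal to $-\infty$ cannot change a supremum, and in any case nonemptiness follows from properness of $\phi^*$ by taking a constant function with value in $\dom\phi^*$.
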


\begin{proposition}
Given $\nu \in P(X)$ and a proper, convex and lower semicontinuous function  $\phi : \R \to \overline{\R}$ strictly convex at $1$ with $\phi(1)=0$ and $1 \in \relativeinterior \dom \phi$, the function $D_{\phi,\nu} : (\mathcal{M}(X),\Vert.\Vert_H) \to \overline{\R}$ defined by
\begin{equation}
D_{\phi,\nu}(\mu) = 
D_\phi(\mu\Vert\nu) + i_{P(X)}(\mu)
\end{equation}
is proper, convex and lower semicontinuous, and its conjugate $D_{\phi,\nu}^* : (Lip(X),\Vert.\Vert_{\max}) \to \overline{\R}$ is
\begin{equation}
D_{\phi,\nu}^*(f) = 
\min_{\gamma \in \R,\gamma \geq \sup f(X) - \phi'(\infty)} \left\{\int \phi^*_+ \circ (f - \gamma) d\nu + \gamma \right\},
\end{equation}
for which $D_{\phi,\nu}^*(f+C)=D_{\phi,\nu}^*(f)+C$ and $D_{\phi,\nu}^*(f_1) \leq D_{\phi,\nu}^*(f_2)$ holds for $\forall C \in \R$ and $f_1 \leq f_2$, meaning that $D_{\phi,\nu}^*$ is a topical function.
\end{proposition}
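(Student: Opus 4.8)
The plan is to reduce everything to the previous proposition (applied to the function $\phi_+$) together with the conjugate-of-a-sum formula \eqref{plusunivariate_conjugate}. The first observation is that restricting to probability measures splits as $D_{\phi,\nu} = I_{\phi_+,\nu} + i_{\mathcal{M}(X,1)}$, where $I_{\phi_+,\nu}(\mu) = D_{\phi_+}(\mu\Vert\nu)$. Indeed, on a nonnegative measure the Radon–Nikodym derivative is nonnegative and the singular part satisfies $\mu_s^-=0$, so $\phi_+$ agrees with $\phi$ wherever it is evaluated and $D_{\phi_+}(\mu\Vert\nu)=D_\phi(\mu\Vert\nu)$; on a measure with a nontrivial negative part either the absolutely continuous term or the term $-\phi_+'(-\infty)\mu_s^-(X)$ is $+\infty$, matching $i_{\mathcal{M}^+(X)}$. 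Thus adding $i_{P(X)} = i_{\mathcal{M}^+(X)}+i_{\mathcal{M}(X,1)}$ to $D_\phi(\cdot\Vert\nu)$ is the same as adding $i_{\mathcal{M}(X,1)}$ to $D_{\phi_+}(\cdot\Vert\nu)$.

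Next I would check that $\phi_+$ inherits the hypotheses of the previous proposition: it is proper (its domain is $\dom\phi\cap\R_+\ni 1$), convex and lower semicontinuous as a sum of such, strictly convex at $1$ since $i_{\R_+}$ vanishes near $1$, and $\phi_+(1)=\phi(1)=0$. Applying that proposition to $\phi_+$ yields that $I_{\phi_+,\nu}$ is proper, convex and lower semicontinuous with conjugate $I_{\phi_+,\nu}^*(h)=\int\phi_+^*\circ h\,d\nu$ when $h(X)\subseteq[\phi_+'(-\infty),\phi_+'(\infty)]$ and $+\infty$ otherwise; since $\phi_+'(\infty)=\phi'(\infty)$ and $\phi_+'(-\infty)=-\infty$, this domain condition is exactly $\sup h(X)\le\phi'(\infty)$. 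The required properness, convexity and lower semicontinuity of $D_{\phi,\nu}$ itself then follow at once, as it is the sum of the proper lsc convex function $I_{\phi_+,\nu}$ and the indicator of the affine hyperplane $\mathcal{M}(X,1)$, which is closed because $\mu\mapsto\mu(X)$ is continuous (it is integration against the constant function $1\in Lip(X)$); the two summands share the domain point $\nu$, where the value is $0$.

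For the conjugate I would write $i_{\mathcal{M}(X,1)} = g\circ A$ with the continuous linear map $A\mu=\mu(X)$, whose adjoint is $A^*\gamma=\gamma\cdot 1$, and $g=i_{\{1\}}$, whose conjugate is $g^*(\gamma)=\gamma$. Formula \eqref{plusunivariate_conjugate} then gives $D_{\phi,\nu}^*(f)=\min_{\gamma\in\R}\{I_{\phi_+,\nu}^*(f-\gamma)+\gamma\}$, and substituting the domain-restricted conjugate above turns the constraint $\sup(f-\gamma)(X)\le\phi'(\infty)$ into $\gamma\ge\sup f(X)-\phi'(\infty)$, producing precisely the claimed minimum (attained, by the theorem). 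The hypothesis $1\in\relativeinterior\dom\phi$ enters exactly here, to verify the constraint qualification $0\in\relativeinterior(A(\dom I_{\phi_+,\nu})-\dom g)$: using the measures $c\nu$ for $c\in\dom\phi\cap\R_+$, which satisfy $D_{\phi_+}(c\nu\Vert\nu)=\phi(c)<\infty$ and have total mass $c$, one sees that $A(\dom I_{\phi_+,\nu})$ contains $\dom\phi\cap\R_+$, hence an open interval around $1$ (or, in the degenerate case $\dom\phi=\{1\}$, the singleton $\{1\}$), so $1$ lies in its relative interior and $0\in\relativeinterior(A(\dom I_{\phi_+,\nu})-1)$.

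Finally, the topical properties are read off directly. Constant additivity follows from the change of variables $\gamma\mapsto\gamma+C$ in the minimum, which shifts both the constraint and the objective by $C$, giving $D_{\phi,\nu}^*(f+C)=D_{\phi,\nu}^*(f)+C$. Monotonicity is cleanest from the conjugate itself, $D_{\phi,\nu}^*(f)=\sup_{\mu\in P(X)}\{\int f\,d\mu-D_\phi(\mu\Vert\nu)\}$: since every $\mu\in P(X)$ is nonnegative, $f_1\le f_2$ forces $\int f_1\,d\mu\le\int f_2\,d\mu$ for each admissible $\mu$, hence $D_{\phi,\nu}^*(f_1)\le D_{\phi,\nu}^*(f_2)$. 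I expect the one genuinely delicate step to be the constraint qualification, together with the bookkeeping of the recession values $\phi_+'(\pm\infty)$, which simultaneously control the domain of $\phi_+^*$ and the final constraint on $\gamma$; everything else is a routine assembly of the previous proposition and \eqref{plusunivariate_conjugate}.
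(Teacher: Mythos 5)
Your proposal is correct and follows essentially the same route as the paper: the decomposition $D_{\phi,\nu}=I_{\phi_+,\nu}+i_{\{1\}}(\mu(X))$, the sum-rule conjugate formula \eqref{plusunivariate_conjugate} with the mass functional $A\mu=\mu(X)$ and adjoint $\gamma\mapsto\gamma\cdot 1$, the constraint qualification via $1\in\relativeinterior\dom\phi$ split into the cases $\dom\phi=\{1\}$ and $\dom\phi\supseteq(1-a,1+a)$, and the vacuousness of the lower constraint from $\phi_+'(-\infty)=-\infty$. The only cosmetic differences are that you prove the decomposition directly where the paper cites \citet[Lemma~4.3.1]{Agrawaletal2020}, and you derive constant additivity by the change of variables $\gamma\mapsto\gamma+C$ in the dual minimum rather than from the primal supremum over $P(X)$ as the paper does.
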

\begin{proof}
By \citet[Lemma~4.3.1]{Agrawaletal2020}, one has $D_{\phi,\nu}(\mu)=I_{\phi_+,\nu}(\mu)+i_{\{1\}}(\mu(X))$ with $\phi_+=\phi+i_{\R_+}$. For the constant function $1 \in Lip(X)$, the map $(\mu \to \mu(X) = \langle \mu, 1 \rangle = \int_X 1 d\mu) : \mathcal{M}(X) \to \R$ is linear and continuous, and its adjoint is clearly $(\gamma \to (x \to \gamma)) : \R \to Lip(X)$, mapping constants in $\R$ to the corresponding constant functions in $Lip(X)$. Since $i_{\{ 1 \}} : \R \to \overline{\R}$ is the indicator function of the set $\{ 1 \}$ with its conjugate $i_{\{ 1 \}}^* = (s \to \sup_{t \in \{1\}}{\{ st \}} = s)$ being the identity function, by \eqref{plusunivariate_conjugate} one has
\begin{equation}
D_{\phi,\nu}^*(f) = \min_{\gamma \in \R}{\{ I_{\phi_+,\nu}^*(f - \gamma) + \gamma \}},
\end{equation}
where the minimum can be equivalently taken over those $\gamma \in \R$ for which $I_{\phi_+,\nu}^*(f - \gamma)$ can be finite, i.e. for which $\forall x \in X : \phi_+'(-\infty) \leq f(x) - \gamma \leq \phi_+'(\infty)$ holds for $\forall x \in X$, with the first half being vacuous since $\phi_+'(-\infty)=-\infty$, leading to the claimed conjugate relation. Since $D_{\phi,\nu}$ is the sum of two lower semicontinuous functions, it is itself lower semicontinuous. It is clearly proper and convex as well.

To see that the conditions of \eqref{plusunivariate_conjugate} hold, notice that $\nu \in \dom I_{\phi_+,\nu}$ always holds, while $\dom i_{\{ 1 \}} = \{1\}$ so that $P(X) \subset (\mu \to \mu(X))^{-1} \dom i_{\{ 1 \}} = \{\mu \in \mathcal{M}(X) : \mu(X)=1 \}$, meaning that $\nu \in \dom I_{\phi_+,\nu} \cap (\mu \to \mu(X))^{-1} \dom i_{\{ 1 \}} \neq \emptyset$. Since $1 \in \relativeinterior \dom \phi$ by assumption, either $\dom \phi = \{1\}$, or $\dom \phi$ contains a neighborhood of $1$. In the former case, $\phi = i_{\{1\}}$, and one has $\dom I_{\phi_+,\nu} \cap (\mu \to \mu(X))^{-1} \dom i_{\{ 1 \}} = \{\nu\}$, so that $(\mu \to \mu(X))\dom I_{\phi_+,\nu} - \dom i_{\{ 1 \}} = \{ 1 \} - \{ 1 \} = \{ 0 \}$, for which $\relativeinterior \{ 0 \} = \{ 0 \}$, and the condition holds.

For other choices of $\phi$, there exists $0 < a \in \R$ such that $(1-a,1+a) \subseteq \dom \phi$, and one has for $\forall b \in (1-a,1+a)$ that $I_{\phi_+,\nu}(b\nu)=\int \phi_+(b) d\nu = \phi_+(b) < \infty$, so that $\{b\nu : b \in (1-a,1+a) \} \subseteq \dom I_{\phi_+,\nu}$. This implies that $(1-a,1+a) \subseteq (\mu \to \mu(X))\dom I_{\phi_+,\nu}$, further implying that $(-a,a) \subseteq (\mu \to \mu(X))\dom I_{\phi_+,\nu} - \dom \phi_+$, for which $0 \in \relativeinterior (\mu \to \mu(X))\dom I_{\phi_+,\nu} - \dom \phi_+$ clearly holds, proving that the conditions of \eqref{plusunivariate_conjugate} hold.

The constant additivity property follows from
\begin{multline}
D_{\phi,\nu}^*(f+C)=\sup_{\mu \in \mathcal{M}(X)}{\int (f+C) d\mu - D_{\phi,\nu}(\mu)}=\sup_{\mu \in P(X)}{\int (f+C) d\mu - D_{\phi,\nu}(\mu)}\\=\sup_{\mu \in P(X)}{\int f d\mu + \int C d\mu - D_{\phi,\nu}(\mu)}=\sup_{\mu \in P(X)}{\int f d\mu + C - D_{\phi,\nu}(\mu)}=D_{\phi,\nu}^*(f)+C,
\end{multline}
and the other from
\begin{equation}
f_1 \leq f_2 \implies \int f_1 d\mu - D_{\phi,\nu}(\mu) \leq \int f_2 d\mu - D_{\phi,\nu}(\mu)
\end{equation}
for $\forall \mu \in P(X)$. These properties define topical functions \citep{Mohebi2005}.
\end{proof}

\begin{proposition}
Given $\nu \in P(X)$, $\omega \in \mathcal{M}(X)$ with $\omega(X)=1$ and a proper, convex and lower semicontinuous function  $\phi : \R \to \overline{\R}$ strictly convex at $1$ with $\phi(1)=0$ and $1 \in \relativeinterior \dom \phi$, the function $D_{\phi,\nu,\omega} : (\mathcal{M}(X,0),\Vert.\Vert_{KR}) \to \overline{\R}$ defined by
\begin{equation}
D_{\phi,\nu,\omega}(\mu) = D_\phi(\mu+\omega\Vert\nu)+i_{P(X)}(\mu+\omega)
\end{equation}
is proper, convex and lower semicontinuous, and its conjugate $D_{\phi,\nu,\omega}^* : (Lip(X,x_0),\Vert.\Vert_L) \to \overline{\R}$ is
\begin{equation}
D_{\phi,\nu,\omega}^*(f) = 
\min_{\gamma \in \R,\gamma \geq \sup f(X) - \phi'(\infty)} \left\{\int \phi^*_+ \circ (f - \gamma) d\nu + \gamma \right\} - \int f d\omega.
\end{equation}
\end{proposition}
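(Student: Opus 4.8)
The plan is to recognize $D_{\phi,\nu,\omega}$ as the precomposition of the function $D_{\phi,\nu}$ from the preceding proposition with the affine shift $T : \mu \mapsto \mu + \omega$, and then to obtain its conjugate by a change of variables that reduces the whole computation to the already-established conjugate $D_{\phi,\nu}^*$. Concretely, since $\omega(X) = 1$, the map $T$ is a bijection of $\mathcal{M}(X,0)$ onto the affine set $\mathcal{M}(X,1) = \{ \rho \in \mathcal{M}(X) : \rho(X) = 1 \}$, and by definition $D_{\phi,\nu,\omega}(\mu) = D_{\phi,\nu}(T\mu)$.

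First I would dispatch the three structural properties. Convexity is immediate: $D_{\phi,\nu}$ is convex by the preceding proposition and $T$ is affine, so the composition is convex. For properness, I would evaluate at $\nu - \omega \in \mathcal{M}(X,0)$ (note $(\nu-\omega)(X) = 0$): since $T(\nu - \omega) = \nu \in P(X)$ and $\phi(1)=0$, one gets $D_{\phi,\nu,\omega}(\nu - \omega) = D_\phi(\nu\Vert\nu) = 0$, so $\dom D_{\phi,\nu,\omega} \neq \emptyset$, while the value $-\infty$ is excluded by the nonnegativity of $D_\phi$ on $P(X)$. For lower semicontinuity I would argue that $T$ is continuous from $(\mathcal{M}(X,0),\Vert.\Vert_{KR})$ into $(\mathcal{M}(X),\Vert.\Vert_H)$: translation by $\omega$ is an isometry for $\Vert.\Vert_H$, and by the background results the $\Vert.\Vert_H$-subspace topology on $\mathcal{M}(X,0)$ coincides with the $\Vert.\Vert_{KR}$-topology; composing this continuous $T$ with the lower semicontinuous $D_{\phi,\nu}$ gives that $D_{\phi,\nu,\omega}$ is lower semicontinuous.

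Next I would compute the conjugate directly from the definition. For $f \in Lip(X,x_0)$, using the pairing $\langle \mu, f \rangle = \int f d\mu$,
\[
D_{\phi,\nu,\omega}^*(f) = \sup_{\mu \in \mathcal{M}(X,0)} \left\{ \int f d\mu - D_{\phi,\nu}(\mu + \omega) \right\}.
\]
Substituting $\rho = \mu + \omega$ and using $\int f d\mu = \int f d\rho - \int f d\omega$ turns this into
\[
\sup_{\rho \in \mathcal{M}(X,1)} \left\{ \int f d\rho - D_{\phi,\nu}(\rho) \right\} - \int f d\omega.
\]
Because $\dom D_{\phi,\nu} \subseteq P(X) \subset \mathcal{M}(X,1)$, restricting the supremum to $\mathcal{M}(X,1)$ rather than all of $\mathcal{M}(X)$ does not change its value, so the first term equals $D_{\phi,\nu}^*(f)$; this is legitimate since $Lip(X,x_0) \subset Lip(X)$ and the formula for $D_{\phi,\nu}^*$ depends only on $f$ as a function. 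Inserting the expression for $D_{\phi,\nu}^*$ from the preceding proposition yields exactly the claimed formula.

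I expect the only genuine obstacle to lie in the lower-semicontinuity step, namely the topological bookkeeping: one must carefully invoke the equivalence of the $\Vert.\Vert_{KR}$-topology and the restricted $\Vert.\Vert_H$-topology on $\mathcal{M}(X,0)$ so that $T$ is genuinely continuous between the two distinct normed spaces involved, since the domain carries $\Vert.\Vert_{KR}$ while the enlarged codomain carries $\Vert.\Vert_H$. The conjugate computation itself is a routine change of variables once the reduction to $D_{\phi,\nu}^*$ is in place, and properness and convexity are immediate.
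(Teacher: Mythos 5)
Your proposal is correct and follows essentially the same route as the paper: both reduce the conjugate to $D_{\phi,\nu}^*$ from the preceding proposition via the translation by $\omega$ (the paper cites its abstract rule \eqref{translateconjugate}, you perform the substitution $\rho = \mu + \omega$ by hand), both exploit that the effective domain forces $\mu(X)=0$ (equivalently $\dom D_{\phi,\nu} \subset \mathcal{M}(X,1)$) to pass between suprema over $\mathcal{M}(X)$ and over the affine slice, and both obtain lower semicontinuity from the equivalence of the $\Vert.\Vert_{KR}$-topology with the $\Vert.\Vert_H$-subspace topology on the closed subspace $\mathcal{M}(X,0)$. Your treatment of properness and the continuity of the shift map is slightly more explicit than the paper's, but there is no substantive difference in the argument.
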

\begin{proof}
By the previous proposition and \eqref{translateconjugate}, the conjugate relation
\begin{equation}
(\mu \to D_{\phi,\nu}(\mu+\omega))^* = \left(f \to D_{\phi,\nu}^*(f) - \int f d\omega\right)
\end{equation}
holds. Notice that for $D_{\phi,\nu}(\mu+\omega)$ to be finite, $\mu(X)=0$ must hold, since $(\mu+\omega)(X)=1$ is needed, and $\omega(X)=1$ by assumption, hence for $f \in Lip(X)$, one has
\begin{equation}
\sup_{\mu \in \mathcal{M}(X)}\left\{ \int f d\mu - D_{\phi,\nu}(\mu+\omega) \right\} = \sup_{\mu \in \mathcal{M}(X,0)}\left\{ \int f d\mu - D_{\phi,\nu}(\mu+\omega) \right\},
\end{equation}
which is exactly the definition of the value at $f$ of the conjugate of the restriction of $(\mu \to D_{\phi,\nu}(\mu+\omega))$ to $\mathcal{M}(X,0)$. This restriction is clearly proper and convex, and lower semicontinuous as well by being the restriction of a lower semicontinuous function to a closed subspace.
\end{proof}

As a corollary, we get the following dual representation of $D_\phi$ on the space of probability measures, which is the tightest to date, in the sense that the supremum is taken over a set of functions that is a proper subset of those of the previous dual representation \cite{Agrawaletal2020}, which included all bounded and measurable functions. Our representation is over the space of Lipschitz functions vanishing at an arbitrary base point.

\begin{corollary}
Given $\mu,\nu \in P(X)$ and a proper, convex and lower semicontinuous function  $\phi : \R \to \overline{\R}$ strictly convex at $1$ with $\phi(1)=0$ and $1 \in \relativeinterior \dom \phi$, $D_\phi(\mu \Vert \nu)$ has the equivalent variational representation
\begin{equation}
\sup_{f \in Lip(X,x_0)}\left\{ \int f d\mu - \min_{\gamma \in \R,\gamma \geq \sup f(X) - \phi'(\infty)} \left\{\int \phi^*_+ \circ (f - \gamma) d\nu + \gamma \right\} \right\}.
\end{equation}
\end{corollary}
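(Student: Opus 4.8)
The plan is to deduce this representation directly from the preceding proposition via the Fenchel-Moreau biconjugation theorem. That proposition shows, for any $\omega \in \mathcal{M}(X)$ with $\omega(X)=1$, that $D_{\phi,\nu,\omega}$ is a proper, convex and lower semicontinuous function on the normed (hence separated, locally convex) space $(\mathcal{M}(X,0),\Vert.\Vert_{KR})$, and identifies its conjugate on $(Lip(X,x_0),\Vert.\Vert_L)$, the latter being the topological dual of $\mathcal{M}(X,0)$ via the isometric isomorphism recorded among the functional-analytic preliminaries. All hypotheses of Fenchel-Moreau are therefore met, so $D_{\phi,\nu,\omega}=D_{\phi,\nu,\omega}^{**}$, i.e.
\[
D_{\phi,\nu,\omega}(\zeta) = \sup_{f \in Lip(X,x_0)}\left\{\int f \, d\zeta - D_{\phi,\nu,\omega}^*(f)\right\}
\]
for every $\zeta \in \mathcal{M}(X,0)$, where the dual pairing is $\langle \zeta, f\rangle = \int f\,d\zeta$.

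The one real step is to choose the data so that the left-hand side becomes $D_\phi(\mu\Vert\nu)$. I would set $\omega=\mu$, which is admissible because $\mu \in P(X)$ gives $\mu(X)=1$, and evaluate the biconjugate at the zero measure $\zeta=0 \in \mathcal{M}(X,0)$. By the definition of $D_{\phi,\nu,\omega}$,
\[
D_{\phi,\nu,\mu}(0) = D_\phi(0+\mu\Vert\nu) + i_{P(X)}(0+\mu) = D_\phi(\mu\Vert\nu),
\]
the indicator vanishing since $\mu \in P(X)$. As $\int f\,d0 = 0$, the biconjugation identity collapses to $D_\phi(\mu\Vert\nu) = \sup_{f \in Lip(X,x_0)}\{-D_{\phi,\nu,\mu}^*(f)\}$.

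Substituting the conjugate formula from the preceding proposition,
\[
D_{\phi,\nu,\mu}^*(f) = \min_{\gamma \geq \sup f(X) - \phi'(\infty)}\left\{\int \phi_+^* \circ (f-\gamma)\,d\nu + \gamma\right\} - \int f\,d\mu,
\]
turns $-\int f\,d\mu$ into $+\int f\,d\mu$ and reproduces exactly the claimed representation. I expect no genuine obstacle here: the analytic content --- lower semicontinuity of $D_{\phi,\nu,\omega}$, the explicit conjugate, and the duality $(Lip(X,x_0),\Vert.\Vert_L) \cong (\mathcal{M}(X,0),\Vert.\Vert_{KR})^*$ --- has already been established, so the corollary only requires selecting the measure at which to evaluate the biconjugate. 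As a sanity check one may instead keep $\omega$ arbitrary and evaluate at $\zeta=\mu-\omega$; the $\int f\,d\omega$ contributions coming from $\int f\,d\zeta$ and from the conjugate cancel, confirming that both the base measure $\omega$ and the vanishing point $x_0$ are immaterial to the value of the supremum.
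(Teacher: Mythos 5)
Your proposal is correct and follows essentially the same route as the paper: both invoke Fenchel--Moreau biconjugation of $D_{\phi,\nu,\omega}$ over the dual pair $(\mathcal{M}(X,0),\Vert\cdot\Vert_{KR})$ and $(Lip(X,x_0),\Vert\cdot\Vert_L)$, using the conjugate formula from the preceding proposition. The only cosmetic difference is that the paper keeps $\omega$ arbitrary and evaluates the biconjugate at $\mu-\omega$ (so the $\int f\,d\omega$ terms cancel), which is precisely the variant you describe in your closing sanity check, while your main argument takes $\omega=\mu$ and evaluates at $0$ --- the two are trivially equivalent.
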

\begin{proof}
Let $\mu \in P(X)$. By \eqref{biconjugate} and the previous proposition,
\begin{multline}
D_{\phi,\nu,\omega}(\mu-\omega) = D_{\phi,\nu,\omega}^{**}(\mu-\omega) \\
= \sup_{f \in Lip(X,x_0)}\left\{ \int f d(\mu-\omega) 
- \min_{\sup f(X) - \phi'(\infty) \leq \gamma} \left\{\int \phi_+^* \circ (f - \gamma) d\nu + \gamma \right\} + \int f d\omega \right\}
\end{multline}
holds, giving the claim.
\end{proof}

\begin{proposition}
Given $\mu,\nu \in P(X)$ and a proper, convex and lower semicontinuous function  $\phi : \R \to \overline{\R}$ strictly convex at $1$ with $\phi(1)=0$ and $1 \in \relativeinterior \dom \phi$, the relation
\begin{equation}
D_\phi(\mu \Vert \nu) = \int f_* d\mu - \min_{\gamma \in \R,\gamma \geq \sup f_*(X) - \phi'(\infty)} \left\{\int \phi^*_+ \circ (f_* - \gamma) d\nu + \gamma \right\}
\end{equation}
holds for $f_* \in Lip(X)$ if and only if there exists $C \in \R$ such that the conditions
\begin{equation}
\sup f_*(X)+C \leq \phi'(\infty),
\end{equation}
\begin{equation}
\frac{d\mu_c}{d\nu}(x) \in \partial \phi_+^*(f_*(x)+C) \text{ almost everywhere with respect to } \nu
\end{equation}
and
\begin{equation}
\support(\mu_s) \subset \{ x \in X : f_*(x)+C = \phi'(\infty) \}
\end{equation}
hold. If $\phi$ is of Legendre type, the second condition is equivalent to
\begin{equation}
f_*(x)+C=\phi_+'\left(\frac{d\mu_c}{d\nu}(x)\right) \text{ almost everywhere with respect to } \mu_c.
\end{equation}
\end{proposition}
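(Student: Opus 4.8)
The plan is to read the Csisz\'ar identity as the equality case of the Young--Fenchel inequality \eqref{youngfenchel} for the proper convex lower semicontinuous function $D_\phi(\cdot\Vert\nu)$ and its conjugate $D_\phi^*(\cdot\Vert\nu)$; by \eqref{subgradient} this equality holds exactly when $f_* \in \partial D_\phi(\mu\Vert\nu)$, equivalently $\mu \in \partial D_\phi^*(f_*\Vert\nu)$, so the whole statement amounts to characterizing that subgradient. First I would record, from the preceding proposition, that the inner minimization in $D_\phi^*(f_*\Vert\nu)$ is attained at some feasible $\gamma_*$, and put $C = -\gamma_*$; the feasibility constraint $\gamma_* \geq \sup f_*(X) - \phi'(\infty)$ is then literally the first condition $\sup f_*(X) + C \leq \phi'(\infty)$. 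Because $\mu$ is a probability measure its singular part is nonnegative, so $\mu_s^- = 0$ and $D_\phi(\mu\Vert\nu) = \int \phi_+ \circ \frac{d\mu_c}{d\nu}\, d\nu + \phi'(\infty)\mu_s(X)$, the passage from $\phi$ to $\phi_+$ being free since $\frac{d\mu_c}{d\nu} \geq 0$ $\nu$-a.e.

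For the forward direction I would expand $\int f_* \, d\mu = \int f_* \frac{d\mu_c}{d\nu}\, d\nu + \int f_* \, d\mu_s$ and use $\mu(X) = 1$ to write $C = C\int \frac{d\mu_c}{d\nu}\, d\nu + C\mu_s(X)$, so that the Csisz\'ar identity rearranges into
\begin{equation*}
\int \left[ (f_*+C)\frac{d\mu_c}{d\nu} - \phi_+^*(f_*+C) - \phi_+\!\left(\frac{d\mu_c}{d\nu}\right) \right] d\nu + \int \left[ (f_*+C) - \phi'(\infty) \right] d\mu_s = 0.
\end{equation*}
The first integrand is nonpositive pointwise by Young--Fenchel and the second is nonpositive $\mu_s$-a.e. by the first condition, so a sum of two nonpositive integrals vanishing forces each integrand to vanish almost everywhere. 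The vanishing of the first is precisely the equality case $\frac{d\mu_c}{d\nu} \in \partial \phi_+^*(f_*+C)$ $\nu$-a.e., i.e. the second condition, and the vanishing of the second together with continuity of $f_*$ (so that $\{f_*+C = \phi'(\infty)\}$ is closed) gives $\support(\mu_s) \subseteq \{f_*+C = \phi'(\infty)\}$, the third condition.

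The converse reverses this computation: assuming the three conditions for some $C$, I set $\gamma = -C$, which is feasible by the first condition, and run the identity backwards so that both integrands vanish, yielding $\int f_*\, d\mu - \left[\int \phi_+^*(f_*-\gamma)\, d\nu + \gamma\right] = D_\phi(\mu\Vert\nu)$; since the left side is at most the maximum over feasible $\gamma$ while weak duality bounds $\int f_*\, d\mu - D_\phi^*(f_*\Vert\nu)$ above by $D_\phi(\mu\Vert\nu)$, equality must hold and $f_*$ is a Csisz\'ar potential. For $\phi$ of Legendre type I would invoke that $(\phi_+^*)'$ is single-valued and strictly increasing on $\interior \dom \phi_+^*$ with inverse $\phi_+'$, so $\frac{d\mu_c}{d\nu} \in \partial \phi_+^*(f_*+C)$ inverts to $f_*+C = \phi_+'\!\left(\frac{d\mu_c}{d\nu}\right)$, the $\nu$-a.e. qualifier becoming $\mu_c$-a.e. since $\mu_c = \frac{d\mu_c}{d\nu}\nu$. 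The hard part will be the rigorous bookkeeping of the singular term and the degenerate regime $\phi'(\infty) = \infty$ (where the third condition forces $\mu_s = 0$), together with the integrability needed to pass from a vanishing integral of a nonpositive function to its pointwise vanishing; this is exactly what \citet[Theorem~2.10]{Borweinetal1993}, governing the subdifferential of the integral functional with its recession term, is invoked to discharge.
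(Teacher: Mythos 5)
Your proposal is correct, and it takes a genuinely different route from the paper's own proof. The paper argues by reduction: it invokes \citet[Theorem~2.10]{Borweinetal1993} as a black box for the equality case of the \emph{non-tight} representation, which assumes $\support(\nu)=X$, so the paper must first remove the full-support hypothesis by decomposing $\mu=\mu_1+\mu_2$ into its parts on and off $\support(\nu)$; it then passes to the tight representation via the subdifferential sum rule \eqref{plusunivariate_subdifferential} applied to $D_{\phi_+}(\cdot\Vert\nu)+i_{\{1\}}\left(\int 1\,d\cdot\right)$, the free constant $C$ appearing as an element of $\partial i_{\{1\}}(1)=\R$. You instead verify the characterization directly: $C=-\gamma_*$ with $\gamma_*$ the attained minimizer, feasibility of $\gamma_*$ is literally the first condition, and the Csisz\'ar identity rearranges into a sum of two nonpositive integrals whose simultaneous vanishing is exactly the second condition (pointwise Young--Fenchel equality, $\nu$-a.e.) and the third (using continuity of $f_*$ so the level set is closed, with $\phi'(\infty)=\infty$ forcing $\mu_s=0$); the converse follows by reversing the computation plus weak duality. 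Your route is more elementary and self-contained---it needs neither the full-support reduction nor the sum rule, and it makes transparent which optimality condition annihilates which error term---while the paper's route inherits all measure-theoretic care from the cited theorem and sets up the subdifferential formalism it reuses immediately afterwards (maximizers as subgradients, and the converse characterization of $\partial D_\phi^*(f\Vert\nu)$). Two points to tighten in your write-up: first, justify at the outset that every term in your rearrangement is finite (the identity itself forces $D_\phi(\mu\Vert\nu)<\infty$, since $\phi_+^*(s)\geq s\cdot 1-\phi_+(1)=s$ bounds the inner minimum below by a finite quantity, and then each of the two nonpositive integrals is finite); second, your closing deferral of the ``hard part'' to \citet[Theorem~2.10]{Borweinetal1993} is unnecessary and slightly undersells your argument---the singular term, the regime $\phi'(\infty)=\infty$, and the passage from a vanishing integral of a nonpositive function to its a.e.\ vanishing are all already handled by the computation you describe, so your proof stands on its own where the paper's genuinely does not.
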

\begin{proof}
By \citet[Theorem~2.10]{Borweinetal1993}, given $\mu \in \mathcal{M}(X)$ and $\nu \in \mathcal{M}^+(X)$ with $\support(\nu)=X$, one has
\begin{equation}
D_\phi(\mu\Vert\nu)+\int \phi^* \circ f_* d\nu = \int f_* d\mu
\end{equation}
for $f_* : X \to \R$ continuous if and only if
\begin{equation}
f_*(x)\in[\phi'(-\infty),\phi'(\infty)] \text{ for } \forall x \in X,
\end{equation}
\begin{equation}
\frac{d\mu_c}{d\nu}(x) \in \partial \phi^*(f_*(x)) \text{ almost everywhere with respect to } \nu,
\end{equation}
\begin{equation}
\support \mu_s^- \subset \{ f_*(x) = \phi'(-\infty) \}
\end{equation}
and
\begin{equation}
\support \mu_s^+ \subset \{ f_*(x) = \phi'(\infty) \}
\end{equation}
hold.

In the general case with $\support(\nu) \subset X$, since the support of $\nu$ is closed by definition, it is a compact metric space itself with the restriction of the metric $d$. Consider again the decomposition $\mu=\mu_1+\mu_2$ defined as $\mu_1(A)=\mu(A \cap \support(\nu))$ and $\mu_2(A)=\mu(A \setminus \support(\nu))$. Then one has
\begin{equation}
D_\phi(\mu\Vert\nu)=D_\phi(\mu_1\Vert\nu)+\phi'(\infty)\mu_2^+(X)-\phi'(-\infty)\mu_2^-(X),
\end{equation}
while the optimality conditions above imply that
\begin{equation}
D_\phi(\mu_1\Vert\nu)+\int \phi^* \circ f_* d\nu = \int f_* d\mu_1
\end{equation}
for $f_* : X \to \R$ continuous if and only if
\begin{equation}
f_*(x)\in[\phi'(-\infty),\phi'(\infty)] \text{ for } \forall x \in \support(\nu),
\end{equation}
\begin{equation}
\frac{d\mu_{1c}}{d\nu}(x) \in \partial \phi^*(f_*(x)) \text{ almost everywhere with respect to } \nu,
\end{equation}
\begin{equation}
\support \mu_{1s}^- \subset \{ f_*(x) = \phi'(-\infty) \}
\end{equation}
and
\begin{equation}
\support \mu_{1s}^+ \subset \{ f_*(x) = \phi'(\infty) \}
\end{equation}
hold. For
\begin{equation}
D_\phi(\mu\Vert\nu)+\int \phi^* \circ f_* d\nu = \int f_* d\mu
\end{equation}
to hold, one needs additionally that
\begin{equation}
\phi'(\infty)\mu_2^+(X)-\phi'(-\infty)\mu_2^-(X) = \int f_* d\mu_2,
\end{equation}
which holds exactly if
\begin{equation}
\support \mu_{2s}^- \subset \{ f_*(x) = \phi'(-\infty) \}
\end{equation}
and
\begin{equation}
\support \mu_{2s}^+ \subset \{ f_*(x) = \phi'(\infty) \}
\end{equation}
hold. To summarize, since $\mu_{1c}=\mu_c$ and $\mu_{1s}+\mu_{2s}=\mu_s$, the optimality conditions for $\mu \in \mathcal{M}(X)$ and $\nu \in \mathcal{M}^+(X)$ with $\nu$ not necessarily having full support are the same as cited above from \cite{Borweinetal1993}.

Now consider the tight representation, which follows by taking the conjugate of $(\mu \to D_\phi(\mu\Vert\nu) + i_{P(X)}(\mu) = D_{\phi_+}(\mu\Vert\nu)) + i_{\{1\}}(\int 1 d\mu)$ through \eqref{plusunivariate_conjugate}. Substituting into \eqref{plusunivariate_subdifferential}, one has
\begin{equation}
\partial \left(D_{\phi_+}(\cdot\Vert\nu) + i_{\{1\}}\left(\int 1 d\cdot\right)\right)(\mu) \\
=\partial D_{\phi_+}(\cdot\Vert\nu)(\mu) + (\gamma \to (x \to \gamma))\left(\partial i_{\{1\}}\left(\int 1 d\mu\right)\right),
\end{equation}
which gives the claim since $\partial i_{\{1\}}(1)=\R$, $\phi_+'(-\infty)=-\infty$, $\phi_+'(\infty)=\phi'(\infty)$, $\mu_s^-=0$ and subdifferentials are exactly those $f_*$ for which the supremum is achieved.

If $\phi$ is of Legendre type \citep{Borweinetal1993}, then $\phi_+$ and $\phi_+^*$ are both continuously differentiable on their respective domains, while ${\phi_+^*}'$ is increasing, and invertible on the subset of $\dom \phi_+^*$ where its value is positive with its inverse given by the strictly increasing $\phi_+'$ by \citet[Lemma~2.6]{Borweinetal1993}. With these, the second condition is equivalent to
\begin{equation}
f_*(x)+C=\phi_+'\left(\frac{d\mu_c}{d\nu}(x)\right) \ \mu_c\text{-a.e.}
\end{equation}
\end{proof}

Since we only consider the case of compact sample spaces, the infimum defining the Moreau-Yosida approximation turns into a minimum.

\begin{proposition}
If the metric space $(X,d)$ is compact, the infimum defining the Moreau-Yosida approximation of any $f$-divergence with respect to the Wasserstein-$1$ distance is always achieved as
\begin{equation} \label{infismin}
\inf_{\xi \in P(X)}{\left\{ D_\phi(\xi \Vert \nu) + \lambda W_1(\mu,\xi)^\alpha \right\}} = \min_{\xi \in P(X)}{\left\{ D_\phi(\xi \Vert \nu) + \lambda W_1(\mu,\xi)^\alpha \right\}}
\end{equation}
for any $0 < \lambda, \alpha \in \R$.
\end{proposition}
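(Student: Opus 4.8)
The plan is to apply the direct method of the calculus of variations, combining the compactness of $(P(X),W_1)$ with the lower semicontinuity of the objective functional $\xi \mapsto D_\phi(\xi\Vert\nu) + \lambda W_1(\mu,\xi)^\alpha$. The three ingredients I would assemble are: the objective is bounded below and proper, it is lower semicontinuous with respect to $W_1$, and the domain $(P(X),W_1)$ is compact; existence of a minimizer then follows from the Weierstrass extreme value principle for lower semicontinuous functions.

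First I would check properness and the lower bound. Since $\phi$ is convex with $\phi(1)=0$, Jensen's inequality gives $D_\phi(\cdot\Vert\nu)\geq 0$, and $W_1\geq 0$ with $\lambda>0$, so the objective is nonnegative. Evaluating at $\xi=\nu$ yields $D_\phi(\nu\Vert\nu)+\lambda W_1(\mu,\nu)^\alpha = \lambda W_1(\mu,\nu)^\alpha<\infty$, so the infimum is a finite nonnegative real rather than $+\infty$. Next I would establish lower semicontinuity of the objective. The map $\xi\mapsto D_\phi(\xi\Vert\nu)$ is lower semicontinuous, as recorded above and proved in the preceding propositions. The map $\xi\mapsto W_1(\mu,\xi)$ is $1$-Lipschitz by the triangle inequality for the metric $W_1$, hence continuous; composing with the continuous map $t\mapsto t^\alpha$ on $[0,\infty)$ and scaling by $\lambda>0$ produces a continuous penalty $\xi\mapsto\lambda W_1(\mu,\xi)^\alpha$. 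As the sum of a lower semicontinuous function and a continuous function is lower semicontinuous, the whole objective is lower semicontinuous on $(P(X),W_1)$.

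Finally I would run the compactness argument. Taking a minimizing sequence $(\xi_n)$ in $P(X)$, compactness of $(P(X),W_1)$ (which holds because $(X,d)$ is compact) yields a subsequence $\xi_{n_k}\to\xi_*$ in the $W_1$ topology. Lower semicontinuity then gives $D_\phi(\xi_*\Vert\nu)+\lambda W_1(\mu,\xi_*)^\alpha \leq \liminf_k\big(D_\phi(\xi_{n_k}\Vert\nu)+\lambda W_1(\mu,\xi_{n_k})^\alpha\big)$, and since the right-hand side equals the infimum along a minimizing sequence, $\xi_*$ attains it; hence the infimum is a minimum.

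The main, and rather mild, obstacle is the continuity of the Wasserstein penalty $\xi\mapsto W_1(\mu,\xi)^\alpha$: this relies on $W_1$ being a genuine metric, so the triangle inequality applies, and on the values $W_1(\mu,\xi)$ ranging over a bounded interval (guaranteed by compactness of $X$, which bounds the diameter of $(P(X),W_1)$), on which $t\mapsto t^\alpha$ is uniformly continuous for every $\alpha>0$. Everything else — properness, lower semicontinuity of $D_\phi(\cdot\Vert\nu)$, and compactness of $(P(X),W_1)$ — is already supplied by the earlier results, so no genuinely new difficulty arises.
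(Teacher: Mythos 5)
Your proposal is correct and follows essentially the same route as the paper's proof: lower semicontinuity of $\xi \mapsto D_\phi(\xi\Vert\nu)$ plus continuity of $\xi \mapsto \lambda W_1(\mu,\xi)^\alpha$ make the objective lower semicontinuous, and compactness of $(P(X),W_1)$ then yields a minimizer via the Weierstrass principle. The only difference is that you spell out the details the paper leaves implicit (the properness check at $\xi=\nu$ and the explicit minimizing-sequence argument), which is harmless but not needed beyond citing that a lower semicontinuous function on a compact metric space attains its minimum.
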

\begin{proof}
If $(X,d)$ is compact, then $(P(X),W_1)$ is compact as well. Since $(\xi \to D_\phi(\xi \Vert \nu))$ is lower semicontinuous and $(\xi \to W_1(\mu,\xi))$ is continuous, the sum $(\xi \to D_\phi(\xi \Vert \nu) + \lambda W_1(\mu,\xi)^\alpha)$ is lower semicontinuous. The proposition follows from the fact that a lower semicontinuous function on a compact metric space always has a minimum.
\end{proof}

A number of properties follow from the theory of Moreau-Yosida approximation.
\begin{proposition}
For any $0 < \alpha \in \R$, one has $D_\phi(\mu \Vert \nu) = \sup_{\lambda>0}{D_{\phi,\lambda,\alpha}(\mu \Vert \nu)} = \lim_{\lambda \to \infty}{D_{\phi,\lambda,\alpha}(\mu \Vert \nu)}$. Moreover,
\begin{itemize}
\item if $0 < \alpha < 1$, then $D_{\phi,\lambda,\alpha}(\cdot \Vert \nu)$ is H\"older continuous with respect to $W_1$ with exponent $\alpha$ and H\"older constant $\lambda$,
\item if $\alpha=1$, $D_{\phi,\lambda,\alpha}(\cdot \Vert \nu)$ is Lipschitz continuous with respect to $W_1$ with Lipschitz constant $\lambda$, and
\item if $\alpha > 1$, then $D_{\phi,\lambda,\alpha}(\cdot \Vert \nu)$ is locally Lipschitz continuous with respect to $W_1$, hence by $(P(X),W_1)$ being compact $D_{\phi,\lambda,\alpha}(\cdot \Vert \nu)$ is (globally) Lipschitz continuous.
\end{itemize}
\end{proposition}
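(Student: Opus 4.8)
The plan is to read the statement as nothing more than the specialization of the general Moreau-Yosida theory (recalled in the preliminaries and in Appendix~\ref{appendix_background}) to the single function $f = D_\phi(\cdot\Vert\nu) : (P(X),W_1) \to \overline{\R}$, whose Moreau-Yosida approximation of index $\lambda$ and order $\alpha$ is by definition $D_{\phi,\lambda,\alpha}(\cdot\Vert\nu)$. Three features of $f$ will be used repeatedly: it is proper and lower semicontinuous (as noted just before \eqref{myfd_primal}); it is non-negative, a standard property of $f$-divergences following from convexity of $\phi$ together with $\phi(1)=0$; and it vanishes at $\nu$, since $D_\phi(\nu\Vert\nu)=\phi(1)=0$. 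With these in hand I would first dispatch the recovery identity: the general result stated in the preliminaries gives $\overline{f} = \sup_{\lambda>0} f_{\lambda,\alpha} = \lim_{\lambda\to\infty} f_{\lambda,\alpha}$, where $\overline{f}$ is the greatest lower semicontinuous minorant of $f$; since $f$ is already lower semicontinuous we have $\overline{f}=f$, and hence $D_\phi(\mu\Vert\nu) = \sup_{\lambda>0} D_{\phi,\lambda,\alpha}(\mu\Vert\nu) = \lim_{\lambda\to\infty} D_{\phi,\lambda,\alpha}(\mu\Vert\nu)$ for every $\alpha>0$.

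The regularity claims for $0<\alpha\leq 1$ transfer verbatim from the cited Moreau-Yosida regularity theorem in Appendix~\ref{appendix_background}. For $0 < \alpha < 1$ that theorem asserts that $f_{\lambda,\alpha}$ is itself Hölder continuous with exponent $\alpha$ and Hölder constant $\lambda$, and for $\alpha=1$ it asserts directly that $f_{\lambda,1}$ is Lipschitz with constant $\lambda$; applying these to $f=D_\phi(\cdot\Vert\nu)$ gives the first two bullet points with no further work.

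The only substantive case is $\alpha>1$, and here the main obstacle is upgrading the local Lipschitz estimate furnished by the theorem to a global one. That theorem supplies, for each radius $r>0$ and each level $M$, a single constant $c=c(\alpha,\lambda,M,r)$ with $\vert f_{\lambda,\alpha}(\mu_1)-f_{\lambda,\alpha}(\mu_2)\vert \leq c\,W_1(\mu_1,\mu_2)$ whenever $\mu_1,\mu_2$ lie within distance $r$ of a common point $z$ satisfying $f_{\lambda,\alpha}(z)\leq M$ (its hypothesis that $f$ be non-negative is exactly the property noted above). The naive way to globalize — chaining through intermediate points — would require a length-space structure on $(P(X),W_1)$ that is not available for arbitrary compact $X$, so instead I would exploit boundedness. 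Evaluating the infimum in \eqref{myfd_primal} at $\xi=\nu$ yields the uniform upper bound $f_{\lambda,\alpha}(\mu)\leq D_\phi(\nu\Vert\nu)+\lambda W_1(\mu,\nu)^\alpha \leq \lambda\,\mathrm{diam}(X)^\alpha =: M$, which is finite precisely because $X$ is compact, while $f_{\lambda,\alpha}\geq 0$ holds as well. Fixing any $r>0$ and applying the theorem with $z=\mu_1$ gives the Lipschitz bound with constant $c$ whenever $W_1(\mu_1,\mu_2)\leq r$; for $W_1(\mu_1,\mu_2)>r$ the two-sided bound $0\leq f_{\lambda,\alpha}\leq M$ forces $\vert f_{\lambda,\alpha}(\mu_1)-f_{\lambda,\alpha}(\mu_2)\vert\leq M < \tfrac{M}{r}W_1(\mu_1,\mu_2)$. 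Taking the maximum gives $\Vert D_{\phi,\lambda,\alpha}(\cdot\Vert\nu)\Vert_L \leq \max(c,M/r)<\infty$, which is the claimed global Lipschitz continuity and completes the proposition.
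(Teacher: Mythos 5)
Your proposal is correct and takes essentially the same route as the paper: the paper's entire proof is a one-line appeal to the general Moreau-Yosida theorem recalled in Appendix~\ref{appendix_background}, applied to the proper, lower semicontinuous, non-negative function $D_\phi(\cdot\Vert\nu)$ on the compact space $(P(X),W_1)$, which is exactly what you do. The only difference is that you explicitly carry out the local-to-global Lipschitz upgrade for $\alpha>1$ (via the uniform bound $0 \leq D_{\phi,\lambda,\alpha}(\cdot\Vert\nu) \leq \lambda\,\mathrm{diam}(X)^\alpha$, the uniformity of the constant $c(\alpha,\lambda,M,r)$ over base points, and the dichotomy $W_1(\mu_1,\mu_2)\leq r$ versus $W_1(\mu_1,\mu_2)>r$), a step the paper compresses into the phrase ``hence by $(P(X),W_1)$ being compact.''
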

\begin{proof}
The proposition follows from Theorem~\ref{my_approx_thm}.
\end{proof}

To obtain the dual representations of the Moreau-Yosida approximations of $D_\phi(\cdot\Vert\nu)$ with respect to the Wasserstein-$1$ distance, we need the convex conjugates of the functions mapping probability measures $\xi + \omega$ to $\lambda$ times the $\alpha$th power of their Wasserstein-$1$ distance from a given probability measure $\mu$, which by \eqref{kantrubduality} is equivalent to $\lambda\Vert \xi+\omega-\mu\Vert_{KR}^\alpha$. We consider the cases $0 < \alpha < 1$, $\alpha = 1$ and $\alpha > 1$ separately.

We will need the following lemma.

\begin{lemma}
Let $\psi : \R_+ \to \overline{\R}$ be such that 
\begin{equation}
\psi(t)=\lambda t^\alpha.
\end{equation}
If $1 < \alpha \in \R$, then
\begin{equation} \label{psiconj_alpha1plus}
\psi^\#(s)=(\alpha-1)\alpha^{\frac{\alpha}{1-\alpha}}\lambda^{\frac{1}{1-\alpha}}s^{\frac{\alpha}{\alpha-1}},
\end{equation}
and if $0 < \alpha < 1$, then
\begin{equation} \label{psiconj_alpha01}
\psi^\#(s)=
\begin{cases}
0 \text{ if $s = 0$,}\\
\infty \text{ otherwise.}
\end{cases}
\end{equation}
\end{lemma}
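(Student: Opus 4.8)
The plan is to recognize that for each fixed $s \ge 0$ the quantity $\psi^\#(s)=\sup_{t\ge 0}\{st-\lambda t^\alpha\}$ is a one-dimensional optimization over the half-line, so the entire lemma reduces to maximizing $g(t)=st-\lambda t^\alpha$ on $[0,\infty)$ and simplifying the result. I would split into the two regimes $\alpha>1$ and $0<\alpha<1$, and within each track the behaviour as a function of $s$, since the qualitative picture is governed by whether the penalty $\lambda t^\alpha$ is superlinear or sublinear.

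For $\alpha>1$: since $g''(t)=-\lambda\alpha(\alpha-1)t^{\alpha-2}<0$ for $t>0$, the function $g$ is strictly concave on $[0,\infty)$, and because $\alpha>1$ the penalty dominates the linear term so that $g(t)\to-\infty$, guaranteeing the supremum is attained. For $s>0$ the maximizer is the unique interior critical point $t_*=\left(s/(\lambda\alpha)\right)^{1/(\alpha-1)}$ coming from $g'(t_*)=s-\lambda\alpha t_*^{\alpha-1}=0$; for $s=0$ the maximizer is $t=0$ with value $0$, which is consistent with the claimed formula since the exponent $\tfrac{\alpha}{\alpha-1}$ on $s$ is positive. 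Substituting $t_*$ and factoring out $s^{\alpha/(\alpha-1)}$ gives $g(t_*)=s^{\frac{\alpha}{\alpha-1}}(\lambda\alpha)^{-\frac{1}{\alpha-1}}\cdot\frac{\alpha-1}{\alpha}$, which after using $\alpha^{-1}\cdot\alpha^{-1/(\alpha-1)}=\alpha^{-\alpha/(\alpha-1)}$ and the sign identities $\frac{1}{1-\alpha}=-\frac{1}{\alpha-1}$, $\frac{\alpha}{1-\alpha}=-\frac{\alpha}{\alpha-1}$ yields exactly \eqref{psiconj_alpha1plus}.

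For $0<\alpha<1$: here $\lambda t^\alpha$ is sublinear and the behaviour is opposite. If $s=0$ then $g(t)=-\lambda t^\alpha\le 0$ with equality at $t=0$, so $\psi^\#(0)=0$. If $s>0$ then $g(t)=st-\lambda t^\alpha\to+\infty$ as $t\to\infty$, because the linear term eventually dominates the sublinear penalty; hence $\psi^\#(s)=\infty$. Together these establish \eqref{psiconj_alpha01}.

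\textbf{Main obstacle.} There is no conceptual difficulty; the only place requiring care is the exponent bookkeeping in the $\alpha>1$ substitution. I would keep the powers of $s$, $\lambda$ and $\alpha$ separate throughout, use $\frac{\alpha}{\alpha-1}=1+\frac{1}{\alpha-1}$ when simplifying the terms $st_*$ and $\lambda t_*^\alpha$ individually, and only at the end reconcile the signs of the exponents via $\frac{1}{1-\alpha}=-\frac{1}{\alpha-1}$ and $\frac{\alpha}{1-\alpha}=-\frac{\alpha}{\alpha-1}$ to match the stated closed form.
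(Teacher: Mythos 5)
Your proposal is correct and takes essentially the same route as the paper: both locate the critical point of $t \mapsto st - \lambda t^\alpha$ from the first-order condition $s - \alpha\lambda t^{\alpha-1} = 0$ and classify it by the second derivative, then substitute to obtain the closed form. If anything, your handling of the $0 < \alpha < 1$ regime (explicitly arguing $st - \lambda t^\alpha \to \infty$ for $s > 0$ and checking $s = 0$ separately) is more complete than the paper's, which merely remarks that the critical point is a minimum in that case and leaves the conclusion implicit.
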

\begin{proof}
By \eqref{psiconj}, $\psi^\#(s)=\sup_{0 \leq t \in \R}{\{ st - \lambda t^\alpha \}}$. Since $\frac{\partial}{\partial{t}} st - \lambda t^\alpha = s - \alpha\lambda t^{\alpha-1}$ and $\frac{\partial}{\partial{t}}^2 st - \lambda t^\alpha = (1-\alpha)\alpha\lambda t^{\alpha-2}$, one has an extremum at $t=\frac{s}{\alpha\lambda}^{\frac{1}{\alpha-1}}$ by letting $\frac{\partial}{\partial{t}}=0$, which is a maximum if $1 < \alpha \in \R$ and a minimum if $0 < \alpha < 1$ by the second derivative test. This implies the proposition.
\end{proof}

\begin{remark} [The case $0 < \alpha < 1$]
By \eqref{psiconj_alpha01} and \eqref{psiofnormconjugate}, it holds that
\begin{equation}
(\xi \to \lambda \Vert \xi + \omega - \mu \Vert_{KR}^\alpha)^* = \left(f \to \begin{cases} 0 \text{ if $\Vert f \Vert_L = 0$},\\ \infty \text{ otherwise} \end{cases}\right),
\end{equation}
which implies that the mapping $(\xi \to \lambda \Vert \xi + \omega - \mu \Vert_{KR}^\alpha)^*$ is not convex by \eqref{biconjugate} (as it is clearly continuous and proper). Hence it is not possible to obtain a dual representation of $\inf_{\xi \in P(X)}{\left\{ D(\xi \Vert \nu) + \lambda W(\mu,\xi)^\alpha \right\}}$ by Fenchel-Rockafellar duality when $0 < \alpha < 1$. Another approach would be to use Toland-Singer duality, which would require the mapping $(\xi \to \lambda \Vert \xi + \omega - \mu \Vert_{KR}^\alpha)^*$ to be concave, but this is also not the case, since it is the composition of a convex and a concave nondecreasing function \citep[Section~3.2.3]{Boydetal2014}.
\end{remark}

\begin{proposition}
Given $\mu, \omega \in P(X)$ and $0 < \lambda \in \R$, let the function $W_{\mu,\omega,\lambda,1}: (\mathcal{M}(X, 0),\Vert.\Vert_{KR}) \to \R$ be defined by
\begin{equation}
W_{\mu,\omega,\lambda,1}(\xi) = \lambda \Vert \xi + \omega - \mu \Vert_{KR}.
\end{equation}
Then, the function $W_{\mu,\lambda,1}$ is proper, convex and continuous, and its convex conjugate $W_{\mu,\omega, \lambda,1}^* : (Lip(X, x_0),\Vert.\Vert_L) \to \overline{\R}$ is
\begin{equation}
W_{\mu,\omega,\lambda,1}^*(f) = 
\begin{cases}
\int f d(\mu-\omega) \text{ if $\Vert f \Vert_L \leq \lambda$,} \\
\infty \text{ otherwise.}
\end{cases}
\end{equation}
\end{proposition}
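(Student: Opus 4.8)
The plan is to separate the two claims. For properness, convexity and continuity, I would first observe that since $\mu,\omega \in P(X)$ we have $(\omega-\mu)(X)=0$, so $\omega-\mu \in \mathcal{M}(X,0)$ and the affine map $(\xi \to \xi + \omega - \mu)$ maps $\mathcal{M}(X,0)$ into itself. Thus $W_{\mu,\omega,\lambda,1}$ is a positive multiple of the norm $\Vert\cdot\Vert_{KR}$ precomposed with a translation, hence convex and norm-continuous; being everywhere real-valued, it is proper.

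For the conjugate, the idea is to reduce everything to the three conjugate calculus identities established earlier. By the cited isometric isomorphism $(Lip(X,x_0),\Vert\cdot\Vert_L) \cong (\mathcal{M}(X,0),\Vert\cdot\Vert_{KR})^*$, the dual pairing is $\langle \xi, f\rangle = \int f d\xi$ and the dual norm of $\Vert\cdot\Vert_{KR}$ is exactly $\Vert\cdot\Vert_L$. Writing $W_{\mu,\omega,\lambda,1}(\xi) = \lambda\Vert \xi + (\omega-\mu)\Vert_{KR}$, I would apply \eqref{normconjugate} to the base norm $N=\Vert\cdot\Vert_{KR}$ to get $N^*(f)=0$ if $\Vert f\Vert_L \leq 1$ and $\infty$ otherwise; then absorb the scaling via \eqref{alphaconjugate}, obtaining $(\lambda N)^*(f)=\lambda N^*(\lambda^{-1}f)$, which equals $0$ when $\Vert f\Vert_L \leq \lambda$ and $\infty$ otherwise; and finally handle the translation by $\omega-\mu$ via \eqref{translateconjugate}, which subtracts $\langle \omega-\mu, f\rangle = \int f d(\omega-\mu)$. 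Collecting terms yields $W_{\mu,\omega,\lambda,1}^*(f) = (\lambda N)^*(f) + \int f d(\mu-\omega)$, which is $\int f d(\mu-\omega)$ on the ball $\Vert f\Vert_L \leq \lambda$ (where the indicator penalty vanishes) and $\infty$ outside it, as claimed.

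There is no serious obstacle here; the only points requiring care are bookkeeping ones. I must make sure the translation vector $\omega-\mu$ genuinely lies in the primal space $\mathcal{M}(X,0)$ so that \eqref{translateconjugate} applies, and that the pairing $\langle \omega-\mu, f\rangle$ really is the integral $\int f d(\omega-\mu)$, which is finite because $f \in Lip(X,x_0)$ is bounded on the compact space $X$. I would also note that the order in which the scaling and translation identities are applied is irrelevant, since they act on independent aspects of the argument, so either ordering produces the same chain of equalities.
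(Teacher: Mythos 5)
Your proof is correct and follows essentially the same route as the paper's: both compute the conjugate by chaining \eqref{normconjugate}, \eqref{alphaconjugate} and \eqref{translateconjugate}, differing only in the (indeed immaterial) order in which the scaling and translation identities are applied. Your explicit checks that $\omega - \mu \in \mathcal{M}(X,0)$ and that $W_{\mu,\omega,\lambda,1}$ is proper, convex and continuous address parts of the statement that the paper's own proof leaves implicit, which is a welcome addition rather than a deviation.
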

\begin{proof}
By \eqref{normconjugate},
\begin{equation}
(\xi \to \Vert \xi \Vert_{KR})^* =
\left(f \to
\begin{cases}
0 \text{ if $\Vert f \Vert_L \leq 1$,} \\
\infty \text{ otherwise}
\end{cases}
\right).
\end{equation}
By \eqref{translateconjugate},
\begin{equation}
(\xi \to \Vert \xi + \omega - \mu \Vert_{KR})^* =
\left(f \to
\begin{cases}
-\int f d(\omega - \mu) \text{ if $\Vert f \Vert_L \leq 1$,} \\
\infty \text{ otherwise}
\end{cases}
\right).
\end{equation}
By \eqref{alphaconjugate},
\begin{equation}
(\xi \to \lambda \Vert \xi + \omega - \mu \Vert_{KR})^* 
=
\left(f \to
\begin{cases}
-\lambda \int \lambda^{-1} f d(\omega - \mu) \text{ if $\Vert \lambda^{-1} f \Vert_L \leq 1$,} \\
\infty \text{ otherwise}
\end{cases}
\right),
\end{equation}
which is equivalent to the proposed conjugate.
\end{proof}

\begin{proposition}
Given $\mu, \omega \in P(X)$, $1 < \alpha \in \R$ and $0 < \lambda \in \R$, let the function $W_{\mu,\omega,\lambda,\alpha}: (\mathcal{M}(X, 0),\Vert.\Vert_{KR} \to \R$ be defined by
\begin{equation}
W_{\mu,\omega,\lambda,\alpha}(\xi) = \lambda \Vert \xi + \omega - \mu \Vert_{KR}^\alpha.
\end{equation}
Then, the function $W_{\mu,\omega,\lambda,\alpha}$ is proper, convex and continuous, and its convex conjugate $W_{\mu,\omega,\lambda,\alpha}^* : (Lip(X, x_0),\Vert.\Vert_L) \to \overline{\R}$ is
\begin{equation}
W_{\mu,\omega,\lambda,\alpha}^*(f) = 
\int f d(\mu-\omega) + \alpha^{\frac{\alpha}{1-\alpha}} \lambda^{\frac{1}{1-\alpha}} (\alpha-1) \Vert f \Vert_L^{\frac{\alpha}{\alpha-1}}.
\end{equation}
\end{proposition}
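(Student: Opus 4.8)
The plan is to mirror the preceding $\alpha=1$ proof, exploiting that $W_{\mu,\omega,\lambda,\alpha}$ is a power of the Kantorovich-Rubinstein norm precomposed with an affine shift. First I would write $W_{\mu,\omega,\lambda,\alpha}(\xi) = \psi(\Vert \xi + \omega - \mu \Vert_{KR})$ with $\psi(t) = \lambda t^\alpha$, exactly the function treated in the Lemma. Since $t \mapsto \lambda t^\alpha$ is nondecreasing and convex on $\R_+$ for $\alpha \geq 1$ while $\xi \mapsto \Vert \xi + \omega - \mu \Vert_{KR}$ is convex and continuous (a norm precomposed with an affine map), their composition is proper, convex and continuous, settling the first assertion of the proposition.

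For the conjugate I would first compute the conjugate of the unshifted map $\xi \mapsto \lambda \Vert \xi \Vert_{KR}^\alpha = \psi(\Vert \xi \Vert_{KR})$. Because $\psi(0) = \lambda \cdot 0^\alpha = 0$ (using $\alpha > 1 > 0$), formula \eqref{psiofnormconjugate} applies and yields $(\xi \mapsto \psi(\Vert \xi \Vert_{KR}))^*(f) = \psi^\#(\Vert f \Vert_L)$; here I invoke the isometric isomorphism $(\mathcal{M}(X,0),\Vert.\Vert_{KR})^* \cong (Lip(X,x_0),\Vert.\Vert_L)$ from the first background theorem to identify the dual norm of $f$ with $\Vert f \Vert_L$. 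Substituting the explicit expression \eqref{psiconj_alpha1plus} for $\psi^\#$ gives $(\alpha-1)\alpha^{\frac{\alpha}{1-\alpha}}\lambda^{\frac{1}{1-\alpha}}\Vert f \Vert_L^{\frac{\alpha}{\alpha-1}}$.

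To incorporate the shift I would observe that $\omega - \mu \in \mathcal{M}(X,0)$, since $\mu,\omega \in P(X)$ both have unit mass, so the translation stays inside the space on which the duality is set up. Applying \eqref{translateconjugate} with $x_0 = \omega - \mu$ to $g(\xi) = \psi(\Vert \xi \Vert_{KR})$ gives $W_{\mu,\omega,\lambda,\alpha}^*(f) = g^*(f) - \langle \omega - \mu, f \rangle = \psi^\#(\Vert f \Vert_L) + \int f d(\mu - \omega)$, using $\langle \omega - \mu, f \rangle = \int f d(\omega - \mu)$ for the pairing. Combining this with the previous step recovers the stated formula, up to the harmless reordering of the constant factors $\alpha^{\frac{\alpha}{1-\alpha}} \lambda^{\frac{1}{1-\alpha}} (\alpha-1)$.

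I expect no genuine obstacle here; the argument is a direct assembly of the Lemma with the $\alpha=1$ template. The only points requiring care are verifying $\psi(0)=0$ so that \eqref{psiofnormconjugate} is legitimately applicable, and confirming that the shift $\omega-\mu$ lies in $\mathcal{M}(X,0)$ so that \eqref{translateconjugate} can be invoked within the correct duality pairing. Both hold immediately, the former from $\alpha>1$ and the latter from $\mu,\omega$ being probability measures.
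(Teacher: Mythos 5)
Your proposal is correct and follows essentially the same route as the paper's proof: conjugate the unshifted map $\xi \mapsto \lambda\Vert\xi\Vert_{KR}^\alpha$ via \eqref{psiofnormconjugate} together with \eqref{psiconj_alpha1plus}, then handle the translation by $\omega-\mu$ via \eqref{translateconjugate}. The only difference is that you spell out the verifications the paper leaves implicit ($\psi(0)=0$, $\omega-\mu \in \mathcal{M}(X,0)$, and the convexity/continuity of the composition), which is fine.
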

\begin{proof}
By \eqref{psiconj_alpha1plus} and \eqref{psiofnormconjugate}, it holds that
\begin{equation}
(\xi \to \lambda \Vert \xi \Vert_{KR}^\alpha)^* =
\left(f \to
(\alpha-1) \alpha^{\frac{\alpha}{1-\alpha}}\lambda^{\frac{1}{1-\alpha}}  \Vert f \Vert_L^{\frac{\alpha}{\alpha-1}}
\right).
\end{equation}
By \eqref{translateconjugate},
\begin{equation}
(\xi \to \lambda \Vert \xi + \omega - \mu \Vert_{KR}^\alpha)^* =
\left(f \to
(\alpha-1) \alpha^{\frac{\alpha}{1-\alpha}}\lambda^{\frac{1}{1-\alpha}} \Vert f \Vert_L^{\frac{\alpha}{\alpha-1}} - \int f d(\omega - \mu)
\right),
\end{equation}
which is equivalent to the proposed conjugate.
\end{proof}

We obtain the unconstrained variational representation of $W_1$ as a corollary.

\begin{corollary}
Given $\mu, \nu \in P(X)$, $1 < \alpha \in \R$ and $0 < \lambda \in \R$, one has the variational representation
\begin{equation}
\lambda W_1(\mu,\nu)^\alpha = 
\sup_{f \in Lip(X,x_0)}\left\{ \int f d\mu - \int f d\nu - \alpha^{\frac{\alpha}{1-\alpha}} \lambda^{\frac{1}{1-\alpha}} (\alpha-1) \Vert f \Vert_L^{\frac{\alpha}{\alpha-1}} \right\},
\end{equation}
where the supremum is achieved by $\alpha\lambda W_1(\mu,\nu)^{\alpha-1} f_*$ with $f_*$ being a Kantorovich potential of $\mu,\nu$.
\end{corollary}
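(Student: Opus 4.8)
The plan is to read off the asserted identity as a single instance of Fenchel--Moreau biconjugation applied to the function from the preceding proposition, evaluated at a well-chosen point, and then to identify the maximizer via the subdifferential.

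First I would specialize the preceding proposition to the choice $\omega = \mu$. Then $W_{\mu,\mu,\lambda,\alpha}(\xi) = \lambda \Vert \xi \Vert_{KR}^\alpha$, and its conjugate reduces to $W_{\mu,\mu,\lambda,\alpha}^*(f) = \alpha^{\frac{\alpha}{1-\alpha}} \lambda^{\frac{1}{1-\alpha}} (\alpha-1) \Vert f \Vert_L^{\frac{\alpha}{\alpha-1}}$, since the linear term $\int f \, d(\mu-\omega)$ vanishes. As $W_{\mu,\mu,\lambda,\alpha}$ is proper, convex and continuous, it is in particular lower semicontinuous, so \eqref{biconjugate} yields $W_{\mu,\mu,\lambda,\alpha} = W_{\mu,\mu,\lambda,\alpha}^{**}$. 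I would then evaluate this identity at the point $\xi = \mu - \nu$, which lies in $\mathcal{M}(X,0)$ because $(\mu-\nu)(X) = 0$. On the primal side, Kantorovich--Rubinstein duality \eqref{kantrubduality} gives $W_{\mu,\mu,\lambda,\alpha}(\mu-\nu) = \lambda \Vert \mu-\nu \Vert_{KR}^\alpha = \lambda W_1(\mu,\nu)^\alpha$; on the dual side, the biconjugate is exactly $\sup_{f \in Lip(X,x_0)} \{ \int f \, d(\mu-\nu) - \alpha^{\frac{\alpha}{1-\alpha}} \lambda^{\frac{1}{1-\alpha}} (\alpha-1) \Vert f \Vert_L^{\frac{\alpha}{\alpha-1}} \}$, which is the asserted formula once one writes $\int f \, d(\mu-\nu) = \int f \, d\mu - \int f \, d\nu$.

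For the maximizer, I would invoke \eqref{subgradient}, by which the supremum in the biconjugate representation at $\mu-\nu$ is attained precisely on $\partial W_{\mu,\mu,\lambda,\alpha}(\mu-\nu)$, a set that is nonempty since the function is continuous. Writing $W_{\mu,\mu,\lambda,\alpha} = \psi \circ \Vert \cdot \Vert_{KR}$ with $\psi(t) = \lambda t^\alpha$ differentiable and increasing, the chain rule for subdifferentials of a composition with an increasing differentiable outer function gives $\partial W_{\mu,\mu,\lambda,\alpha}(\mu-\nu) = \psi'(\Vert \mu-\nu \Vert_{KR}) \, \partial \Vert \cdot \Vert_{KR}(\mu-\nu) = \alpha\lambda W_1(\mu,\nu)^{\alpha-1} \, \partial \Vert \cdot \Vert_{KR}(\mu-\nu)$. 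It remains to identify $\partial \Vert \cdot \Vert_{KR}(\mu-\nu)$ with the set of $f \in Lip(X,x_0)$ satisfying $\Vert f \Vert_L \leq 1$ and $\int f \, d(\mu-\nu) = \Vert \mu-\nu \Vert_{KR} = W_1(\mu,\nu)$, i.e.\ exactly the Kantorovich potentials $f_*$ of $\mu,\nu$; this produces the claimed maximizer $\alpha\lambda W_1(\mu,\nu)^{\alpha-1} f_*$.

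The main obstacle is the subgradient computation rather than the equality itself. I expect to spend the effort justifying the chain rule at a point where the inner function is a norm (hence not differentiable), and showing that the subdifferential of the Kantorovich--Rubinstein norm at the nonzero point $\mu-\nu$ consists exactly of the norm-one Kantorovich potentials, both of which follow from the supporting-hyperplane characterization of subgradients of a norm together with \eqref{kantrubduality}. The degenerate case $\mu = \nu$ should be treated separately: there $W_1(\mu,\nu) = 0$ and $\psi'(0) = 0$ since $\alpha > 1$, so the subdifferential collapses to $\{0\}$ and the maximizer is $0$, consistent with the stated formula.
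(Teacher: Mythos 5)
Your proposal is correct, and for the identity itself it takes essentially the paper's route: the paper simply invokes the preceding proposition together with Fenchel--Moreau biconjugation \eqref{biconjugate}, and your instantiation $\omega=\mu$, evaluation at $\xi=\mu-\nu\in\mathcal{M}(X,0)$, and appeal to \eqref{kantrubduality} merely make explicit what the paper leaves implicit. Where you genuinely diverge is the identification of the maximizer. The paper argues by a radial decomposition of the supremum: writing $f=\beta\hat f$ with $\beta\in\R_+$ and $\Vert\hat f\Vert_L=1$, collapsing the inner supremum to $W_1(\mu,\nu)$ via Kantorovich--Rubinstein, and then solving the remaining one-dimensional concave problem $\sup_{\beta\in\R_+}\{\beta W_1(\mu,\nu)-(\alpha-1)\alpha^{\frac{\alpha}{1-\alpha}}\lambda^{\frac{1}{1-\alpha}}\beta^{\frac{\alpha}{\alpha-1}}\}$ by the first-order condition, which gives $\beta=\alpha\lambda W_1(\mu,\nu)^{\alpha-1}$, the maximizer being $\beta$ times a maximizer of the inner (Kantorovich--Rubinstein) problem. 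You instead use \eqref{subgradient} to identify the set of maximizers with $\partial W_{\mu,\mu,\lambda,\alpha}(\mu-\nu)$ and compute this subdifferential via the chain rule $\partial(\psi\circ\Vert\cdot\Vert_{KR})(\mu-\nu)=\psi'(\Vert\mu-\nu\Vert_{KR})\,\partial\Vert\cdot\Vert_{KR}(\mu-\nu)$, combined with the fact that the subdifferential of a norm at a nonzero point is the set of norming functionals of dual norm at most one, which by the isometry $(Lip(X,x_0),\Vert\cdot\Vert_L)\cong(\mathcal{M}(X,0),\Vert\cdot\Vert_{KR})^*$ and \eqref{kantrubduality} are exactly the Kantorovich potentials vanishing at $x_0$. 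The two routes trade different burdens: the paper's scaling argument is elementary and self-contained but only exhibits a maximizer; your route requires proving the composition chain rule (the inclusion $\partial(\psi\circ N)(x)\supseteq\psi'(N(x))\,\partial N(x)$ follows from convexity and monotonicity of $\psi$ and, together with the existence of Kantorovich potentials on a compact space, already yields attainment, while the reverse inclusion follows from Fenchel equality and $(\psi\circ N)^*=\psi^\#\circ\Vert\cdot\Vert_*$ via \eqref{psiofnormconjugate}), and in exchange it characterizes the \emph{full} set of maximizers, which in effect also proves the paper's subsequent remark that subgradients of $(\mu\to\lambda W_1(\mu,\nu)^\alpha)$ are precisely the scaled Kantorovich potentials. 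Your separate treatment of the degenerate case $\mu=\nu$ is also sound, since $\partial\bigl(\lambda\Vert\cdot\Vert_{KR}^\alpha\bigr)(0)=\{0\}$ when $\alpha>1$.
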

\begin{proof}
The variational representation follows from the previous proposition and \eqref{biconjugate}. For the other claim, one has
\begin{multline}
\sup_{f \in Lip(X,x_0)}\left\{ \int f d\mu - \int f d\nu - \alpha^{\frac{\alpha}{1-\alpha}} \lambda^{\frac{1}{1-\alpha}} (\alpha-1) \Vert f \Vert_L^{\frac{\alpha}{\alpha-1}} \right\} \\= 
\sup_{\beta \in \R_+, f \in Lip(X,x_0), \Vert f \Vert_L = 1}\left\{ \int \beta f d\mu - \int \beta f d\nu - \alpha^{\frac{\alpha}{1-\alpha}} \lambda^{\frac{1}{1-\alpha}} (\alpha-1) \Vert \beta f \Vert_L^{\frac{\alpha}{\alpha-1}} \right\} \\= 
\sup_{\beta \in \R_+}\left\{ \beta \sup_{f \in Lip(X,x_0), \Vert f \Vert_L = 1}\left\{\int f d\mu - \int f d\nu \right\} - \alpha^{\frac{\alpha}{1-\alpha}} \lambda^{\frac{1}{1-\alpha}} (\alpha-1) \beta^{\frac{\alpha}{\alpha-1}} \right\} \\= 
\sup_{\beta \in \R_+}\left\{ \beta W_1(\mu,\nu) - \alpha^{\frac{\alpha}{1-\alpha}} \lambda^{\frac{1}{1-\alpha}} (\alpha-1) \beta^{\frac{\alpha}{\alpha-1}} \right\}.
\end{multline}
Equating the derivative with respect to $\beta$ to $0$ and solving the resulting equation gives $\beta = \alpha \lambda W_1(\mu,\nu)^{\alpha-1}$.
\end{proof}

Now we have all the information we need in order to invoke Fenchel-Rockafellar duality to obtain the dual representations.

\begin{proposition}
Given $\mu, \nu \in P(X)$, $0 < \lambda \in \R$ and a proper, convex and lower semicontinuous function  $\phi : \R \to \overline{\R}$ strictly convex at $1$ with $\phi(1)=0$ and $1 \in \relativeinterior \dom \phi$, one has
\begin{multline}
\min_{\xi \in P(X)}{\left\{ D_\phi(\xi \Vert \nu) + \lambda W_1(\mu,\xi) \right\}} 
\\=
\max_{f \in Lip(X,x_0), \Vert f \Vert_L \leq \lambda}\left\{ \int f d\mu 
- \min_{\sup f(X) - \phi'(\infty) \leq \gamma} \left\{\int \phi_+^* \circ (f - \gamma) d\nu + \gamma \right\} \right\},
\end{multline}
and for $\xi_* \in P(X)$ such that $\min_{\xi \in P(X)}{\left\{ D_\phi(\xi \Vert \nu) + \lambda W_1(\mu,\xi) \right\}} = D_\phi(\xi_* \Vert \nu) + \lambda W_1(\mu,\xi_*)$, there exists $f_* \in Lip(X,x_0)$ achieving the maximum such that $f_*$ is a Csisz\'ar potential of $\xi_*,\nu$ and $\lambda$ times a Kantorovich potential of $\mu,\xi_*$.
\end{proposition}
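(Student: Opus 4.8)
The plan is to realize the primal problem as an infimal sum on the Banach space $(\mathcal{M}(X,0),\Vert\cdot\Vert_{KR})$ and invoke Fenchel-Rockafellar duality \eqref{fenchelrockafellarduality}, reusing the two conjugates already computed in the preceding propositions. Fix an arbitrary $\omega \in P(X)$ and substitute $\xi = \zeta + \omega$ with $\zeta \in \mathcal{M}(X,0)$, so that the minimand becomes $D_{\phi,\nu,\omega}(\zeta) + W_{\mu,\omega,\lambda,1}(\zeta)$, where $D_{\phi,\nu,\omega}(\zeta) = D_\phi(\zeta+\omega\Vert\nu) + i_{P(X)}(\zeta+\omega)$ absorbs the constraint $\xi \in P(X)$ and $W_{\mu,\omega,\lambda,1}(\zeta) = \lambda\Vert\zeta+\omega-\mu\Vert_{KR} = \lambda W_1(\mu,\zeta+\omega)$ by \eqref{kantrubduality}. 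Both are proper and convex; the former is lower semicontinuous and the latter is finite and continuous on all of $\mathcal{M}(X,0)$, so the constraint qualification of \eqref{fenchelrockafellarduality} (a common domain point at which one summand is continuous) holds at any point of $\dom D_{\phi,\nu,\omega}$. Since $(P(X),W_1)$ is compact, the earlier proposition on \eqref{infismin} guarantees the primal minimum is attained at some $\xi_*$, equivalently at $\hat\zeta = \xi_* - \omega$.

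Next I would read off the dual. By \eqref{fenchelrockafellarduality} the primal minimum equals $\max_{f \in Lip(X,x_0)}\{-D_{\phi,\nu,\omega}^*(f) - W_{\mu,\omega,\lambda,1}^*(-f)\}$, with the maximum attained. Substituting the conjugate $D_{\phi,\nu,\omega}^*(f) = \min_{\gamma \ge \sup f(X)-\phi'(\infty)}\{\int\phi_+^*\circ(f-\gamma)d\nu+\gamma\} - \int f d\omega$ from the corresponding proposition, together with $W_{\mu,\omega,\lambda,1}^*(-f) = -\int f d(\mu-\omega)$ when $\Vert f\Vert_L \le \lambda$ and $+\infty$ otherwise, the two $\int f d\omega$ contributions cancel and the hard constraint $\Vert f\Vert_L \le \lambda$ survives, leaving exactly $\int f d\mu - D_\phi^*(f\Vert\nu)$ maximized over $\{f \in Lip(X,x_0): \Vert f\Vert_L\le\lambda\}$. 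This is the claimed identity, and the cancellation confirms that the representation does not depend on the auxiliary base measure $\omega$.

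For the characterization of the optimizer I would avoid sign bookkeeping and argue by sandwiching. Let $\xi_*$ be any primal minimizer and $f_*$ any dual maximizer; strong duality gives $D_\phi(\xi_*\Vert\nu) + \lambda W_1(\mu,\xi_*) = \int f_* d\mu - D_\phi^*(f_*\Vert\nu)$. The conjugate definition $D_\phi^*(f_*\Vert\nu) = \sup_{\xi\in P(X)}\{\int f_* d\xi - D_\phi(\xi\Vert\nu)\}$ evaluated at $\xi_*$ yields $-D_\phi^*(f_*\Vert\nu) \le D_\phi(\xi_*\Vert\nu) - \int f_* d\xi_*$, while \eqref{kantrubduality} together with $\Vert f_*\Vert_L\le\lambda$ gives $\int f_* d\mu - \int f_* d\xi_* \le \Vert f_*\Vert_L\, W_1(\mu,\xi_*) \le \lambda W_1(\mu,\xi_*)$. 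Chaining these two bounds through the strong-duality equality forces both to hold with equality. Equality in the first is the Fenchel equality $D_\phi(\xi_*\Vert\nu) + D_\phi^*(f_*\Vert\nu) = \int f_* d\xi_*$, i.e. $f_* \in \partial D_\phi(\xi_*\Vert\nu)$, so $f_*$ is a Csisz\'ar potential of $\xi_*,\nu$ and Proposition~\ref{prop_csiszar_potential} applies. Equality in the second means $f_*/\lambda$ attains the Kantorovich-Rubinstein supremum for $W_1(\mu,\xi_*)$, so $f_*$ is $\lambda$ times a Kantorovich potential of $\mu,\xi_*$, and when $\mu\neq\xi_*$ it pins $\Vert f_*\Vert_L = \lambda$.

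The main obstacle I anticipate is not the duality computation but the careful verification that the cancellation of the $\omega$-terms is legitimate and that the two optimality conditions land on a single function with the correct orientation toward $\mu$. The sandwiching argument is designed precisely to sidestep the competing sign conventions that arise if one instead extracts the two subdifferential inclusions directly from the optimality part \eqref{fenchelrockafellarduality_min} of Fenchel-Rockafellar, where translation-invariance of the subdifferential under the shift by $\omega$ and the orientation of the Kantorovich-Rubinstein norm must both be tracked simultaneously. A secondary point requiring care is confirming the constraint qualification and that the inner minimum over $\gamma$ defining $D_\phi^*(\cdot\Vert\nu)$ is attained, both of which follow from the preceding propositions.
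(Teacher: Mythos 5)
Your proposal is correct, and its duality half coincides with the paper's own proof: the same decomposition into $D_{\phi,\nu,\omega}$ and $W_{\mu,\omega,\lambda,1}$ on $(\mathcal{M}(X,0),\Vert\cdot\Vert_{KR})$, the same two conjugates from the preceding propositions, and the same cancellation of the $\omega$-terms; the only cosmetic difference is the constraint qualification, where the paper exhibits the explicit point $\xi_0=\nu-\omega$ while you appeal to $W_{\mu,\omega,\lambda,1}$ being finite and norm-continuous everywhere, which serves equally well for invoking \eqref{fenchelrockafellarduality}. The genuine divergence is in the characterization of the optimizer. The paper uses the subdifferential criterion \eqref{fenchelrockafellarduality_min}: from the primal minimizer $\xi_*$ it extracts an $f_*$ lying (up to sign) in the subdifferentials of the two summands, reads off the Csisz\'ar- and scaled-Kantorovich-potential properties from those inclusions, and then verifies by a chain of equalities that this $f_*$ attains the dual maximum. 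You argue in the opposite direction: you take an arbitrary dual maximizer, whose existence the max in \eqref{fenchelrockafellarduality} already guarantees, and squeeze the strong-duality equality between the Young--Fenchel inequality \eqref{youngfenchel} for $D_\phi(\cdot\Vert\nu)$ evaluated at $\xi_*$ and the Kantorovich--Rubinstein inequality \eqref{kantrubduality} combined with $\Vert f_*\Vert_L\leq\lambda$, forcing both to be tight; tightness of the first is exactly the Csisz\'ar-potential property and tightness of the second exactly the scaled Kantorovich-potential property. Your route avoids the subdifferential sign bookkeeping entirely and yields a marginally stronger conclusion, namely that \emph{every} dual maximizer is simultaneously a Csisz\'ar potential of $\xi_*,\nu$ and $\lambda$ times a Kantorovich potential of $\mu,\xi_*$, for \emph{every} primal minimizer $\xi_*$; the paper's route is the natural one when one wants to manufacture a dual certificate directly from the primal solution, and it is also what generalizes verbatim to the $\alpha>1$ case in the following proposition. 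Both arguments are complete proofs of the stated proposition.
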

\begin{proof}
Substituting $D_{\phi,\nu,\omega}$, $D_{\phi,\nu,\omega}^*$, $W_{\mu,\omega,\lambda,1}$ and $W_{\mu,\omega,\lambda,1}^*$ into \eqref{fenchelrockafellarduality}, for which the condition $\exists \xi_0 \in \dom D_{\phi,\nu,\omega} \cap \dom W_{\mu,\omega,\lambda,1}$ and $W_{\mu,\omega,\lambda,1}$ is continuous at $\xi_0$ clearly holds with $\xi_0 = \nu - \omega$, gives
\begin{equation}
\inf_{\xi \in \mathcal{M}(X,0)}{\left\{ D_{\phi,\nu,\omega}(\xi) + W_{\mu,\omega,\lambda,1}(\xi) \right\}} 
=
\max_{f \in Lip(X,x_0)}{\left\{ -D_{\phi,\nu,\omega}^*(f) - W_{\mu,\omega,\lambda,1}^*(-f) \right\}}.
\end{equation}
Since $W_{\phi,\nu,\omega}(\xi)=\infty$ unless $\xi+\omega \in P(X)$, the infimum is equivalent to
\begin{equation}
\inf_{\xi \in P(X)-\omega}{\left\{ D_\phi(\xi + \omega \Vert \nu) + \lambda \Vert \xi + \omega - \mu \Vert_{KR} \right\}},
\end{equation}
which is further equivalent to
\begin{equation}
\inf_{\xi \in P(X)}{\left\{ D_\phi(\xi \Vert \nu) + \lambda \Vert \xi - \mu \Vert_{KR} \right\}}.
\end{equation}
Since $D_{\mu,\omega,\lambda,1}^*(f)=\infty$ unless $\Vert f \Vert_L \leq \lambda$, the maximum is equivalent to
\begin{equation}
\max_{f \in Lip(X,x_0), \Vert f \Vert_L \leq \lambda}\left\{ 
-\min_{\sup f(X) - \phi'(\infty) \leq \gamma} \left\{\int \phi_+^* \circ (f - \gamma) d\nu + \gamma \right\} 
+ \int f d\omega
-\int -f d(\mu-\omega)
\right\}.
\end{equation}
These, together with \eqref{kantrubduality} and \eqref{infismin} give the claim.

By \eqref{fenchelrockafellarduality_min}, there exists $f_* \in Lip(X,x_0)$ such that $f_*$ is a Csisz\'ar potential of $\xi_*,\nu$ and $\lambda$ times a Kantorovich potential of $\mu,\xi_*$. It achieves the maximum since $\int f_* d\mu - \min_{\sup f_*(X) - \phi'(\infty) \leq \gamma} \left\{\int \phi_+^* \circ (f_* - \gamma) d\nu + \gamma \right\} = \int f_* d\mu -\int f_* d\xi_* + \int f_* d\xi_* - \min_{\sup f_*(X) - \phi'(\infty) \leq \gamma} \left\{\int \phi_+^* \circ (f_* - \gamma) d\nu + \gamma \right\} = \lambda W_1(\mu,\xi_*) + D_\phi(\xi_*\Vert\nu)$.
\end{proof}

\begin{proposition}
Given $\mu, \nu \in P(X)$, $1 < \alpha \in \R$, $0 < \lambda \in \R$ and a proper, convex and lower semicontinuous function  $\phi : \R \to \overline{\R}$ strictly convex at $1$ with $\phi(1)=0$ and $1 \in \relativeinterior \dom \phi$, one has
\begin{multline}
\min_{\xi \in P(X)}{\left\{ D_\phi(\xi \Vert \nu) + \lambda W(\mu,\xi)^\alpha \right\}} 
\\=
\max_{f \in Lip(X,x_0)}\left\{ \int f d\mu
- \min_{\sup f(X) - \phi'(\infty) \leq \gamma} \left\{\int \phi_+^* \circ (f - \gamma) d\nu + \gamma \right\} 
- (\alpha-1) \alpha^{\frac{\alpha}{1-\alpha}} \lambda^{\frac{1}{1-\alpha}} \Vert f \Vert_L^{\frac{\alpha}{\alpha-1}} \right\},
\end{multline}
and for $\xi_* \in P(X)$ such that $\min_{\xi \in P(X)}{\left\{ D_\phi(\xi \Vert \nu) + \lambda W_1(\mu,\xi)^\alpha \right\}} = D_\phi(\xi_* \Vert \nu) + \lambda W_1(\mu,\xi_*)^\alpha$, there exists $f_* \in Lip(X,x_0)$ achieving the maximum such that $f_*$ is a Csisz\'ar potential of $\xi_*,\nu$ and $\alpha\lambda W_1(\mu,\xi_*)^{\alpha-1}$ times a Kantorovich potential of $\mu,\xi_*$.
\end{proposition}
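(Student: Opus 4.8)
The plan is to mirror the proof of the $\alpha=1$ case, applying Fenchel-Rockafellar duality \eqref{fenchelrockafellarduality} on the dual pair $(\mathcal{M}(X,0),\Vert.\Vert_{KR})$ and $(Lip(X,x_0),\Vert.\Vert_L)$, but now using the conjugate $W_{\mu,\omega,\lambda,\alpha}^*$ computed in the preceding proposition in place of $W_{\mu,\omega,\lambda,1}^*$. First I would fix an arbitrary reference measure $\omega \in P(X)$ and substitute the pair of proper convex lower semicontinuous functions $D_{\phi,\nu,\omega}$ and $W_{\mu,\omega,\lambda,\alpha}$ into \eqref{fenchelrockafellarduality}. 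The required qualification condition---existence of a point of $\dom D_{\phi,\nu,\omega} \cap \dom W_{\mu,\omega,\lambda,\alpha}$ at which $W_{\mu,\omega,\lambda,\alpha}$ is continuous---holds with $\xi_0 = \nu - \omega$, since $W_{\mu,\omega,\lambda,\alpha}$ is finite and continuous everywhere on $\mathcal{M}(X,0)$, being the $\alpha$th power of the continuous Kantorovich-Rubinstein norm scaled by $\lambda$. This yields
\[
\inf_{\xi \in \mathcal{M}(X,0)}\{ D_{\phi,\nu,\omega}(\xi) + W_{\mu,\omega,\lambda,\alpha}(\xi) \} = \max_{f \in Lip(X,x_0)}\{ -D_{\phi,\nu,\omega}^*(f) - W_{\mu,\omega,\lambda,\alpha}^*(-f) \}.
\]

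Next I would identify both sides. On the primal side, $D_{\phi,\nu,\omega}(\xi)=\infty$ unless $\xi+\omega \in P(X)$, so via \eqref{kantrubduality} and \eqref{infismin} the infimum collapses to $\min_{\xi \in P(X)}\{ D_\phi(\xi\Vert\nu) + \lambda W_1(\mu,\xi)^\alpha \}$. On the dual side, substituting the explicit forms of $D_{\phi,\nu,\omega}^*$ and $W_{\mu,\omega,\lambda,\alpha}^*$, using $\Vert -f\Vert_L = \Vert f\Vert_L$ and $-\int(-f)d(\mu-\omega) = \int f d\mu - \int f d\omega$, the two occurrences of $\int f d\omega$ cancel, leaving exactly the claimed penalized variational formula.

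For the characterization of the optimizer, I would invoke \eqref{fenchelrockafellarduality_min}: the minimizing $\xi_*$ corresponds to an $f_* \in Lip(X,x_0)$ with $-f_* \in \partial D_{\phi,\nu,\omega}(\xi_*)$ and $f_* \in \partial W_{\mu,\omega,\lambda,\alpha}(\xi_*)$. The first membership identifies $f_*$ as a Csisz\'ar potential of $\xi_*,\nu$ as in Proposition~\ref{prop_csiszar_potential}; the second, via the chain rule for the composition of $t \mapsto \lambda t^\alpha$ with the norm together with the subgradient of $\Vert\cdot\Vert_{KR}$ established in the preceding corollary, identifies $f_*$ as $\alpha\lambda W_1(\mu,\xi_*)^{\alpha-1}$ times a Kantorovich potential of $\mu,\xi_*$. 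Finally I would verify directly that this $f_*$ attains the maximum: writing $f_* = \alpha\lambda W_1(\mu,\xi_*)^{\alpha-1} g_*$ with $\Vert g_*\Vert_L = 1$, the difference $\int f_* d\mu - \int f_* d\xi_*$ equals $\alpha\lambda W_1(\mu,\xi_*)^\alpha$, while $\Vert f_*\Vert_L^{\frac{\alpha}{\alpha-1}}$ simplifies so that the penalty term equals $(\alpha-1)\lambda W_1(\mu,\xi_*)^\alpha$; combining these with the Csisz\'ar potential identity $\int f_* d\xi_* - D_\phi^*(f_*\Vert\nu) = D_\phi(\xi_*\Vert\nu)$ recovers $D_\phi(\xi_*\Vert\nu) + \lambda W_1(\mu,\xi_*)^\alpha$, matching the primal value.

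The main obstacle I anticipate is the correct computation of the subgradient of $W_{\mu,\omega,\lambda,\alpha}$ at $\xi_*$---specifically justifying the scaling factor $\alpha\lambda W_1(\mu,\xi_*)^{\alpha-1}$ through the composition rule---and ensuring that the exponent arithmetic (namely $\alpha^{\frac{\alpha}{1-\alpha}}\alpha^{\frac{\alpha}{\alpha-1}} = 1$ and $\lambda^{\frac{1}{1-\alpha}}\lambda^{\frac{\alpha}{\alpha-1}} = \lambda$) cleanly collapses the penalty term in the verification; everything else is a routine transcription of the $\alpha=1$ argument.
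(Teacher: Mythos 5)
Your proposal is correct and takes essentially the same route as the paper: Fenchel--Rockafellar duality \eqref{fenchelrockafellarduality} applied to $D_{\phi,\nu,\omega}$ and $W_{\mu,\omega,\lambda,\alpha}$ with qualification point $\xi_0=\nu-\omega$, identification of both sides via \eqref{kantrubduality} and \eqref{infismin}, and \eqref{fenchelrockafellarduality_min} together with a direct value computation for the optimizer, which is exactly what the paper compresses into ``follows similarly as in the proof of the previous proposition.'' One small sign slip: with the subdifferential conditions as you state them ($-f_*\in\partial D_{\phi,\nu,\omega}(\xi_*-\omega)$ and $f_*\in\partial W_{\mu,\omega,\lambda,\alpha}(\xi_*-\omega)$, note also that the argument should be $\xi_*-\omega$, not $\xi_*$), it is $-f_*$ rather than $f_*$ that is the Csisz\'ar potential of $\xi_*,\nu$ and the scaled Kantorovich potential of $\mu,\xi_*$, so you should swap the two memberships (take $f_*\in\partial D_{\phi,\nu,\omega}(\xi_*-\omega)$ and $-f_*\in\partial W_{\mu,\omega,\lambda,\alpha}(\xi_*-\omega)$) so that the labeling is consistent with your otherwise correct final verification.
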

\begin{proof}
Substituting $D_{\phi,\nu,\omega}$, $D_{\phi,\nu,\omega}^*$, $W_{\mu,\omega,\lambda,\alpha}$ and $W_{\mu,\omega,\lambda,\alpha}^*$ into \eqref{fenchelrockafellarduality}, for which the condition $\exists \xi_0 \in \dom D_{\phi,\nu,\omega} \cap \dom W_{\mu,\omega,\lambda,\alpha}$ and $W_{\mu,\omega,\lambda,\alpha}$ is continuous at $\xi_0$ clearly holds with $\xi_0 = \nu - \omega$, gives
\begin{equation}
\inf_{\xi \in \mathcal{M}(X,0)}{\left\{ D_{\phi,\nu,\omega}(\xi) + W_{\mu,\omega,\lambda,\alpha}(\xi) \right\}} 
=
\max_{f \in Lip(X,x_0)}{\left\{ -D_{\phi,\nu,\omega}^*(f) - W_{\mu,\omega,\lambda,\alpha}^*(-f) \right\}}.
\end{equation}
Since $D_{\phi,\nu,\omega}(\xi)=\infty$ unless $\xi+\omega \in P(X)$, the infimum is equivalent to
\begin{equation}
\inf_{\xi \in P(X)-\omega}{\left\{ D_\phi(\xi + \omega \Vert \nu) + \lambda \Vert \xi + \omega - \mu \Vert_{KR}^\alpha \right\}},
\end{equation}
which is further equivalent to
\begin{equation}
\inf_{\xi \in P(X)}{\left\{ D_\phi(\xi \Vert \nu) + \lambda \Vert \xi - \mu \Vert_{KR}^\alpha \right\}}.
\end{equation}
These, together with \eqref{kantrubduality} and \eqref{infismin} give the claim. The function $f_* \in Lip(X,x_0)$ achieving the maximum follows similarly as in the proof of the previous proposition.
\end{proof}

\subsection{Practical evaluation and differentiation of $\gamma_{\phi,\nu}(f)$} \label{appendix_conjugate}

Postponing the general case as future work, we restrict our attention to evaluating $\gamma_{\phi,\nu}(f)$ when the support of $\nu$ is discrete, i.e. there exists $n \in \R$, $\{a_i : 1 \leq i \leq n, \sum_i a_i = 1\} \subset [0,1]$ and $\support{\nu} = \{x_i : 1 \leq i \leq n\} \subset X$ such that $\nu = \sum_i a_i \delta_{x_i}$ can be expressed as a convex combination of Dirac measures. Given $f \in Lip(X,x_0)$, we represent $\nu$ as a vector in the n-dimensional simplex in $\R^n$ defined by $(a_i)$, and $f$ as a vector in $\R^n$ defined by $(f(x_i))$. The minimization problem is then reduced to
\begin{equation}
\min_{\max(f) - \phi'(\infty) \leq \gamma} \left\{\langle \nu, \phi_+^*(f - \gamma) \rangle + \gamma \right\}
\end{equation}
with $\phi_+^*$ being applied element-wise and $\langle \cdot,\cdot \rangle$ being the standard dot product. The first derivative test gives
\begin{equation}
-\langle \nu, (\phi_+^*)'(f - \gamma) \rangle + 1 = 0.
\end{equation}
Assuming the solution is unique, we define $\gamma_{\phi,\nu}(f)$ implicitly as
\begin{equation}
\gamma_{\phi,\nu}(f) = \gamma' \iff 
-\langle \nu, (\phi_+^*)'(f - \gamma') \rangle + 1 = 0.
\end{equation}
Two cases offer closed-form solution. One is the Kullback-Leibler divergence $\phi(x)=x \log x$, for which we get $\gamma_{(x \to x \log x),\nu}(f) = \log \langle \nu, e^f \rangle$, i.e. the term from the Donsker-Varadhan formula. The other is the total variation divergence corresponding to $\phi(x)=\vert x-1 \vert$, for which the mapping
\begin{equation}
\gamma \to \langle \nu, -\chi_{(-\infty,-1)}(f-\gamma)+(f-\gamma)\chi_{[-1,1]}(f-\gamma) \rangle + \gamma
\end{equation}
is nondecreasing for $\gamma \geq \max(f)-1$ by its derivative $\langle \nu, -\chi_{[-1,1]}(f-\gamma) \rangle + 1$ being nonnegative, implying that the optimal value is $\gamma_{(x \to \vert x-1 \vert),\nu}(f)=\max(f)-1$, and the conjugate is $D_{(x \to \vert x-1 \vert)}(f\Vert\nu)=\langle \nu, -\chi_{(-\infty,-1)}(f-\max(f)+1)+(f-\max(f)+1)\chi_{[-1,1]}(f-\max(f)+1) \rangle + \max(f)-1$.

For other choices of $\phi$ considered, it seems that no closed-form solution is available. Instead, we calculate $\gamma_{\phi,\nu}(f)$ by Newton's method, for which we need the derivative of the function whose zeroes define the values of $\gamma_{\phi,\nu}(f)$, which is
\begin{equation}
\langle \nu, (\phi_+^*)''(f - \gamma) \rangle.
\end{equation}
Then, Newton's method suggests that the iteration
\begin{equation}
\gamma_{n+1} = \gamma_n - \frac{-\langle \nu, (\phi_+^*)'(f - \gamma) \rangle + 1}{\langle \nu, (\phi_+^*)''(f - \gamma) \rangle}
\end{equation}
converges to $\gamma_{\phi,\nu}(f)$. For the initial value, the choice $\gamma_0 = \max{f} - \phi'(\infty) + \epsilon$ works for some small $\epsilon > 0$ tuned manually if $\phi'(\infty) < \infty$. The the other cases with $\phi'(\infty) = \infty$, we found the choice $\gamma_0 = \langle \nu, f \rangle$ to work just fine.

To integrate this implicit function into automatic differentiation frameworks, we need to be able to compute the gradients $\nabla_f \gamma_{\phi,\nu}(f)$. Instead of the potentially unstable method of backpropagating through the iterations of Newton's method, we use the implicit function theorem \cite{Tao2016}, which tells us that
\begin{equation}
\nabla_f \gamma_{\phi,\nu}(f) =
\frac{
-\nabla_f(-\langle \nu, (\phi_+^*)'(f - \gamma') \rangle + 1)
}{
\frac{d}{d\gamma} (-\langle \nu, (\phi_+^*)'(f - \gamma') \rangle + 1)
} =
\frac{
\nu \odot (\phi_+^*)''(f - \gamma)
}{
\langle \nu, (\phi_+^*)''(f - \gamma) \rangle
}
\end{equation}
with $\odot$ denoting element-wise product in $\R^n$. Notice that in all cases, $\nabla_f \gamma_{\phi,\nu}(f)$ is in the standard simplex, e.g. for the Kullback-Leibler divergence one has the softmax $\nabla_f \gamma_{\phi,\nu}(f) = \frac{\nu \odot e^f}{\langle \nu, e^f \rangle}$ as the gradient.

We implemented the implicit functions with Newton's method in the forward pass and the backward pass formula given by the implicit function theorem for the Kullback-Leibler, reverse Kullback-Leibler, $\chi^2$, reverse $\chi^2$, squared Hellinger, Jensen-Shannon, Jeffreys and  triangular discrimination divergences. To test the validity of the approach, we optimized an $f \in \R^n$ with gradient descent to maximize the corresponding variational formulas with random categorical distributions $\mu,\nu$ over an alphabet of size $n$, and found that the resulting value for the divergences matched that of the closed-form solution with high accuracy. We found that calculating $\gamma_{\phi,\nu}(f)$ as $\gamma_{\phi,\nu}(f)= \gamma_{\phi,\nu}(f-\max(f))+\max(f)$ is beneficial to avoid numerical instabilities, which can be seen as a generalization of the log-sum-exp trick.  We detail the functions derived from $\phi$ corresponding to the listed $f$-divergences defined by functions of Legendre type below, as well as the corresponding Csisz\'ar potentials.

\subsubsection{Kullback-Leibler divergence}
\begin{equation}
\phi_+(x)= \begin{cases}
x \log(x) - x + 1 \text{ if $x \geq 0$,}\\
\infty \text{ otherwise.}
\end{cases}
\end{equation}
\begin{equation}
\partial \phi_+(x)= \begin{cases}
\left\{ \log(x) \right\} \text{ if $x > 0$,}\\
\emptyset \text{ otherwise.}
\end{cases}
\end{equation}
\begin{equation}
\phi'(\infty)=\infty
\end{equation}
\begin{equation}
\phi_+^*(x)=e^x-1
\end{equation}
\begin{equation}
{\phi_+^*}'(x)=e^x
\end{equation}
\begin{equation}
{\phi_+^*}''(x)=e^x
\end{equation}
\begin{equation}
f_*(x)+C = \log\left(\frac{d\mu_c}{d\nu}(x)\right) \text{ almost everywhere with respect to } \mu_c
\end{equation}

\subsubsection{Reverse Kullback-Leibler divergence}
\begin{equation}
\phi_+(x)= \begin{cases}
x-1-\log(x) \text{ if $x \geq 0$,}\\
\infty \text{ otherwise.}
\end{cases}
\end{equation}
\begin{equation}
\partial \phi_+(x)= \begin{cases}
\left\{ \frac{x-1}{x} \right\} \text{ if $x > 0$,}\\
\emptyset \text{ otherwise.}
\end{cases}
\end{equation}
\begin{equation}
\phi'(\infty)=1
\end{equation}
\begin{equation}
\phi_+^*(x)=\begin{cases}
-\log(1-x) \text{ if $x \leq 1$,}\\
\infty \text{ otherwise.}
\end{cases}
\end{equation}
\begin{equation}
{\phi_+^*}'(x)=\frac{1}{1-x}
\end{equation}
\begin{equation}
{\phi_+^*}''(x)=\frac{1}{(1-x)^2}
\end{equation}
\begin{equation}
f_*(x)+C = \begin{cases}
\frac{\frac{d\mu_c}{d\nu}(x)-1}{\frac{d\mu_c}{d\nu}(x)} \text{ almost everywhere with respect to } \mu_c, \\
1 \text{ if } x \in \support(\mu_s)
\end{cases}
\end{equation}

\subsubsection{$\chi^2$ divergence}
\begin{equation}
\phi_+(x)= \begin{cases}
(x-1)^2 \text{ if $x \geq 0$,}\\
\infty \text{ otherwise.}
\end{cases}
\end{equation}
\begin{equation}
\partial \phi_+(x)= \begin{cases}
\left\{ 2x-2 \right\} \text{ if $x \geq 0$,}\\
\emptyset \text{ otherwise.}
\end{cases}
\end{equation}
\begin{equation}
\phi'(\infty)=\infty
\end{equation}
\begin{equation}
\phi_+^*(x)=
\begin{cases}
\frac{1}{4}x^2+x \text{ if $x \geq -2$},\\
-1 \text{ otherwise.}
\end{cases}
\end{equation}
\begin{equation}
{\phi_+^*}'(x)=
\begin{cases}
\frac{1}{2}x+1 \text{ if $x \geq -2$},\\
0 \text{ otherwise.}
\end{cases}
\end{equation}
\begin{equation}
{\phi_+^*}''(x)=
\begin{cases}
\frac{1}{2} \text{ if $x \geq -2$},\\
0 \text{ otherwise.}
\end{cases}
\end{equation}
\begin{equation}
f_*(x)+C = 2\frac{d\mu_c}{d\nu}(x)-2 \text{ almost everywhere with respect to } \mu_c
\end{equation}

\subsubsection{Reverse $\chi^2$ divergence}
\begin{equation}
\phi_+(x)= \begin{cases}
\frac{1}{x}+x-2 \text{ if $x \geq 0$,}\\
\infty \text{ otherwise.}
\end{cases}
\end{equation}
\begin{equation}
\partial \phi_+(x)= \begin{cases}
\left\{ 1-\frac{1}{x^2} \right\} \text{ if $x > 0$,}\\
\emptyset \text{ otherwise.}
\end{cases}
\end{equation}
\begin{equation}
\phi'(\infty)=1
\end{equation}
\begin{equation}
\phi_+^*(x)=\begin{cases}
2-2\sqrt{1-x} \text{ if $x \leq 1$,}\\
\infty \text{ otherwise.}
\end{cases}
\end{equation}
\begin{equation}
{\phi_+^*}'(x)=\frac{1}{\sqrt{1-x}}
\end{equation}
\begin{equation}
{\phi_+^*}''(x)=\frac{1}{2\sqrt{1-x}^3}
\end{equation}
\begin{equation}
f_*(x)+C = \begin{cases}
1-\frac{1}{\left(\frac{d\mu_c}{d\nu}(x)\right)^2} \text{ almost everywhere with respect to } \mu_c, \\
1 \text{ if } x \in \support(\mu_s)
\end{cases}
\end{equation}

\subsubsection{Squared Hellinger divergence}
\begin{equation}
\phi_+(x)= \begin{cases}
(\sqrt{x}-1)^2 \text{ if $x \geq 0$,}\\
\infty \text{ otherwise.}
\end{cases}
\end{equation}
\begin{equation}
\partial \phi_+(x)= \begin{cases}
\left\{ 1-\frac{1}{\sqrt{x}} \right\} \text{ if $x > 0$,}\\
\emptyset \text{ otherwise.}
\end{cases}
\end{equation}
\begin{equation}
\phi'(\infty)=1
\end{equation}
\begin{equation}
\phi_+^*(x)=\begin{cases}
\frac{x}{1-x} \text{ if $x \leq 1$,}\\
\infty \text{ otherwise.}
\end{cases}
\end{equation}
\begin{equation}
{\phi_+^*}'(x)=\frac{1}{(1-x)^2}
\end{equation}
\begin{equation}
{\phi_+^*}''(x)=\frac{2}{(1-x)^3}
\end{equation}
\begin{equation}
f_*(x)+C = \begin{cases}
1-\frac{1}{\sqrt{\frac{d\mu_c}{d\nu}(x)}} \text{ almost everywhere with respect to } \mu_c, \\
1 \text{ if } x \in \support(\mu_s)
\end{cases}
\end{equation}

\subsubsection{Jensen-Shannon divergence}
\begin{equation}
\phi_+(x)= \begin{cases}
x\log(x)-(x+1)\log(\frac{x+1}{2}) \text{ if $x \geq 0$,}\\
\infty \text{ otherwise.}
\end{cases}
\end{equation}
\begin{equation}
\partial \phi_+(x)= \begin{cases}
\left\{ \log(x)-\log(x+1)+\log(2) \right\} \text{ if $x > 0$,}\\
\emptyset \text{ otherwise.}
\end{cases}
\end{equation}
\begin{equation}
\phi'(\infty)=\log(2)
\end{equation}
\begin{equation}
\phi_+^*(x)=\begin{cases}
-\log(2-e^x) \text{ if $x \leq \log(2)$,}\\
\infty \text{ otherwise.}
\end{cases}
\end{equation}
\begin{equation}
{\phi_+^*}'(x)=\frac{1}{2e^{-x}-1}
\end{equation}
\begin{equation}
{\phi_+^*}''(x)=\frac{2e^x}{(e^x-2)^2}
\end{equation}
\begin{equation}
f_*(x)+C = \begin{cases}
\log\left(\frac{d\mu_c}{d\nu}(x)\right)-\log\left(\frac{d\mu_c}{d\nu}(x)+1\right)+\log(2) \text{ almost everywhere with respect to } \mu_c, \\
\log(2) \text{ if } x \in \support(\mu_s)
\end{cases}
\end{equation}

\subsubsection{Jeffreys divergence}
\begin{equation}
\phi_+(x)= \begin{cases}
(x-1)\log(x) \text{ if $x \geq 0$,}\\
\infty \text{ otherwise.}
\end{cases}
\end{equation}
\begin{equation}
\partial \phi_+(x)= \begin{cases}
\left\{ \log(x)-\frac{1}{x}+1 \right\} \text{ if $x > 0$,}\\
\emptyset \text{ otherwise.}
\end{cases}
\end{equation}
\begin{equation}
\phi'(\infty)=\infty
\end{equation}
\begin{equation}
\phi_+^*(x)=x+W(e^{1-x})+\frac{1}{W(e^{1-x})}-2
\end{equation}
\begin{equation}
{\phi_+^*}'(x)=\frac{1}{W(e^{1-x})}
\end{equation}
\begin{equation}
{\phi_+^*}''(x)=\frac{1}{W(e^{1-x})}-\frac{1}{W(e^{1-x})+1}
\end{equation}
\begin{equation}
f_*(x)+C = \log\left(\frac{d\mu_c}{d\nu}(x)\right)-\frac{1}{\frac{d\mu_c}{d\nu}(x)}+1 \text{ almost everywhere with respect to } \mu_c
\end{equation}
$W$ denotes the principal branch of the Lambert W function, also called the product logarithm, defined implicitly by the relation $W(x)e^{W(x)}=x$. Similarly to the proposed conjugates, it is computed by Newton's method and its gradient by the implicit function theorem.

\subsubsection{Triangular discrimination divergence}
\begin{equation}
\phi_+(x)= \begin{cases}
\frac{(x-1)^2}{x+1} \text{ if $x \geq 0$,}\\
\infty \text{ otherwise.}
\end{cases}
\end{equation}
\begin{equation}
\partial \phi_+(x)= \begin{cases}
\left\{ \frac{(x-1)(x+3)}{(x+1)^2} \right\} \text{ if $x \geq 0$,}\\
\emptyset \text{ otherwise.}
\end{cases}
\end{equation}
\begin{equation}
\phi'(\infty)=1
\end{equation}
\begin{equation}
\phi_+^*(x)=\begin{cases}
-1 \text{ if $x < -3$,}\\
(\sqrt{1-x}-1)(\sqrt{1-x}-3) \text{ if $-3 \leq x \leq 1$,}\\
\infty \text{ otherwise.}
\end{cases}
\end{equation}
\begin{equation}
{\phi_+^*}'(x)=\begin{cases}
0 \text{ if $x < -3$,}\\
\frac{2}{\sqrt{1-x}}-1 \text{ if $-3 \leq x \leq 1$}\\
\end{cases}
\end{equation}
\begin{equation}
{\phi_+^*}''(x)=\begin{cases}
0 \text{ if $x < -3$,}\\
\frac{1}{(\sqrt{1-x})^3} \text{ if $-3 \leq x \leq 1$}\\
\end{cases}
\end{equation}
\begin{equation}
f_*(x)+C = \begin{cases}
\frac{\left(\frac{d\mu_c}{d\nu}(x)-1\right)\left(\frac{d\mu_c}{d\nu}(x)+3\right)}{\left(\frac{d\mu_c}{d\nu}(x)+1\right)^2} \text{ almost everywhere with respect to } \mu_c, \\
1 \text{ if } x \in \support(\mu_s)
\end{cases}
\end{equation}

\subsection{Experiments} \label{appendix_experiments}

\subsubsection{MY$f$-GAN on CIFAR-10}
The implementation was done in TensorFlow based on the official codebase of \citet{Adleretal2018}, with the critic and generator architectures being faithful reimplementations of the residual architecture from \citet{Gulrajanietal2017}. We used both the train and test parts of the CIFAR-10 dataset with randomly flipping images and adding uniform noise as augmentation. Minibatch size was $128$, which for the critic included $64$ real and $64$ generated samples. We used the ADAM optimizer with parameters $\beta_1=0,\beta_2=0.9$ and constant learning rate $2\times10^{-4}$, and trained the model for $100000$ iterations with $5$ gradient descent step per iteration for the critic, and $1$ for the generator. We monitored performance by evaluating the Inception Score at every $1000$ iteration during training on $10000$ generated samples, and once at the end of training on $100000$ generated samples, as well as the FID on $50000$ generated and $50000$ real samples at the end of training. We applied exponential moving averaging to the generator weights $\theta_g$ with coefficient $0.9999$. The Lambert W function implementation was based on \href{https://github.com/jackd/lambertw}{https://github.com/jackd/lambertw}. Trainings were done on GeForce 2080Ti GPUs running at $200$-$250$W and took around $12$ hours to complete, leading to an estimated $2.4$-$3$kWh power consumption. Computing the conjugates via the proposed algorithm introduced a computational overhead that induced $15\%$ longer training time at worst (for the Jeffreys divergence) compared to closed-form conjugates. Due to the large number of hyperparameters, we ran each setting once. Generated images can be seen in Figure~\ref{generated_images_kl}, Figure~\ref{generated_images_rkl}, Figure~\ref{generated_images_chi2}, Figure~\ref{generated_images_rchi2}, Figure~\ref{generated_images_hellinger2}, Figure~\ref{generated_images_js}, Figure~\ref{generated_images_jeffreys}, Figure~\ref{generated_images_triangular}, Figure~\ref{generated_images_tv} and Figure~\ref{generated_images_trivial}, with missing images denoting failed training such as discriminator collapse or numerical instabilities.

\begin{figure}[ht]
\vskip 0.2in
\begin{center}
\subfigure[$\rightarrow \beta=0$]{
\includegraphics[width=.3\columnwidth]{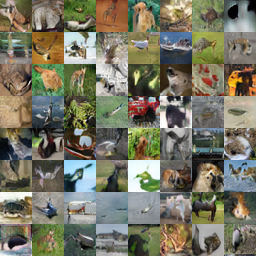}
}
\subfigure[$\rightarrow \alpha=1.05,\beta=1$]{
\includegraphics[width=.3\columnwidth]{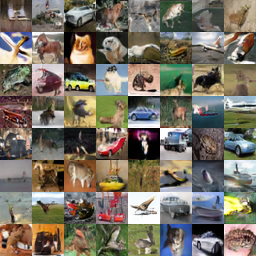}
}
\subfigure[$\rightarrow \alpha=2,\beta=1$]{
\includegraphics[width=.3\columnwidth]{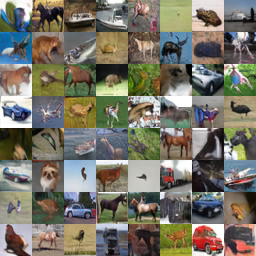}
}
\subfigure[$\leftarrow \alpha=1.05,\beta=1$]{
\includegraphics[width=.3\columnwidth]{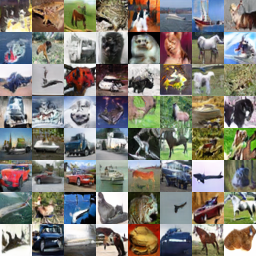}
}
\subfigure[$\leftarrow \alpha=2,\beta=1$]{
\includegraphics[width=.3\columnwidth]{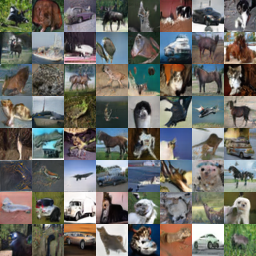}
}
\subfigure[$\leftarrow \alpha=\infty,\beta=0.5 \to 0.2$]{
\includegraphics[width=.3\columnwidth]{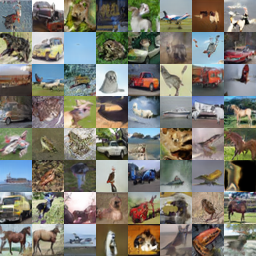}
}
\caption{MY$f$-GAN generated images with $D_\phi$ being the Kullback-Leibler divergence}
\label{generated_images_kl}
\end{center}
\vskip -0.2in
\end{figure}

\begin{figure}[ht]
\vskip 0.2in
\begin{center}
\subfigure[$\rightarrow \beta=0$]{
\includegraphics[width=.3\columnwidth]{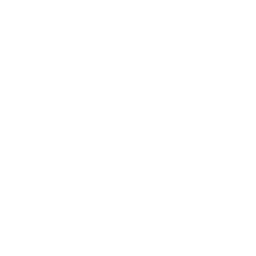}
}
\subfigure[$\rightarrow \alpha=1.05,\beta=1$]{
\includegraphics[width=.3\columnwidth]{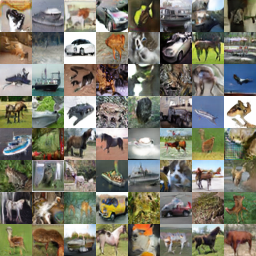}
}
\subfigure[$\rightarrow \alpha=2,\beta=1$]{
\includegraphics[width=.3\columnwidth]{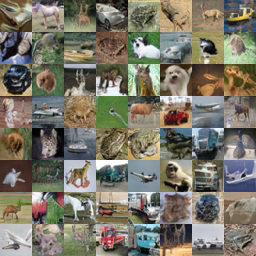}
}
\subfigure[$\leftarrow \alpha=1.05,\beta=1$]{
\includegraphics[width=.3\columnwidth]{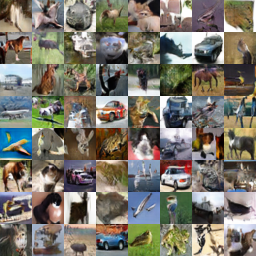}
}
\subfigure[$\leftarrow \alpha=2,\beta=1$]{
\includegraphics[width=.3\columnwidth]{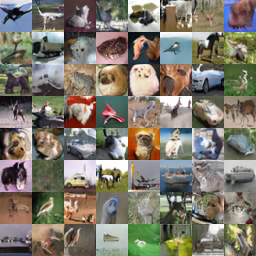}
}
\subfigure[$\leftarrow \alpha=\infty,\beta=0.5 \to 0.2$]{
\includegraphics[width=.3\columnwidth]{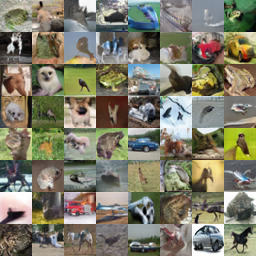}
}
\caption{MY$f$-GAN generated images with $D_\phi$ being the reverse Kullback-Leibler divergence}
\label{generated_images_rkl}
\end{center}
\vskip -0.2in
\end{figure}

\begin{figure}[ht]
\vskip 0.2in
\begin{center}
\subfigure[$\rightarrow \beta=0$]{
\includegraphics[width=.3\columnwidth]{img/missing}
}
\subfigure[$\rightarrow \alpha=1.05,\beta=1$]{
\includegraphics[width=.3\columnwidth]{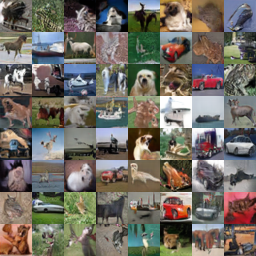}
}
\subfigure[$\rightarrow \alpha=2,\beta=1$]{
\includegraphics[width=.3\columnwidth]{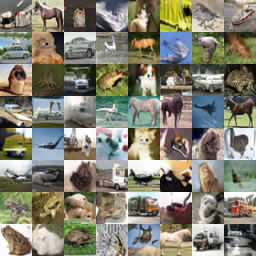}
}
\subfigure[$\leftarrow \alpha=1.05,\beta=1$]{
\includegraphics[width=.3\columnwidth]{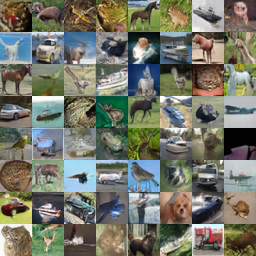}
}
\subfigure[$\leftarrow \alpha=2,\beta=1$]{
\includegraphics[width=.3\columnwidth]{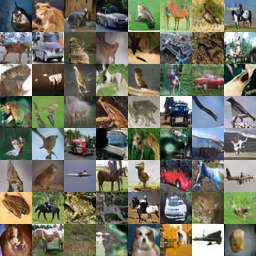}
}
\subfigure[$\leftarrow \alpha=\infty,\beta=0.5 \to 0.2$]{
\includegraphics[width=.3\columnwidth]{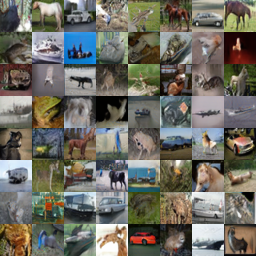}
}
\caption{MY$f$-GAN generated images with $D_\phi$ being the $\chi^2$ divergence}
\label{generated_images_chi2}
\end{center}
\vskip -0.2in
\end{figure}

\begin{figure}[ht]
\vskip 0.2in
\begin{center}
\subfigure[$\rightarrow \beta=0$]{
\includegraphics[width=.3\columnwidth]{img/missing}
}
\subfigure[$\rightarrow \alpha=1.05,\beta=1$]{
\includegraphics[width=.3\columnwidth]{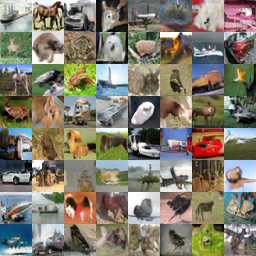}
}
\subfigure[$\rightarrow \alpha=2,\beta=1$]{
\includegraphics[width=.3\columnwidth]{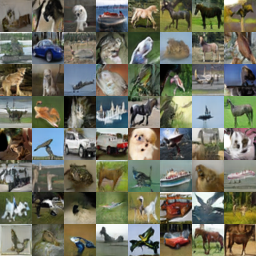}
}
\subfigure[$\leftarrow \alpha=1.05,\beta=1$]{
\includegraphics[width=.3\columnwidth]{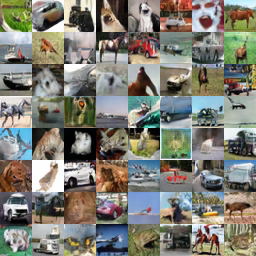}
}
\subfigure[$\leftarrow \alpha=2,\beta=1$]{
\includegraphics[width=.3\columnwidth]{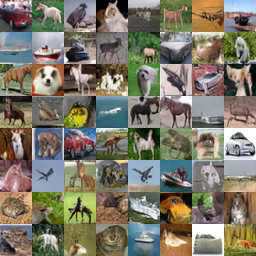}
}
\subfigure[$\leftarrow \alpha=\infty,\beta=0.5 \to 0.2$]{
\includegraphics[width=.3\columnwidth]{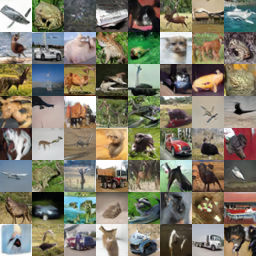}
}
\caption{MY$f$-GAN generated images with $D_\phi$ being the reverse $\chi^2$ divergence}
\label{generated_images_rchi2}
\end{center}
\vskip -0.2in
\end{figure}

\begin{figure}[ht]
\vskip 0.2in
\begin{center}
\subfigure[$\rightarrow \beta=0$]{
\includegraphics[width=.3\columnwidth]{img/missing}
}
\subfigure[$\rightarrow \alpha=1.05,\beta=1$]{
\includegraphics[width=.3\columnwidth]{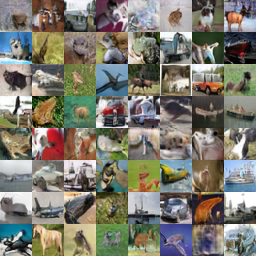}
}
\subfigure[$\rightarrow \alpha=2,\beta=1$]{
\includegraphics[width=.3\columnwidth]{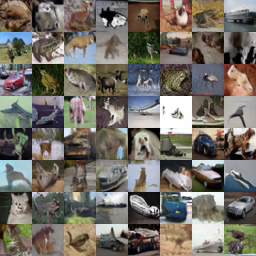}
}
\subfigure[$\leftarrow \alpha=1.05,\beta=1$]{
\includegraphics[width=.3\columnwidth]{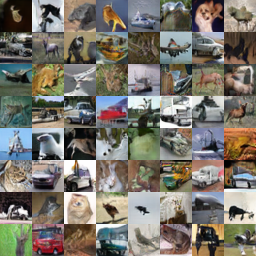}
}
\subfigure[$\leftarrow \alpha=2,\beta=1$]{
\includegraphics[width=.3\columnwidth]{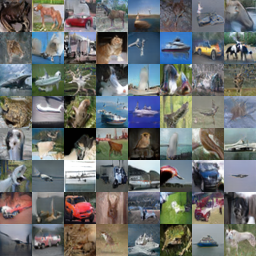}
}
\subfigure[$\leftarrow \alpha=\infty,\beta=0.5 \to 0.2$]{
\includegraphics[width=.3\columnwidth]{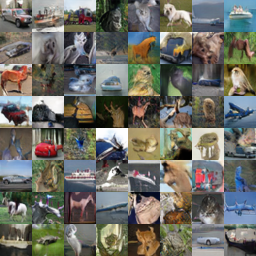}
}
\caption{MY$f$-GAN generated images with $D_\phi$ being the squared Hellinger divergence}
\label{generated_images_hellinger2}
\end{center}
\vskip -0.2in
\end{figure}

\begin{figure}[ht]
\vskip 0.2in
\begin{center}
\subfigure[$\rightarrow \beta=0$]{
\includegraphics[width=.3\columnwidth]{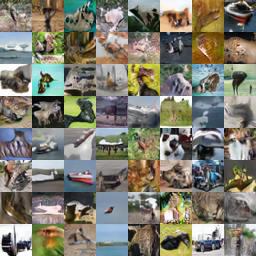}
}
\subfigure[$\rightarrow \alpha=1.05,\beta=1$]{
\includegraphics[width=.3\columnwidth]{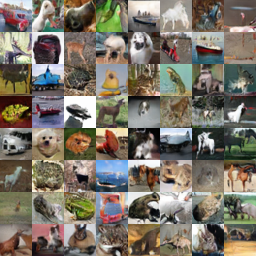}
}
\subfigure[$\rightarrow \alpha=2,\beta=1$]{
\includegraphics[width=.3\columnwidth]{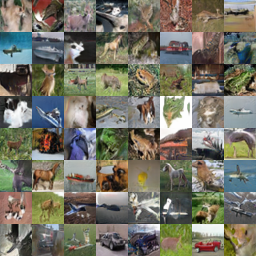}
}
\subfigure[$\leftarrow \alpha=1.05,\beta=1$]{
\includegraphics[width=.3\columnwidth]{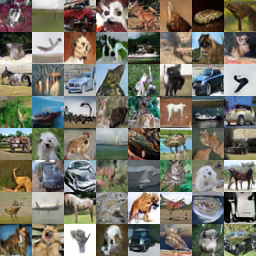}
}
\subfigure[$\leftarrow \alpha=2,\beta=1$]{
\includegraphics[width=.3\columnwidth]{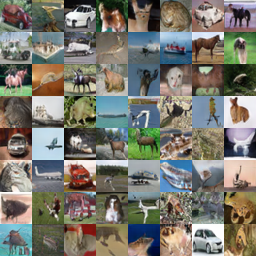}
}
\subfigure[$\leftarrow \alpha=\infty,\beta=0.5 \to 0.2$]{
\includegraphics[width=.3\columnwidth]{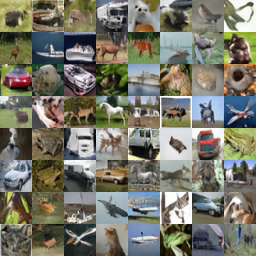}
}
\caption{MY$f$-GAN generated images with $D_\phi$ being the Jensen-Shannon divergence}
\label{generated_images_js}
\end{center}
\vskip -0.2in
\end{figure}

\begin{figure}[ht]
\vskip 0.2in
\begin{center}
\subfigure[$\rightarrow \beta=0$]{
\includegraphics[width=.3\columnwidth]{img/missing}
}
\subfigure[$\rightarrow \alpha=1.05,\beta=1$]{
\includegraphics[width=.3\columnwidth]{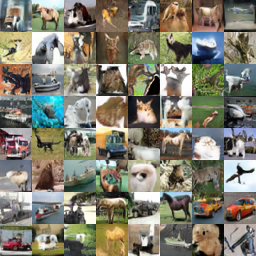}
}
\subfigure[$\rightarrow \alpha=2,\beta=1$]{
\includegraphics[width=.3\columnwidth]{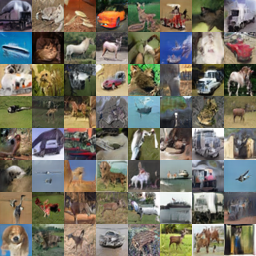}
}
\subfigure[$\leftarrow \alpha=1.05,\beta=1$]{
\includegraphics[width=.3\columnwidth]{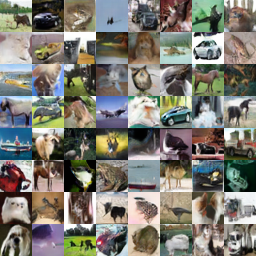}
}
\subfigure[$\leftarrow \alpha=2,\beta=1$]{
\includegraphics[width=.3\columnwidth]{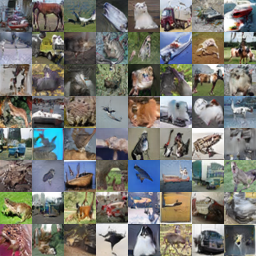}
}
\subfigure[$\leftarrow \alpha=\infty,\beta=0.5 \to 0.2$]{
\includegraphics[width=.3\columnwidth]{img/missing}
}
\caption{MY$f$-GAN generated images with $D_\phi$ being the Jeffreys divergence}
\label{generated_images_jeffreys}
\end{center}
\vskip -0.2in
\end{figure}

\begin{figure}[ht]
\vskip 0.2in
\begin{center}
\subfigure[$\rightarrow \beta=0$]{
\includegraphics[width=.3\columnwidth]{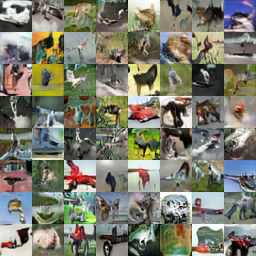}
}
\subfigure[$\rightarrow \alpha=1.05,\beta=1$]{
\includegraphics[width=.3\columnwidth]{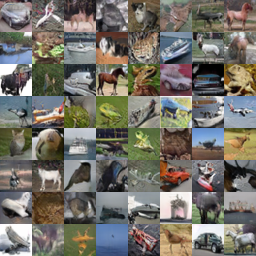}
}
\subfigure[$\rightarrow \alpha=2,\beta=1$]{
\includegraphics[width=.3\columnwidth]{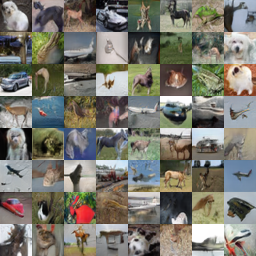}
}
\subfigure[$\leftarrow \alpha=1.05,\beta=1$]{
\includegraphics[width=.3\columnwidth]{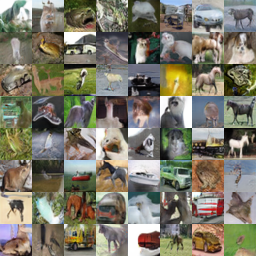}
}
\subfigure[$\leftarrow \alpha=2,\beta=1$]{
\includegraphics[width=.3\columnwidth]{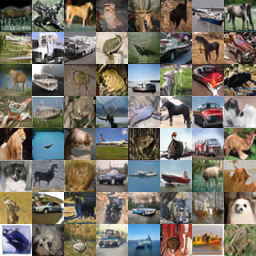}
}
\subfigure[$\leftarrow \alpha=\infty,\beta=0.5 \to 0.2$]{
\includegraphics[width=.3\columnwidth]{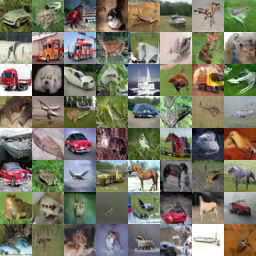}
}
\caption{MY$f$-GAN generated images with $D_\phi$ being the triangular discrimination divergence}
\label{generated_images_triangular}
\end{center}
\vskip -0.2in
\end{figure}

\begin{figure}[ht]
\vskip 0.2in
\begin{center}
\subfigure[$\rightarrow \beta=0$]{
\includegraphics[width=.3\columnwidth]{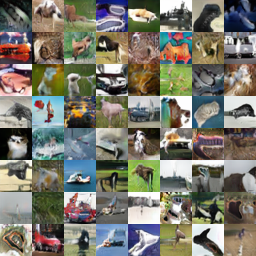}
}
\subfigure[$\rightarrow \alpha=1.05,\beta=1$]{
\includegraphics[width=.3\columnwidth]{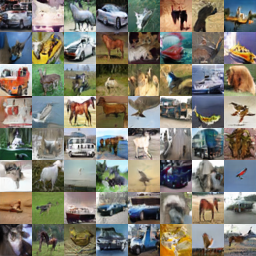}
}
\subfigure[$\rightarrow \alpha=2,\beta=1$]{
\includegraphics[width=.3\columnwidth]{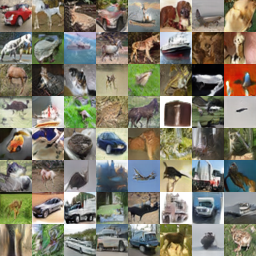}
}
\subfigure[$\leftarrow \alpha=1.05,\beta=1$]{
\includegraphics[width=.3\columnwidth]{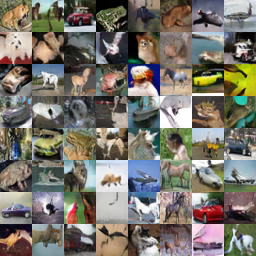}
}
\subfigure[$\leftarrow \alpha=2,\beta=1$]{
\includegraphics[width=.3\columnwidth]{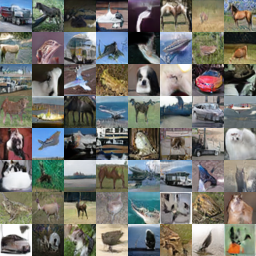}
}
\subfigure[$\leftarrow \alpha=\infty,\beta=0.5 \to 0.2$]{
\includegraphics[width=.3\columnwidth]{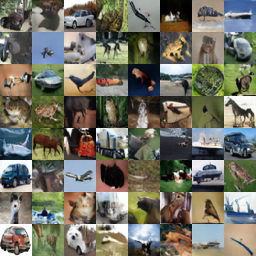}
}
\caption{MY$f$-GAN generated images with $D_\phi$ being the total variation divergence}
\label{generated_images_tv}
\end{center}
\vskip -0.2in
\end{figure}

\begin{figure}[ht]
\vskip 0.2in
\begin{center}
\subfigure[$\alpha=1.05,\beta=1$]{
\includegraphics[width=.3\columnwidth]{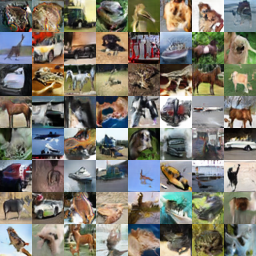}
}
\subfigure[$\alpha=2,\beta=1$]{
\includegraphics[width=.3\columnwidth]{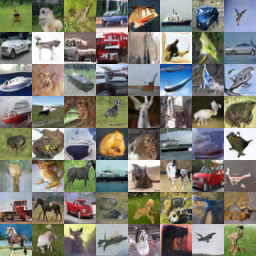}
}
\subfigure[$\alpha=\infty,\beta=0.5 \to 0.2$]{
\includegraphics[width=.3\columnwidth]{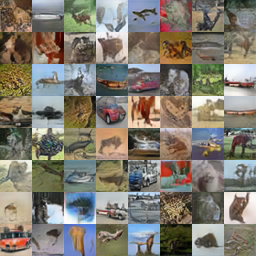}
}
\caption{MY$f$-GAN generated images with $D_\phi$ being the trivial divergence}
\label{generated_images_trivial}
\end{center}
\vskip -0.2in
\end{figure}

\subsubsection{1D Gaussian distributions}
For a pair $\mathcal{N}(\mu_1,\sigma_1), \mathcal{N}(\mu_2,\sigma_2)$ of 1-dimensional Gaussian distributions, the corresponding probability distribution functions are the Radon-Nikodym derivatives with respect to the Lebesgue measure, hence by the chain rule one has
\begin{equation}
\frac{d\mathcal{N}(\mu_1,\sigma_1)}{d\mathcal{N}(\mu_2,\sigma_2)}(x) = \frac{\sigma_2}{\sigma_1} e^{\frac{1}{2}\left(\left(\frac{x-\mu_2}{\sigma_2}\right)^2 - \left(\frac{x-\mu_1}{\sigma_1}\right)^2\right)},
\end{equation}
so that Csisz\'ar potentials can be calculated in closed form if $\phi$ is of Legendre type. We have implemented a toy example to demonstrate that the proposed algorithm for calculating the conjugates enables training neural networks to approximate $f$-divergences based on the tight variational representations in the sense that the trained neural network closely approximates the corresponding Csisz\'ar potential in the case of 1-dimensional Gaussian distributions. Results are visualized in Figure~\ref{gaussian_potentials}, showing the probability distribution functions of the two Gaussians, the exact Csisz\'ar potential and the output of the trained neural network. For the Jeffreys divergence, numerical instabilities prevented us from obtaining the desired result, so that only the exact Csisz\'ar potential is visualized. Close approximation of the exact Csisz\'ar potential is evident in areas of higher density. It must be emphasized that the neural network is not explicitly trained to approximate the Csisz\'ar potential, only implicitly, by maximizing the tight variational formula, so that this experiment could be seen as a validation of both the algorithm for computing the conjugates and of the characterization of Csisz\'ar potentials.

\begin{figure}[ht]
\vskip 0.2in
\begin{center}
\subfigure[Kullback-Leibler]{
\includegraphics[width=.3\columnwidth]{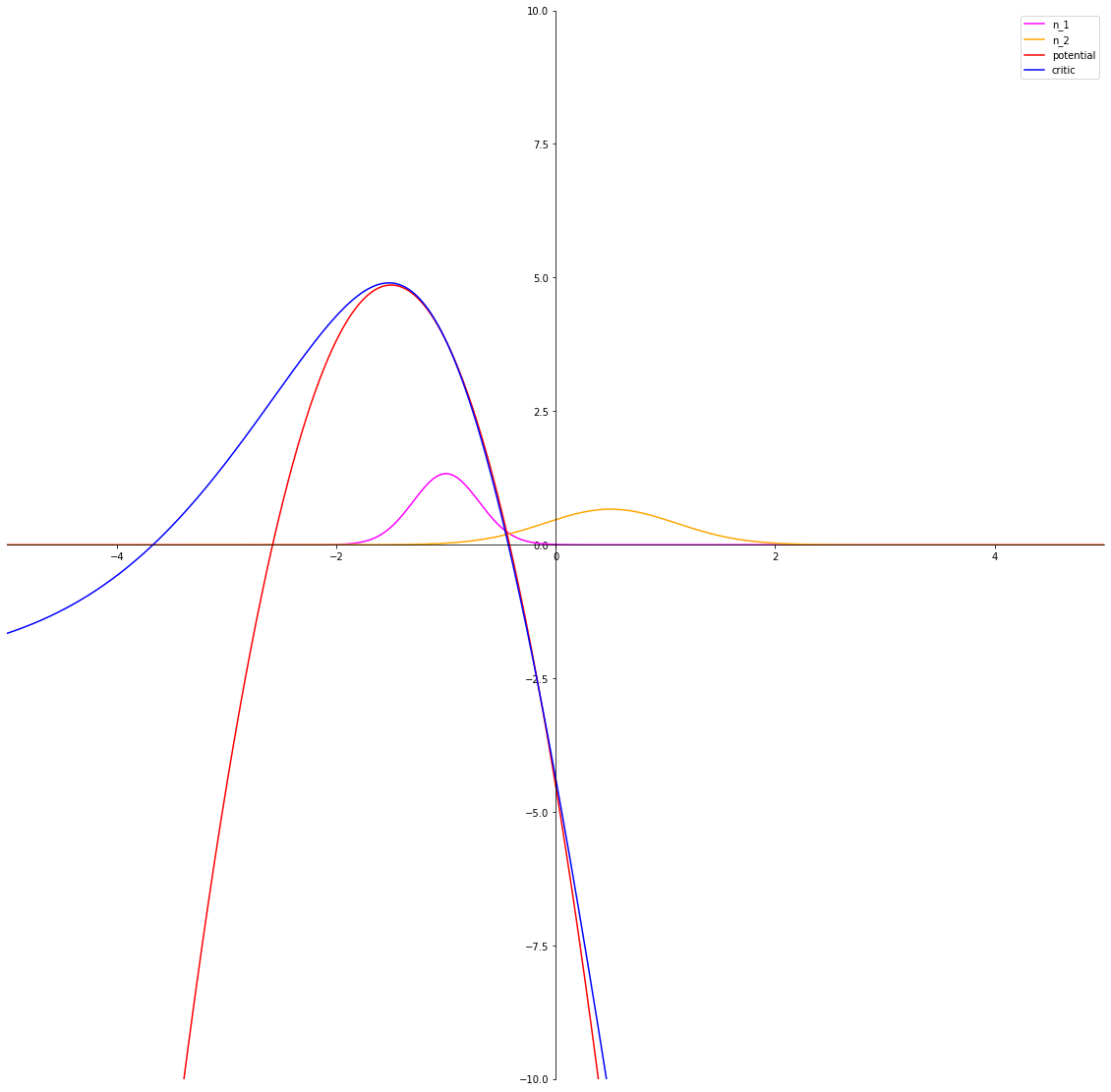}
}
\subfigure[Reverse Kullback-Leibler]{
\includegraphics[width=.3\columnwidth]{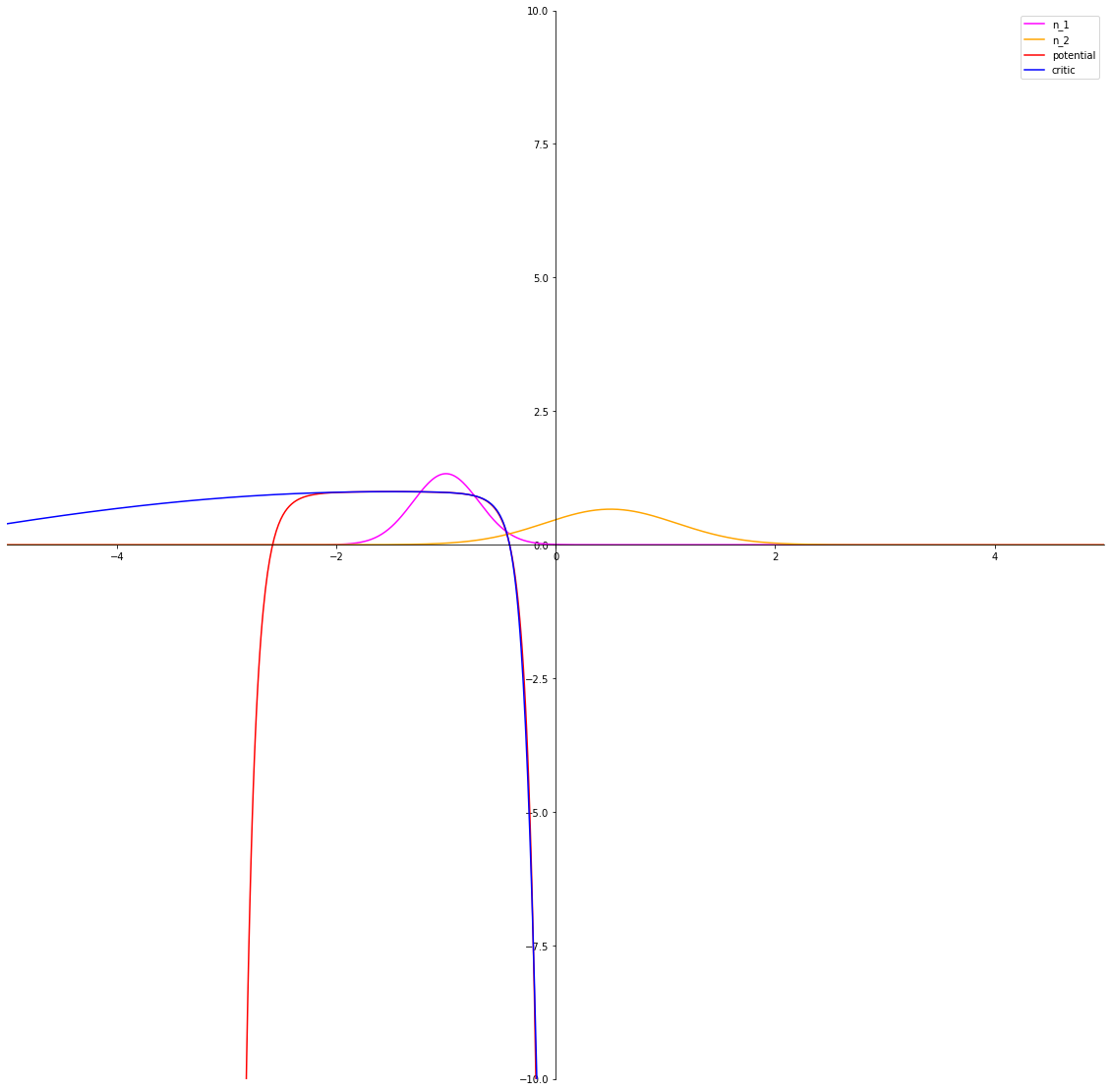}
}
\subfigure[$\chi^2$]{
\includegraphics[width=.3\columnwidth]{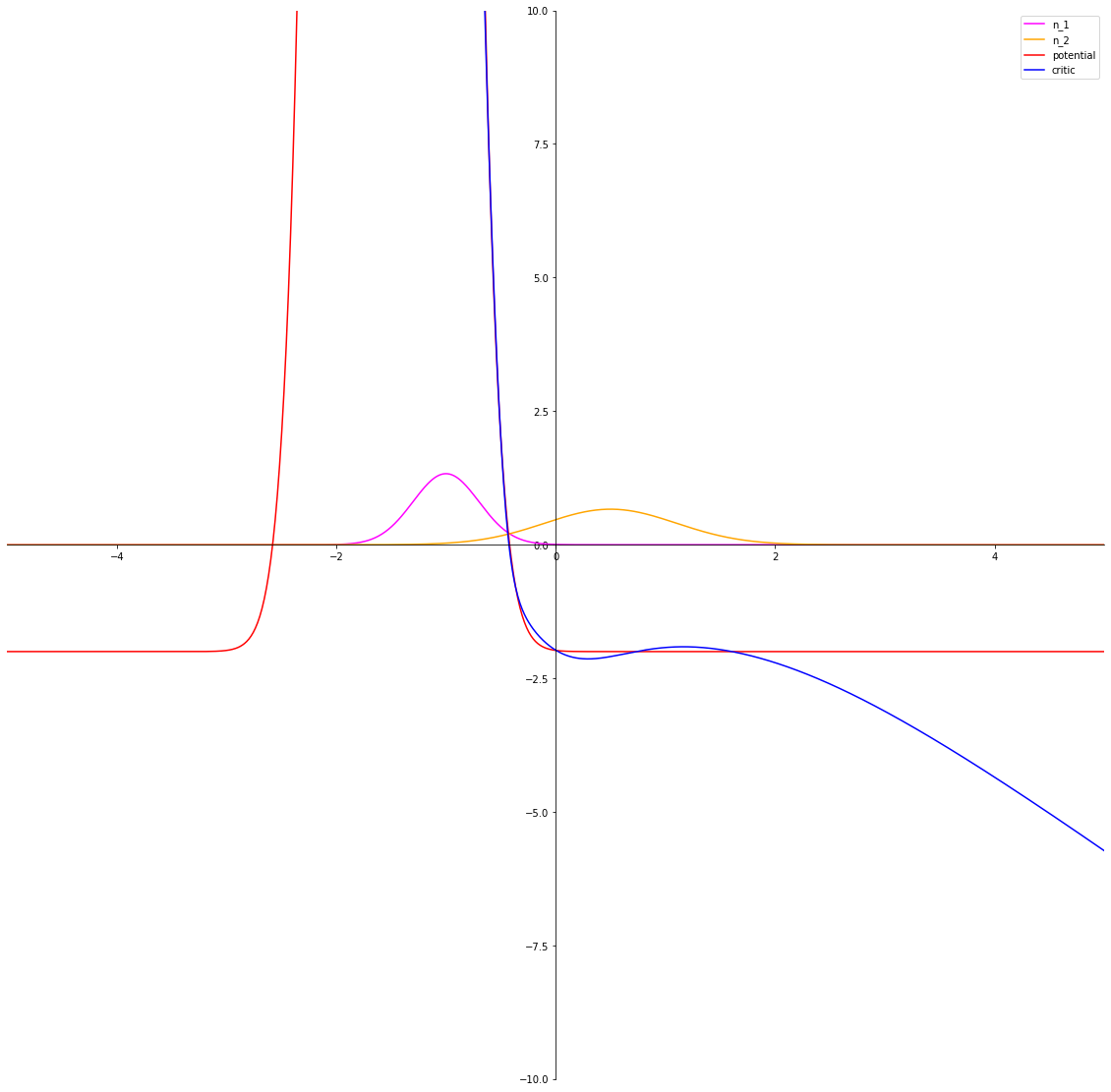}
}
\subfigure[Reverse $\chi^2$]{
\includegraphics[width=.3\columnwidth]{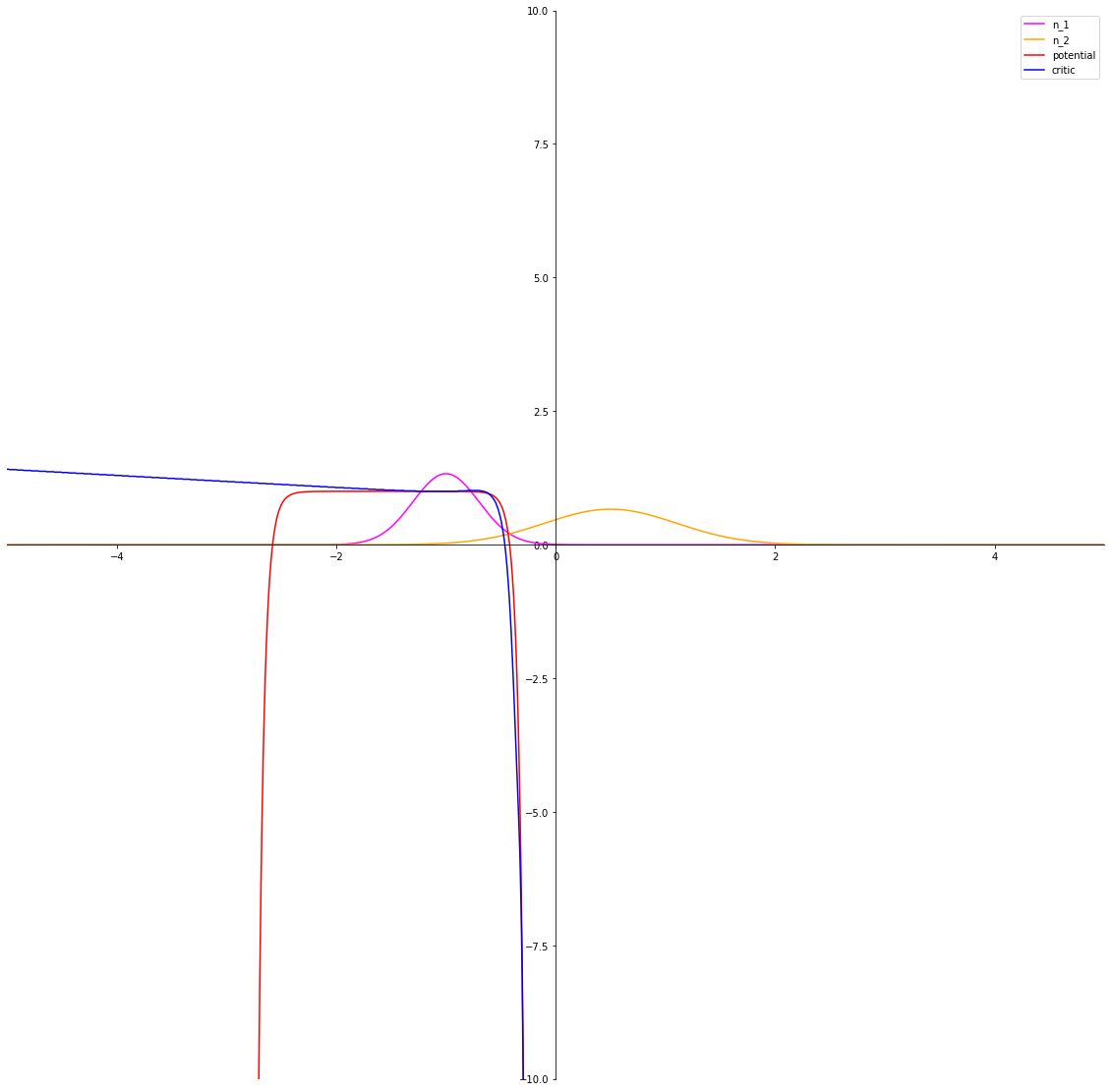}
}
\subfigure[Squared Hellinger]{
\includegraphics[width=.3\columnwidth]{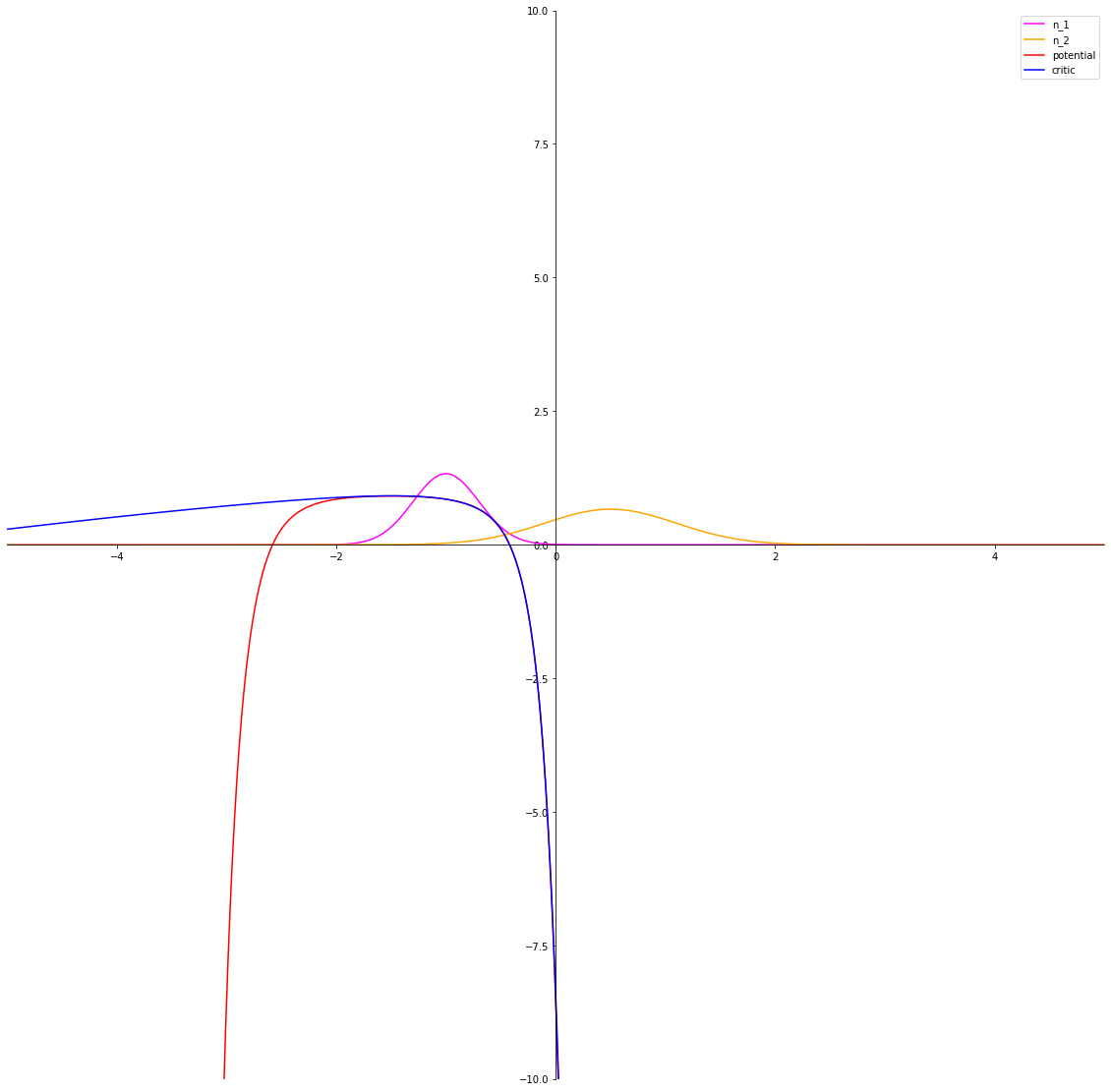}
}
\subfigure[Jensen-Shannon]{
\includegraphics[width=.3\columnwidth]{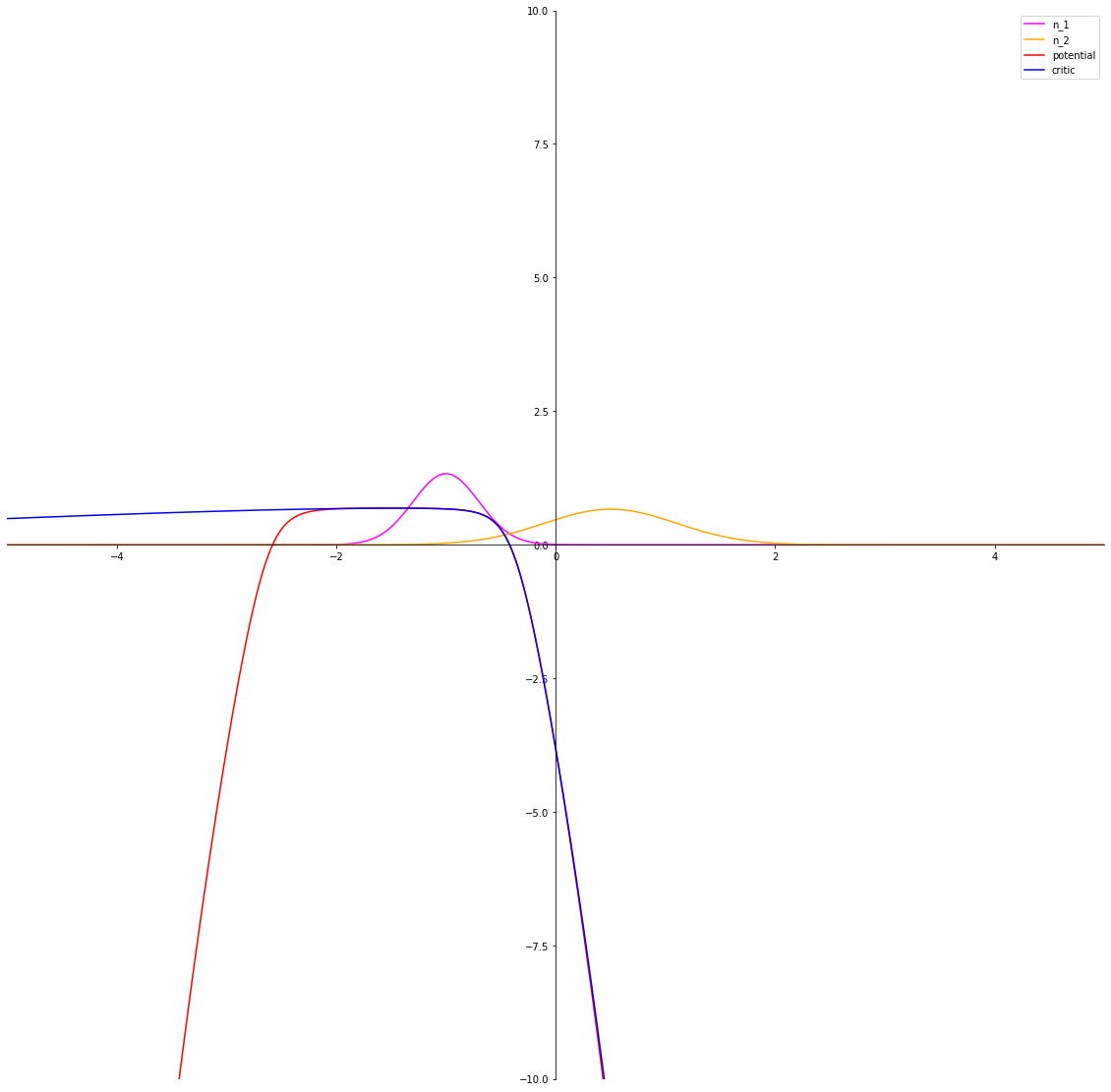}
}
\subfigure[Jeffreys]{
\includegraphics[width=.3\columnwidth]{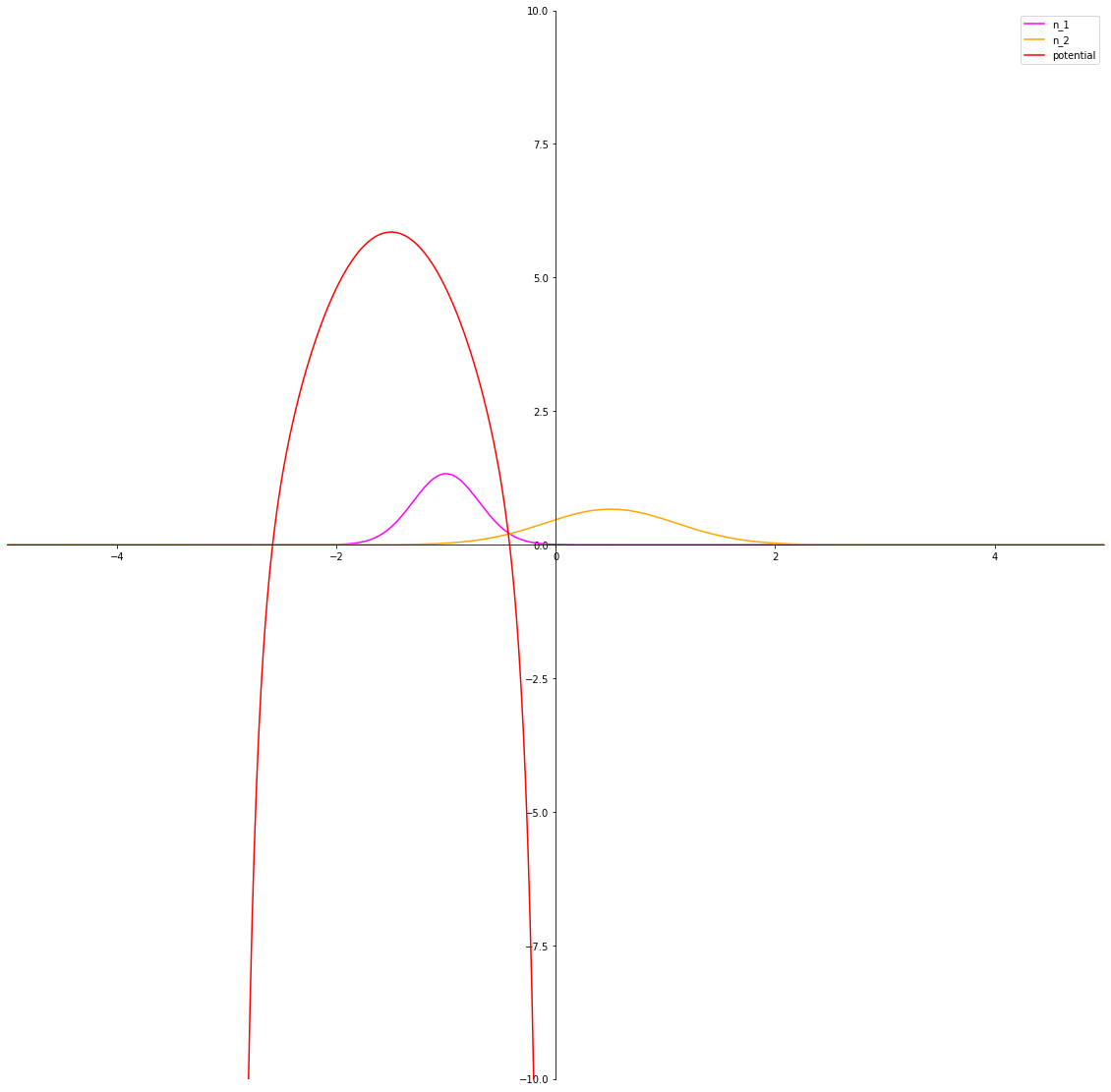}
}
\subfigure[Triangular discrimination]{
\includegraphics[width=.3\columnwidth]{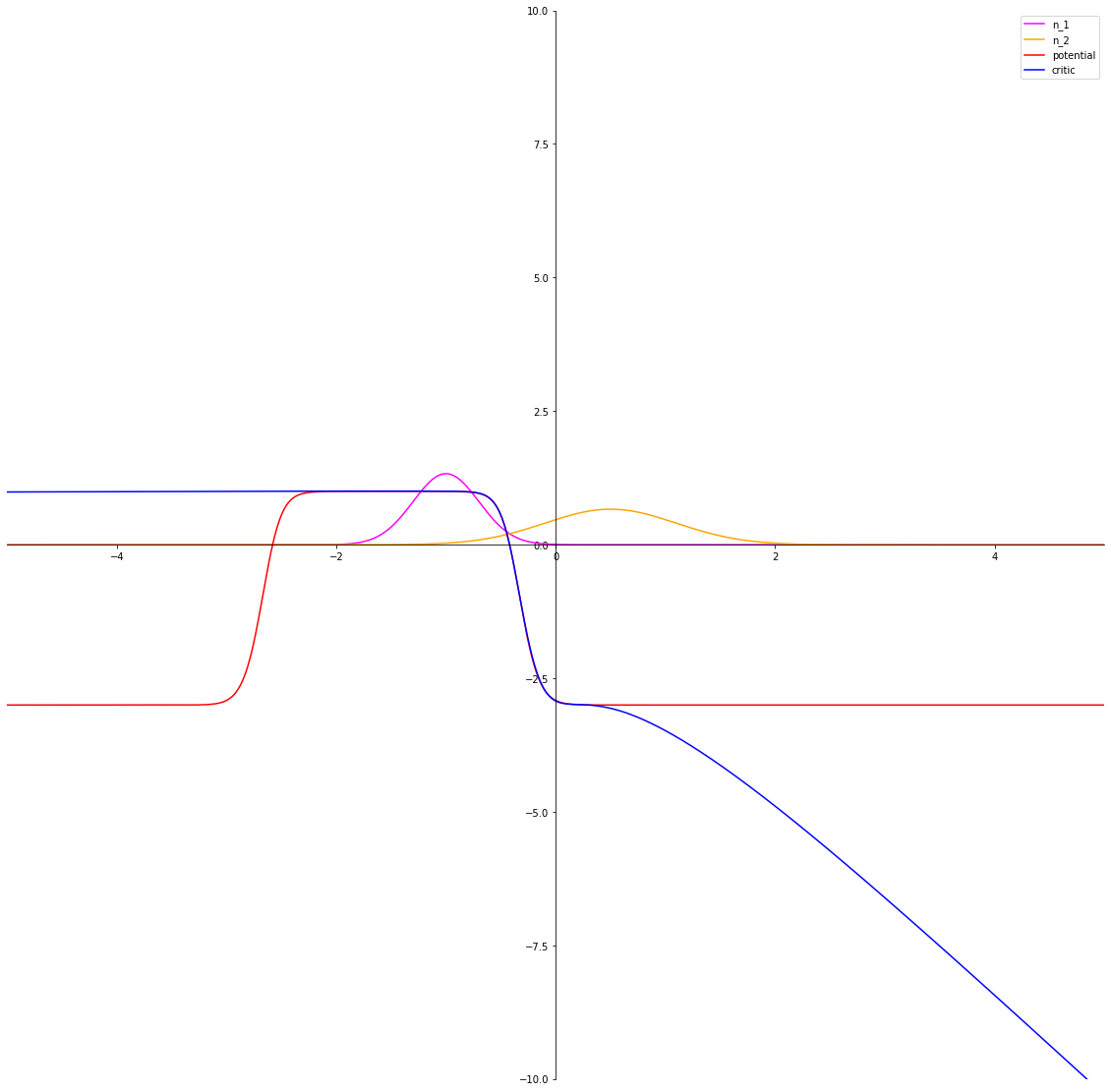}
}
\caption{Csisz\'ar potentials and trained critics of $\mathcal{N}(-1,0.3), \mathcal{N}(0.5,0.6)$}
\label{gaussian_potentials}
\end{center}
\vskip -0.2in
\end{figure}

\subsubsection{Categorical distributions}
For a pair $\mu,\nu$ of categorical distributions over an alphabet of size $n$, the potential $f$ can be considered an element of $\R^n$. We implemented a toy example to approximate $D_\phi$ in this case by optimizing an $f \in \R^n$ to maximize the formula inside the supremum in the tight variational representation, and found that the approximation accurately recovers the exact value of the divergence to at least $4$ decimals, except for the reverse $\chi^2$ divergence for which the approximation procedure is slightly less accurate. For the Kullback-Leibler and total variation divergences even though the conjugates are available in closed form, we implemented the proposed algorithm as well, and found that it works as well as the closed forms in this scenario. The generalized log-sum-exp trick is necessary in some cases to stabilize the algorithm.

\end{document}